\newtheorem{theorem}{Theorem}[subsection]
\newtheorem{proposition}[theorem]{Proposition}
\newtheorem{assumption}[theorem]{Assumption}
\newtheorem{remark}{Remark}
\title{Plug-and-Play Image Restoration with Flow Matching:\\
       A Continuous Viewpoint}
\author{Fan Jia}
\author{Yuhao Huang}
\author{Shih-Hsin Wang}
\author{Cristina Garcia-Cardona}
\author{Andrea L. Bertozzi}
\author{Bao Wang}
\address[Fan Jia, Yuhao Huang, Shih-Hsin Wang, Bao Wang]{Department of Mathematics, University of Utah, Salt Lake City, UT 84112, USA}
\email[Fan Jia]{fan.jia@utah.edu}
\email[Yuhao Huang]{u1430219@utah.edu}
\email[Shih-Hsin Wang]{u1371684@utah.edu}
\email[Bao Wang]{bwang@math.utah.edu}
\address[Cristina Garcia-Cardona]{Los Alamos National Laboratory, Los Alamos,  USA}
\email[Cristina Garcia-Cardona]{cgarciac@lanl.govv}
\address[Andrea L. Bertozzi]{Department of Mathematics, University of California, Los Angeles, USA}
\email[Andrea L. Bertozzi]{bertozzi@math.ucla.edu}
\subjclass[2020]{68T07, 68U10, 94A08, 65K10}
\keywords{image restoration, generative models, flow matching, plug-and-play methods, stochastic differential equations}
\begin{document}

\begin{abstract}
Flow matching-based generative models have been integrated into the plug-and-play (PnP) image restoration framework, yielding the PnP-Flow model that has achieved remarkable empirical success. However, its theoretical understanding remains limited. In this paper, we derive a continuous-limit stochastic differential equation (SDE) surrogate for PnP-Flow. This continuous viewpoint provides two key insights: (1) it enables rigorous error quantification, guiding improved step scheduling and Lipschitz regularization of the neural vector field; (2) it motivates an extrapolation-based acceleration strategy for off-the-shelf PnP-Flow models. Extensive experiments on image denoising, deblurring, super-resolution, and inpainting demonstrate that our SDE-informed enhancements significantly outperform baseline PnP-Flow and competing state-of-the-art methods across standard metrics.
\end{abstract}

\maketitle

\section{Introduction}
\label{sec:intro}
Image restoration seeks to reconstruct high-quality images suffering from 
degradations, such as noisy \cite{buades2005review,elad2023image,jia2021ddunet,pearl2022nan}, blurry \cite{zhang2020deblurring,zhang2022deep,quan2021gaussian}, low-resolution \cite{huang2020unfolding,fan2020neural}, enhancement \cite{xu2022snr,ma2022toward,jia2024variational}, and foreground occlusions \cite{bar2022visual,yeh2017semantic,kawar2022denoising}. The relationship between an unknown high-quality image ${\bm x}\in\mathbb{R}^n$ and a degraded observation $\boldsymbol{\omega}\in\mathbb{R}^m$ can be modeled as 
\begin{equation*}
    \boldsymbol{\omega} = \mathcal{A}{\bm x} + \boldsymbol{\nu},
\end{equation*}
where $\mathcal{A}: \mathbb{R}^n\to \mathbb{R}^m$ is a degradation operator that can be linear or nonlinear, and $\boldsymbol{\nu}$ represents the underlying additive noise. Traditional approaches often formulate this as an optimization problem, minimizing a well-designed objective function---that balances a data fidelity term $f(\cdot)$ and a regularization term $g(\cdot)$ reflecting image priors---as follows:
\begin{equation*}
   {\bm x}^* = \arg\min\limits_{{\bm x}} f({\bm x}) + g({\bm x}),
\end{equation*}
where $f({\bm x})$ is often defined to be $\frac{1}{2}\|\mathcal{A}{\bm x} - \boldsymbol{\omega}\|^2$ when $\boldsymbol{\nu}$ is a zero-mean Gaussian noise.

Crafting an effective regularizer for complex priors is challenging, and the plug-and-play (PnP) framework \cite{venkatakrishnan2013plug} addresses this by replacing the explicit regularizer \(g({\bm z})\) with an implicit prior---often a pre-trained neural network (NN) denoiser. PnP combines the flexibility of optimization methods with the expressivity of NN, achieving state-of-the-art (SOTA) performance across various tasks. Leveraging their impressive performance in image generation, generative models have been 
adopted within the PnP framework (cf.~\cite{wu2024principled,song2023loss,zhu2023denoising}). A noticeable generative model is based on flow matching (FM)~\cite{lipman2022flow}, which learns a vector field with associated probability density path interpolating between a prior and the data distributions, offering computational efficiency through deterministic ordinary differential equation (ODE) flows \cite{chen2023probability}. Integrating FM into PnP, termed PnP-Flow~\cite{martin2024pnp}, has shown promising empirical results.

While PnP methods with learned implicit priors (e.g., NNs or generative models) often outperform traditional approaches empirically, their theoretical analysis is complex. Implicit regularizers may not satisfy assumptions required by classical optimization tools, limiting insights into convergence or error bounds. To address this issue, by assuming NNs possess specific properties, such as being Lipschitz differentiable or convex, provably convergent PnP methods have been developed with rigorous convergence analysis \cite{tan2024provably,ryu2019plug,cohen2021regularization}. However, convergence and error bounds for PnP with generative models still remain largely unaddressed.

In this paper, we propose a continuous perspective on PnP-Flow by modeling it with a stochastic differential equation (SDE). The SDE formulation is not merely a mathematical reformulation---it provides fundamental insights that bridge the theoretical gap in PnP frameworks and enables principled design of guidance algorithms as well as training loss functions across generative models. This approach introduces novel analytical tools to characterize key parameters governing convergence rates and error bounds within the PnP framework.

\subsection{Our Contributions}
We aim to enhance our understanding of PnP-Flow and improve its performance and efficiency in a principled manner. 
Our key contributions are:

\begin{itemize}
    \item \textbf{Continuous Limit Derivation}: We show that the continuous-in-time (iteration) limit of PnP-Flow is an SDE; see Section~\ref{sec:SDE-limit}. 
    
    \item \textbf{Error Quantification}: Using the SDE surrogate model, we bound the image restoration error for PnP-Flow, highlighting the roles of the Lipschitz constant of the NN-parameterized vector field and step scheduling in error reduction; see Section~\ref{sec:Theory}.
        
    \item \textbf{Theoretically-Principled Improvement:} We propose 1) a new step scheduling for PnP-Flow, 2) an efficient Lipschitz regularization based on Hutchinson's method \cite{hutchinson1989stochastic}, 
    and 3) an extrapolated 
    iteration to enhance efficiency without retraining; see Section~\ref{sec:Algorithm}.
    
    \item \textbf{Empirical Validation}: We validate the advantages of our improved PnP-Flow using various benchmark tasks; see Section~\ref{sec:Experiments}. 
\end{itemize}

\subsection{Additional Related Works}

Using generative models as PnP denoisers within an iterative framework is an emerging research area. As one of the earliest works in this direction, \cite{bora2017compressed} demonstrates that generative models could act as powerful priors for compressed sensing. Subsequent developments extend this idea to generative adversarial network (GAN)-based approaches \cite{shah2018solving}, score-based generative methods \cite{kadkhodaie2021stochastic}, and more recently, diffusion models (DMs) tailored for inverse problem-solving \cite{chung2022come,chung2023diffusion}. Recent advances further propose PnP frameworks that alternate between likelihood and DM sampling for general inverse problems \cite{wu2024principled,xu2024provably}.

\subsection{Organization}

We structure this paper as follows: We review FM and PnP-Flow in Section~\ref{sec:Preliminaries}. We derive the SDE-limit of PnP-Flow in Section~\ref{sec:SDE-limit}. We analyze the PnP-Flow error and present improved PnP-Flow in Sections~\ref{sec:Theory} and \ref{sec:Algorithm}, respectively. We validate the efficacy and efficiency of the improved PnP-Flow, together with ablation studies, in Section~\ref{sec:Experiments}. Technical proofs and additional experimental details are provided in the appendix.

\section{Preliminaries}\label{sec:Preliminaries}

\subsection{Flow Matching with Straight-Line Paths}

FM (cf.~\cite{lipman2022flow,albergo2023building,liu2023flow}) learns a vector field that induces a probability density path interpolating between a prior distribution $q$ (e.g., the standard Gaussian $\mathcal{N}({\bf 0},{\bm I})$) and the data distribution $p$. Mathematically, for a given vector field ${\bm u}_t:[0,1]\times \mathbb{R}^d\to \mathbb{R}^d$, it defines a flow $\phi_t: [0,1]\times \mathbb{R}^n \to \mathbb{R}^n$ via the following ODE:
\begin{equation}\label{FM_ODE}
\frac{d}{dt}\phi_t({\bm x}) = {\bm u}_t(\phi_t({\bm x})), \ \ 
        \phi_0({\bm x}) = {\bm x}_0.
\end{equation}
The flow $\phi_t$ interpolates between the noise sample $\phi_0({\bm x})={\bm x}_0\sim q$ and the data sample $\phi_1({\bm x})={\bm x}_1\sim p$. Moreover, $\phi_t$ induces a probability path, defined as $p_t({\bm x})=p_0(\phi_t^{-1}({\bm x})){\rm det}[\frac{\partial\phi_t^{-1}({\bm x})}{\partial{\bm x}}],\forall {\bm x}\in p_0=q$, interpolates between the prior ($p_0=q$) and data ($p_1=p$) distributions. In practice, the vector field ${\bm u}_t$ is unavailable, which is only defined---in its conditional form ${\bm u}_t(\cdot|{\bm x}_1)$---for each training data ${\bm x}_1\sim p$. FM regresses a neural network $\tilde{{\bm u}}_t(\cdot,\theta)$---with $\theta$ being learnable parameters---against ${\bm u}_t(\cdot|{\bm x}_1)$ for each training data, resulting in an unbiased estimate for the unconditional vector field ${\bm u}_t$ \cite{lipman2022flow}; this process is named conditional flow matching (cf.~\cite{lipman2022flow}). The learned vector field $\tilde{{\bm u}}_t(\cdot,\theta)$ drives any noise sample to a generated realistic data. 

In \cite{liu2023flow,martin2024pnp}, the authors construct a straight-line path for FM as follows: for any 
${\bm x}_1\sim p$, we can define a conditional flow $\phi_t({\bm x}|{\bm x}_1)=(1-t){\bm x}+t{\bm x}_1$. 
The corresponding conditional vector field is ${\bm u}_t(\phi_t({\bm x})|{\bm x}_1)={\bm x}_1-{\bm x}$. While other path formulations for flow matching exist, straight-line paths give a concise form of $\phi_t({\bm x}|{\bm x}_1)$ and simplify training and implementation. For simplicity, we denote ${\bm u}_t({\bm x}_t)$ as in \cite{martin2024pnp}, where it represents ${\bm u}_t({\bm x}_t|{\bm x}_1)$. The definition and implementation of the optimal transport (OT) flow and ${\bm u}_t$ in our paper are consistent with those in \cite{martin2024pnp}.

\subsection{
PnP-Flow for Image Restoration}
Given a minimization problem
\begin{equation}\label{prob}
    \min_{{\bm x}} f({\bm x}) + g({\bm x}),
\end{equation}
the proximal gradient method solves \cref{prob} by the following iteration:
\begin{equation}\label{eq:prox}
    \begin{cases}
        {\bm z}_{k+1} = {\bm x}_k - \gamma_k \nabla f({\bm x}_k), \\
        {\bm x}_{k+1} = \operatorname{prox}_{\gamma_k g}({\bm z}_{k+1}),
    \end{cases}
\end{equation}
where the proximal operator is defined as:
\begin{equation*}
    \operatorname{prox}_{\gamma_k g}({\bm z}) = \arg\min_{{\bm x}} g({\bm x}) + \frac{1}{2\gamma_k} \|{\bm x} - {\bm z}\|_2^2.
\end{equation*}
As PnP methods generally replace the proximal operator with an off-the-shelf denoiser, PnP-Flow \cite{martin2024pnp} replaces the proximal operator in \cref{eq:prox} by an FM process, resulting in 
\begin{equation}\label{iter:PnPFM}
    \begin{cases}
        {\bm z}_{k} = {\bm x}_k - \gamma_k \nabla f({\bm x}_k), \\
        {\bm y}_k = (1 - l_k) \boldsymbol{\xi} + l_k {\bm z}_{k}, \quad \boldsymbol{\xi} \sim \mathcal{N}(0, \boldsymbol{I}), \\
        {\bm x}_{k+1} = D_{l_k}({\bm y}_k),
    \end{cases}
\end{equation}
where $l_k$ increases 
with $l_0=0$ and $\lim_{k \to \infty} l_k = 1$, and $D_{l_k} := \operatorname{Id} + (1 - l_k) \boldsymbol{u}_{l_k}$ with $\boldsymbol{u}_{l_k}$ being the vector field of the FM model.
The following remark provides rationale for 
\cref{iter:PnPFM}.
\begin{remark}
    Instead of solving \cref{FM_ODE}, \cite{martin2024pnp} suggests that the operator $D_t({\bm x}_t) = [\operatorname{Id}+(1-t){\bm u}_t]\circ {\bm x}_t$ with $\operatorname{Id}$ being the identity, can be interpreted as the best approximation of ${\bm x}_1$ given the knowledge of ${\bm x}_t$. Moreover, the second step in \cref{iter:PnPFM} projects ${\bm z}_k$ to the straight-line path.
\end{remark}

The convergence of the iteration process in \cref{iter:PnPFM} is guaranteed as follows:
\begin{proposition} \cite[Proposition 4]{martin2024pnp}\label{prop4}
Assume that $f: \mathbb{R}^n \to \mathbb{R}$ is continuously differentiable and the learned vector field $\tilde{{\bm u}}_t(\cdot,\theta)$ is continuous. Let $\{l_k\}_{k \in \mathbb{N}}$ satisfy $\sum\limits_{k=0}^{\infty}(1 - l_k) <  +\infty$ and $\gamma_k := 1 - l_k$. If
$\{{\bm x}_k\}_{k \in \mathbb{N}}$ obtained by \cref{iter:PnPFM} with $\tilde{{\bm u}}_t(\cdot,\theta)$ is bounded, then it converges.
\end{proposition}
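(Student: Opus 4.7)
The plan is to show that $\{{\bm x}_k\}$ is Cauchy (and hence convergent in $\mathbb{R}^n$ by completeness) along almost every sample path, by bounding $\|{\bm x}_{k+1}-{\bm x}_k\|$ by a summable majorant. I would begin by substituting the three substeps of \cref{iter:PnPFM}, using $\gamma_k=1-l_k$ and $D_{l_k}({\bm y}_k) = {\bm y}_k + (1-l_k)\tilde{{\bm u}}_{l_k}({\bm y}_k,\theta)$, to obtain the one-step displacement (writing $\boldsymbol{\xi}_k$ for the Gaussian draw at iteration $k$)
\begin{equation*}
{\bm x}_{k+1}-{\bm x}_k \;=\; (1-l_k)\Bigl[-{\bm x}_k+\boldsymbol{\xi}_k - l_k\nabla f({\bm x}_k) + \tilde{{\bm u}}_{l_k}({\bm y}_k,\theta)\Bigr].
\end{equation*}
Because $\sum_k(1-l_k)<\infty$, it is enough to bound each of the four bracket terms by an almost-surely finite quantity; then $\sum_k\|{\bm x}_{k+1}-{\bm x}_k\|<\infty$ a.s.\ and $\{{\bm x}_k\}$ is Cauchy.

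Three of the four bracket terms are routine. Boundedness of $\{{\bm x}_k\}$ in a set $K\subset\mathbb{R}^n$ handles $-{\bm x}_k$ immediately, and continuity of $\nabla f$ gives a uniform bound on $\nabla f({\bm x}_k)$ over the compact closure of $K$. For the stochastic term, $\mathbb{E}[(1-l_k)\|\boldsymbol{\xi}_k\|] = c\,(1-l_k)$, so monotone convergence yields $\sum_k(1-l_k)\|\boldsymbol{\xi}_k\|<\infty$ almost surely; in particular $(1-l_k)\boldsymbol{\xi}_k\to 0$ a.s. The nontrivial term is $\tilde{{\bm u}}_{l_k}({\bm y}_k,\theta)$, which is only assumed continuous and may a priori grow with $\|{\bm y}_k\|$. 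The key observation is that
\begin{equation*}
{\bm y}_k = l_k{\bm x}_k - l_k(1-l_k)\nabla f({\bm x}_k) + (1-l_k)\boldsymbol{\xi}_k,
\end{equation*}
where the first two summands are bounded by the previous step and the third vanishes almost surely; hence along almost every sample path there exists a random index $K_\omega$ and a bounded set $K'$ with ${\bm y}_k\in K'$ for all $k\ge K_\omega$. Continuity of $\tilde{{\bm u}}_t(\cdot,\theta)$ on the compact cylinder $[0,1]\times\overline{K'}$ then gives a uniform bound on $\tilde{{\bm u}}_{l_k}({\bm y}_k,\theta)$ from $K_\omega$ onward, so the tail contribution of this term is again dominated by $\sum_k(1-l_k)$.

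Combining the four estimates yields $\sum_k\|{\bm x}_{k+1}-{\bm x}_k\|<\infty$ almost surely, so $\{{\bm x}_k\}$ is Cauchy and converges in $\mathbb{R}^n$ path-wise. The main obstacle, as anticipated, is controlling the neural vector field $\tilde{{\bm u}}_{l_k}({\bm y}_k,\theta)$: the hypothesis supplies only pointwise continuity, with no uniform growth control, while ${\bm y}_k$ itself depends on the unbounded Gaussian driver. The resolution rests on the structural coincidence that the same factor $(1-l_k)$ which multiplies $\tilde{{\bm u}}_{l_k}$ in the increment also weights the noise inside ${\bm y}_k$, so the step-size summability assumption simultaneously tames the noise feeding into ${\bm y}_k$ and the outer prefactor, letting continuity of $\tilde{{\bm u}}_t$ suffice on a compact eventual domain.
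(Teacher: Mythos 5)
This proposition is cited from \cite{martin2024pnp} and not reproved in the present paper, but its nearest in-paper analogue is the proof of Proposition~\ref{prop:con} for the extrapolated iteration, which uses the same telescoping strategy: write ${\bm x}_{k+1}-{\bm x}_k$ as $(1-l_k)$ times a bracketed quantity, bound the bracket, sum the geometric majorant, and conclude that $\{{\bm x}_k\}$ is Cauchy. Your argument follows that template and your algebraic reduction of the increment is correct.

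Where you diverge, and improve on the analogous in-paper argument, is in the treatment of the two a priori unbounded quantities. The proof of Proposition~\ref{prop:con} asserts that $\boldsymbol{\xi}$ and ${\bm u}_{l_k}({\bm y}_k)$ are ``bounded by the assumption'': for $\boldsymbol{\xi}$ this is only literally true if the Gaussian is drawn once and reused (and is otherwise a gap, since a resampled Gaussian is almost surely unbounded in $k$), and for the vector field it relies on an explicit boundedness hypothesis on $\tilde{{\bm u}}$ that Proposition~\ref{prop4} does \emph{not} make. Your Tonelli argument $\mathbb{E}\bigl[\sum_k(1-l_k)\|\boldsymbol{\xi}_k\|\bigr]<\infty$ handles the resampled-noise case cleanly, and your compactness argument --- showing ${\bm y}_k$ eventually lies in a fixed bounded set path-wise, so that joint continuity of $\tilde{{\bm u}}_t(\cdot,\theta)$ on a compact cylinder gives the needed bound --- is exactly what is required when only continuity of the vector field is assumed. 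The one thing worth stating explicitly is that you are invoking continuity of $\tilde{{\bm u}}$ jointly in $(t,{\bm x})$, which is the natural reading of the hypothesis but is not spelled out; your observation that the same decaying factor $(1-l_k)$ simultaneously tames the noise entering ${\bm y}_k$ and the outer increment is precisely why continuity alone can suffice.
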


\section{The Continuous Limit of PnP-Flow}\label{sec:SDE-limit}
As the first contribution, we derive an SDE model for \cref{iter:PnPFM}. 
Plugging $D_{l_k} := \operatorname{Id} + (1 - l_k){\bm u}_{l_k}$ into the last equation of \cref{iter:PnPFM}, we have
\begin{equation}\label{eq:xk-update}
    \begin{aligned}
        {\bm x}_{k+1} & =[\operatorname{Id} + (1 - l_k){\bm u}_{l_k}]\circ ({\bm y}_k) \\
        & = (1-l_k)\boldsymbol{\xi} + l_k{\bm z}_k +(1-l_k){\bm u}_{l_k}({\bm y}_k) \\
        & = (1-l_k)\boldsymbol{\xi} + l_k({\bm x}_k - \gamma_k \nabla f({\bm x}_k)) + (1-l_k){\bm u}_{l_k}({\bm y}_k). 
    \end{aligned}  
\end{equation}
Applying Taylor expansion to ${\bm u}_{l_k}$ with respect to ${\bm y}_k$ at point ${\bm x}_k$ gives
\begin{equation}\label{eq:taylor_u}
    \begin{aligned}
        {\bm u}_{l_k}({\bm y}_k) & = {\bm u}_{l_k}({\bm x}_k) + ({\bm y}_k -{\bm x}_k)\nabla {\bm u}_{l_k}({\bm x}_k) + o({\bm y}_k -{\bm x}_k).
   \end{aligned}
\end{equation}
Moreover, by the first two equations in \cref{iter:PnPFM}, we have
\begin{equation*}
    \begin{aligned}
        {\bm y}_k -{\bm x}_k & =  (1 - l_k) \boldsymbol{\xi} + l_k {\bm z}_{k} -{\bm x}_k\\
        & = (1-l_k)(\boldsymbol{\xi}-{\bm x}_k)-l_k\gamma_k\nabla f({\bm x}_k),
    \end{aligned}
\end{equation*}
let $\beta_k=\frac{l_k\gamma_k}{1-l_k}$, we have
\begin{equation}\label{eq:y-x}
     {\bm y}_k -{\bm x}_k = (1-l_k)(\boldsymbol{\xi}-{\bm x}_k-\beta_k\nabla f({\bm x}_k)).
\end{equation}
Combining \cref{eq:xk-update,eq:taylor_u,eq:y-x}, we have 
\begin{equation}\label{eq:discrete-iter}
\begin{aligned}
    {\bm x}_{k+1} - {\bm x}_k &= (1-l_k)(\boldsymbol{\xi}-{\bm x}_k - \beta_k\nabla f({\bm x}_k) +{\bm u}_{l_k}({\bm x}_k)) + (1-l_k)O({\bm y}_k-{\bm x}_k)  \\
    &= (1-l_k)(\boldsymbol{\xi}-{\bm x}_k - \beta_k\nabla f({\bm x}_k) +{\bm u}_{l_k}({\bm x}_k)) + o(1-l_k),
\end{aligned}
\end{equation}
where the second equality comes from the fact that $(1 - l_k)({\bm y}_k - {\bm x}_k) = O((1-l_k)^2) = o(1 - l_k)$---multiplying both sides of \cref{eq:y-x} by $1-l_k$ and choose $\beta_k = O(1 - l_k)$---as $1 - l_k \to 0$.

To derive a continuous-time limit for \cref{eq:discrete-iter}, we introduce the \textit{ansatz} $ {\bm x}_k \approx X(t)$, where $X(t)$ is a smooth curve defined for $ t \geq 0 $. Let $\Delta t:=1-l_k$ representing the stepsize for the $k$th iteration, then $X(t + \Delta t) \approx {\bm x}_{k+1}$. Therefore, as $1-l_k\to 0$, \cref{eq:discrete-iter} becomes
\begin{equation}\label{delta_t}
X(t+\Delta t) - X(t) = \Delta t (-X(t) - \beta(t) \nabla f(X(t)) + {\bm u}_t(X(t)))+ \sigma(t) (W(t+\Delta t)-W(t)),
\end{equation}
where $W(t)$ is a standard Wiener process, $\beta(t)$ is a time-dependent coefficient corresponding to $\beta_k$, and ${\bm u}_t(X(t))$ is a time-dependent vector field corresponding to ${\bm u}_{l_k}(X(t))$, $\sigma(t)$ is the continuous analogy of $\sqrt{1 - l_k}$ arising from the Euler-Maruyama discretization of the SDE \cite{kloeden1992stochastic}. For simplicity, we denote $X(t)$ by $X_t$ in the rest of this paper.

Notice that iterations \cref{iter:PnPFM} converge when $\sum_{k=0}^\infty (1 - l_k) <  +\infty$. This condition ensures that the cumulative step size $t = \sum_{i=0}^k (1 - l_i)$ converges to a finite limit $T$. We summarize the above derivation in the following proposition.

\begin{proposition}\label{prop:SDE-PnP-FM}
Let $X(t_0) = {\bm x}_K$ for some iteration number $K$, \cref{iter:PnPFM} can be viewed as a discrete version of the following SDE for $0<t_0\leq t\leq T$: 
\begin{equation}\label{SDE}
d X_t = {\bm b}_t(X_t)dt + \sigma(t) dW_t,
\end{equation}
where ${\bm b}_t(X_t)=-X_t - \beta(t) \nabla f(X_t) + {\bm u}_t(X_t)$, ${\bm u}_t(X_t)$ is a time-dependent vector field associated to ${\bm u}_{l_k}$, and $\beta(t), \sigma(t)\in [0,1)$  are monotonically non-increasing in $t$, and 
go to zeros.
\end{proposition}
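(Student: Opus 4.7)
The plan is to organize the explicit Taylor-expansion computation already presented in \cref{eq:xk-update}--\cref{delta_t} into three clean steps: (i) expand the third PnP-Flow update to first order in $1-l_k$, (ii) recognize the resulting one-step recursion as an Euler--Maruyama step with time increment $\Delta t = 1-l_k$, and (iii) pass to the continuous-time SDE, verifying the structural conditions on the drift and diffusion coefficients claimed in the statement.

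For the expansion step, I substitute $D_{l_k}=\operatorname{Id}+(1-l_k){\bm u}_{l_k}$ and the expression for ${\bm y}_k$ into the iteration to reproduce \cref{eq:xk-update}. A first-order Taylor expansion of ${\bm u}_{l_k}$ at ${\bm x}_k$ combined with the identity ${\bm y}_k-{\bm x}_k=(1-l_k)(\boldsymbol{\xi}-{\bm x}_k-\beta_k\nabla f({\bm x}_k))$ shows that the Jacobian correction $(1-l_k)\nabla{\bm u}_{l_k}({\bm x}_k)({\bm y}_k-{\bm x}_k)$ is $O((1-l_k)^2)=o(1-l_k)$. Collecting terms yields the one-step increment
\[
{\bm x}_{k+1}-{\bm x}_k=(1-l_k)\bigl(-{\bm x}_k-\beta_k\nabla f({\bm x}_k)+{\bm u}_{l_k}({\bm x}_k)\bigr)+(1-l_k)\boldsymbol{\xi}+o(1-l_k),
\]
which cleanly isolates the deterministic drift from the Gaussian noise term.

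Next, I set $\Delta t := 1-l_k$, posit a smooth curve $X_t$ with ${\bm x}_k\approx X_t$ and ${\bm x}_{k+1}\approx X_{t+\Delta t}$, and match the deterministic piece to ${\bm b}_t(X_t)\Delta t$ with ${\bm b}_t(X_t)=-X_t-\beta(t)\nabla f(X_t)+{\bm u}_t(X_t)$. The key identification is the noise scaling: since $\boldsymbol{\xi}\sim\mathcal{N}(\mathbf{0},\boldsymbol{I})$ and a standard Brownian increment satisfies $W(t+\Delta t)-W(t)\stackrel{d}{=}\sqrt{\Delta t}\,\boldsymbol{\xi}$, one can write $(1-l_k)\boldsymbol{\xi}=\sqrt{1-l_k}\cdot\sqrt{1-l_k}\,\boldsymbol{\xi}\stackrel{d}{=}\sqrt{\Delta t}\,(W(t+\Delta t)-W(t))$, matching $\sigma(t)(W(t+\Delta t)-W(t))$ with $\sigma(t)=\sqrt{1-l_k}$. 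The hypothesis $\sum_{k}(1-l_k)<+\infty$ from \cref{prop4} ensures that the cumulative time $t_k=\sum_{i=0}^k(1-l_i)$ converges to a finite $T$, so the limiting SDE is posed on a bounded interval $[t_0,T]$; since $l_k\uparrow 1$, both $\beta(t)$ (with $\beta_k=O(1-l_k)$) and $\sigma(t)=\sqrt{1-l_k}$ are monotonically non-increasing and decay to zero, as claimed.

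The principal obstacle is the noise-scale reconciliation: the discrete recursion carries noise as $(1-l_k)\boldsymbol{\xi}$, whereas an SDE increment scales like $\sqrt{\Delta t}$. The resolution forces $\sigma(t)=\sqrt{1-l_k}$ and hence $\sigma(t)\to 0$ along with the step size, giving the limiting SDE a vanishing diffusion coefficient---an unusual but structurally important feature of PnP-Flow that will later drive the error analysis. A secondary caveat is that the argument is a formal continuum-limit derivation, consistent in rigor with the surrounding exposition and with \cite{martin2024pnp}; a fully rigorous weak-convergence statement would additionally require uniform Lipschitz control of ${\bm u}_t$ and a careful time-rescaling argument to accommodate the non-uniform increments $\Delta t_k=1-l_k$, which I would flag but not pursue at this level of detail.
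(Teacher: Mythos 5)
Your proposal reproduces the paper's derivation step by step: expand $D_{l_k}$, Taylor-expand ${\bm u}_{l_k}$ about ${\bm x}_k$, absorb the Jacobian correction into $o(1-l_k)$, then pass to the ansatz $X_t$ with $\Delta t = 1-l_k$. The one place you go beyond the paper's exposition is the noise-scaling step: the paper simply asserts that $\sigma(t)$ is ``the continuous analogy of $\sqrt{1-l_k}$'' from the Euler--Maruyama discretization, while you spell out the reconciliation explicitly, writing $(1-l_k)\boldsymbol{\xi}=\sqrt{1-l_k}\cdot\sqrt{1-l_k}\,\boldsymbol{\xi}\stackrel{d}{=}\sqrt{\Delta t}\,(W(t+\Delta t)-W(t))$ and thereby justifying $\sigma(t)=\sqrt{1-l_k}$ from the requirement that a Brownian increment over $\Delta t$ has standard deviation $\sqrt{\Delta t}$. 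This is a genuine clarification of a point the paper glosses over, and your observation that the resulting SDE has a vanishing diffusion coefficient is exactly the structural feature that later matters in the error and convergence analysis. Your concluding caveat about the formal nature of the continuum limit is also appropriate and matches the paper's level of rigor. In short: same approach, but you make the $\sigma$-identification airtight where the paper is merely suggestive.
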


\begin{remark}
As the iteration number $k$ increases, $1-l_k \to 0$, but it remains insufficiently small in early iterations to ensure an accurate continuous approximation of the discrete process. Consequently, the iterative update in \cref{iter:PnPFM} can be approximated by an SDE with an initial condition at some $t\geq t_0 > 0$, given by $X(t_0) = {\bm x}_K$. The SDE in \cref{SDE} accounts for the non-negligible step size in early iterations by starting the SDE at a positive time $t_0$.
\end{remark}

\section{Error and Convergence Analysis of PnP-Flow}\label{sec:Theory}
\subsection{Error Estimation}\label{sec:4.1}
In image restoration applications, the ground truth vector field satisfying the ODE \cref{FM_ODE} is only defined for each training data in a conditional fashion. The unknown marginal vector field ${\bm u}_t(\cdot)$ is approximated by a NN-parameterized vector field $\tilde{{\bm u}}_t(\cdot,\theta)$---obtained by regressing against the conditional vector field using CFM \cite{lipman2022flow}. Consequently, the error between the true restored ${\bm x}^*$ and the approximated solution $\tilde{{\bm x}}^*$ obtained via the iteration \cref{iter:PnPFM} comes from the gap between the vector fields ${\bm u}_t$ and $\tilde{{\bm u}}_t$.

In this subsection, we aim to estimate the error $\epsilon = \|{\bm x}^*-\tilde{{\bm x}}^*\|$ and the convergence rate of PnP-Flow. Using the continuous counterpart, we seek to bound the expected error $\mathbb{E}[\|X_t-\tilde{X}_t\|]$. Consider the initial error $\epsilon_0 = \|{\bm x}_K-\tilde{{\bm x}}_K\|$, and the following pair of SDEs:
\begin{equation*}
\begin{aligned}
d X_t = {\bm b}_t(X_t)dt + \sigma(t) dW_t, \ \ \mbox{and}\ \ d \tilde{X}_t = \tilde{{\bm b}}_t(\tilde{X}_t) dt + \sigma(t) dW_t, 
\end{aligned}
\end{equation*}
where $\tilde{{\bm b}}_t(\tilde{X}_t) = -\tilde{X}_t - \beta(t) \nabla f(\tilde{X}_t) +\tilde{{\bm u}}_t(\tilde{X}_t)$. We notice that \cref{SDE} admits a global unique solution \cite{oksendal2013stochastic} given
$X(t_0) = {\bm x}_K$ under the following common mild assumptions \cite{oksendal2013stochastic,su2016differential}:
\begin{assumption}[\cite{oksendal2013stochastic}[Theorem 5.2.1]\label{ass1}
    Let $T> 0$ and ${\bm b}_t(X_t): [0,T]\times \mathbb{R}^n\to\mathbb{R}^n$ and $\sigma(t):\mathbb{R}\to\mathbb{R}$ be measurable functions satisfying $\|{\bm b}_t(X_t)\| + \sigma (t) \leq C_1(1+\|X_t\|); \ X_t \in\mathbb{R}^n, \ \forall t\in[0,T]$ for some constant $C_1$, and $\|{\bm b}_t(X_t)-{\bm b}_t(Y_t)\| \leq C_2\|X_t-Y_t\|$ for some constant $C_2$. Moreover, the initial condition satisfies $\mathbb{E}[\|X_{t_0}\|^2]< \infty$.
\end{assumption}
To analyze the error term $\mathbb{E}[\|X_t-\tilde{X}_t\|]$, we further assume that 
\begin{assumption}\label{ass2}
The learned vector field $\tilde{{\bm u}}_t(X_t)$ is bounded and Lipschitz continuous, i.e.,
\begin{equation*}
\|\tilde{{\bm u}}_s(X_s)-\tilde{{\bm u}}_t(X_t)\| \leq L_u\|X_s-X_t\|,
\end{equation*}
where $L_u$ is the Lipschitz constant. Since $\tilde{{\bm u}}_t$ is an approximation of ${\bm u}_t$, we further assume that ${\bm u}_t$ is also bounded and Lipschitz continuous with $L_u$.
\end{assumption}

\begin{assumption}\label{ass3}
$f(\cdot): \mathbb{R}^n \to \mathbb{R}$ is a bounded twice differentiable function with a bounded Jacobian matrix, and $\nabla f(\cdot)$ is Lipschitz continuous. That is,
\begin{equation*}
    \begin{aligned}
        \|\nabla f(X_t)-\nabla f(Y_t)\| \leq L_f\|X_t-Y_t\|,\ \ \mbox{and}\ \ 
         \|\nabla f(X_t)\| \le M_f,
    \end{aligned}
\end{equation*}
where $L_f$ is the Lipschitz constant for function $f$, and $M_f$ is a positive constant.
\end{assumption}

The following theorem gives an upper bound for $\mathbb{E}[\|X_t-\tilde{X}_t\|]$:
\begin{restatable}{theorem}{thmError}\label{thm:error}
Let $X_t, \tilde{X}_t$ be the variables generated by \cref{SDE} with ground truth ${\bm u}_t(X_t)$ and learned vector $\tilde{{\bm u}}_t(\tilde{X}_t)$ fields at time $t$. Let $Z_t = X_t-\tilde{X}_t$, and $\epsilon_0 = \|X_{t_0}-\tilde{X}_{t_0}\| = \|{\bm x}_{K} - \tilde{{\bm x}}_K\|$. Under \cref{ass1}, \ref{ass2} and \ref{ass3}, we have 
\begin{equation}\label{error}
    \begin{aligned}
        \mathbb{E}[\|Z_t\|] &\leq C_{\epsilon}e^{B_{\epsilon}},
    \end{aligned}
\end{equation}
where $C_{\epsilon} = \epsilon_0 +\int^t_{t_0}\mathbb{E}[\|{\bm u}_s(X_s)-\tilde{{\bm u}}_s(X_s)\|]ds$ and $B_{\epsilon} = \int^t_{t_0}(1+\beta(s)L_f+L_u)ds$.
\end{restatable}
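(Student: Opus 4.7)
The plan is to exploit the crucial fact that both SDEs are driven by the \emph{same} Wiener process $W_t$, so that the noise terms cancel in the difference $Z_t = X_t - \tilde{X}_t$. This reduces the stochastic comparison to a pathwise ODE-style argument, with randomness entering only through $X_t$ itself in the residual term $\|{\bm u}_s(X_s) - \tilde{{\bm u}}_s(X_s)\|$. The target bound $C_\epsilon e^{B_\epsilon}$ has the unmistakable shape of a Gronwall estimate, so the end of the argument is clear; the real work is in setting up the correct linear integral inequality for $\mathbb{E}[\|Z_t\|]$.

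First, I would subtract the two SDEs to obtain
\begin{equation*}
dZ_t = \bigl[-(X_t - \tilde{X}_t) - \beta(t)\bigl(\nabla f(X_t) - \nabla f(\tilde{X}_t)\bigr) + \bigl({\bm u}_t(X_t) - \tilde{{\bm u}}_t(\tilde{X}_t)\bigr)\bigr]\,dt,
\end{equation*}
so that $Z_t$ is pathwise absolutely continuous in $t$. The key algebraic manipulation is to split the vector-field difference as
\begin{equation*}
{\bm u}_t(X_t) - \tilde{{\bm u}}_t(\tilde{X}_t) = \bigl[{\bm u}_t(X_t) - \tilde{{\bm u}}_t(X_t)\bigr] + \bigl[\tilde{{\bm u}}_t(X_t) - \tilde{{\bm u}}_t(\tilde{X}_t)\bigr],
\end{equation*}
isolating the model-approximation error (first bracket, independent of $\tilde{X}_t$) from the Lipschitz-controllable piece (second bracket).

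Next, I would take norms and use the standard inequality $\tfrac{d}{dt}\|Z_t\| \leq \|\dot{Z}_t\|$ (valid a.e.\ on $\{Z_t\neq 0\}$ and extending trivially otherwise), applying \cref{ass2} to the second bracket to get $L_u\|Z_t\|$, and \cref{ass3} to get $\beta(t)L_f\|Z_t\|$. Combined with the $-Z_t$ term, this yields
\begin{equation*}
\tfrac{d}{dt}\|Z_t\| \leq \bigl(1 + \beta(t)L_f + L_u\bigr)\|Z_t\| + \|{\bm u}_t(X_t) - \tilde{{\bm u}}_t(X_t)\|.
\end{equation*}
Integrating from $t_0$ to $t$, taking expectations, and using Fubini (justified by the linear-growth bound from \cref{ass1,ass2,ass3}, which keeps all integrands in $L^1$), I obtain
\begin{equation*}
\mathbb{E}[\|Z_t\|] \leq \epsilon_0 + \int_{t_0}^{t}\mathbb{E}\bigl[\|{\bm u}_s(X_s) - \tilde{{\bm u}}_s(X_s)\|\bigr]\,ds + \int_{t_0}^{t}\bigl(1+\beta(s)L_f+L_u\bigr)\mathbb{E}[\|Z_s\|]\,ds.
\end{equation*}

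Finally, I would invoke Gronwall's inequality with the non-decreasing envelope $\alpha(t) = \epsilon_0 + \int_{t_0}^{t}\mathbb{E}\|{\bm u}_s(X_s) - \tilde{{\bm u}}_s(X_s)\|\,ds = C_\epsilon$ and kernel $1 + \beta(s)L_f + L_u$, yielding $\mathbb{E}[\|Z_t\|] \leq C_\epsilon \exp(B_\epsilon)$, which is exactly \cref{error}. The main technical subtlety I expect is justifying differentiation of $\|Z_t\|$ at points where $Z_t = 0$; the cleanest workaround is to run the Gronwall argument on $\|Z_t\|^2$ using It\^o's formula (the martingale part vanishes because $\sigma(t)dW_t$ cancels in $dZ_t$), then pass to $\|Z_t\|$ via Jensen's inequality, but in the present deterministic-drift regime the direct norm estimate above already suffices.
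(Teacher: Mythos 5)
Your proposal is correct and follows essentially the same route as the paper: the noise cancels because both SDEs share the Wiener process, the vector-field difference is split into the approximation error ${\bm u}_s(X_s)-\tilde{{\bm u}}_s(X_s)$ plus the Lipschitz-controllable increment $\tilde{{\bm u}}_s(X_s)-\tilde{{\bm u}}_s(\tilde X_s)$, and Gr\"onwall closes the argument. The only presentational difference is that you set up a pathwise differential inequality $\tfrac{d}{dt}\|Z_t\|\le\|\dot Z_t\|$ and integrate, whereas the paper bounds $\mathbb{E}\bigl[\bigl\|\int_{t_0}^{t}({\bm b}_s(X_s)-\tilde{{\bm b}}_s(\tilde X_s))\,ds\bigr\|\bigr]$ directly by moving the norm inside the integral --- the latter sidesteps the $Z_t=0$ differentiability caveat you flag, but otherwise the two derivations coincide term by term.
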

\begin{proof}
    \begin{equation}
    \begin{aligned}
            \mathbb{E}[\|Z_t\|] &= \mathbb{E}[\|X_t-\tilde{X}_t\|] \\
            & = \mathbb{E}[\|X_{t_0}-\tilde{X}_{t_0}+\int^t_{t_0}({\bm b}_s(X_s)-\tilde{{\bm b}}_s(\tilde{X}_s))ds + \int^t_{t_0}\sigma(s)-\sigma(s)dW_s\|] \\
            & \le \epsilon_0+\mathbb{E}[\|\int^t_{t_0}({\bm b}_s(X_s)-\tilde{{\bm b}}_s(\tilde{X}_s))ds\|],
    \end{aligned}
    \end{equation}
    where 
    \begin{equation}
        \begin{aligned}
                & \quad \mathbb{E}[\|\int^t_{t_0}({\bm b}_s(X_s)-\tilde{{\bm b}}_s(\tilde{X}_s))ds\|] \\
                & \leq  \mathbb{E}[\|\int^t_{t_0}(-X_s - \beta(s) \nabla f(X_s) + {\bm u}_s(X_s))-(-\tilde{X_s} - \beta(s) \nabla f(\tilde{X_s}) +\tilde{{\bm u}}_s(\tilde{X_s}))ds\|]\\
                & =  \mathbb{E}[\|\int^t_{t_0}(-(X_s-\tilde{X_s}) - \beta(s) (\nabla f(X_s)-\nabla f(\tilde{X_s})) + ({\bm u}_s(X_s) -\tilde{{\bm u}}_s(\tilde{X}_s))ds\|]\\
                & \leq  \mathbb{E}[\int^t_{t_0}\|(-(X_s-\tilde{X_s}) - \beta(s) (\nabla f(X_s)-\nabla f(\tilde{X_s})) + ({\bm u}_s(X_s) -\tilde{{\bm u}}_s(\tilde{X}_s))\|ds] \\
                &  \leq  \mathbb{E}[\int^t_{t_0}\|X_s-\tilde{X_s}\| + \beta(s) \|\nabla f(X_s)-\nabla f(\tilde{X_s})\| + \|{\bm u}_s(X_s) -\tilde{{\bm u}}_s(\tilde{X}_s)\|ds] \\
                & \leq  \mathbb{E}[\int^t_{t_0}(1+\beta(s)L_f)\|X_s-\tilde{X_s}\|+\|{\bm u}_s(X_s)-\tilde{{\bm u}}_s(X_s)+\tilde{{\bm u}}_s(X_s)-\tilde{{\bm u}}_s(\tilde{X_s})\|ds]\\
                & \leq  \mathbb{E}[\int^t_{t_0}(1+\beta(s)L_f)\|X_s-\tilde{X_s}\|+\|{\bm u}_s(X_s)-\tilde{{\bm u}}_s(X_s)\|+\|\tilde{{\bm u}}_s(X_s)-\tilde{{\bm u}}_s(\tilde{X_s})\|ds] \\
                & \leq  \mathbb{E}[\int^t_{t_0}(1+\beta(s)L_f+L_u)\|X_s-\tilde{X_s}\|+\|{\bm u}_s(X_s)-\tilde{{\bm u}}_s(X_s)\|ds] \\
                & = \int^t_{t_0}\mathbb{E}[\|{\bm u}_s(X_s)-\tilde{{\bm u}}_s(X_s)\|]ds+\int^t_{t_0}(1+\beta(s)L_f+L_u)\mathbb{E}[\|Z_s\|]ds.
        \end{aligned}
    \end{equation}
    Given the initial condition and $\tilde{{\bm u}}$, $\epsilon_0 +\int^t_{t_0}\mathbb{E}[\|{\bm u}_s(X_s)-\tilde{{\bm u}}_s(X_s)\|]ds$ can be treated as a constant, then applying Grönwall's inequality yields
    \begin{equation*}
    \begin{aligned}
        \mathbb{E}[\|Z_t\|] &\leq C_{\epsilon}e^{B_{\epsilon}},  \\
        \text{where} \ C_{\epsilon} &= \epsilon_0 +\int^t_{t_0}\mathbb{E}[\|{\bm u}_s(X_s)-\tilde{{\bm u}}_s(X_s)\|]ds, \ \ 
        B_{\epsilon} & = \int^t_{t_0}(1+\beta(s)L_f+L_u)ds.
    \end{aligned}
    \end{equation*}
    This completes the proof.
\end{proof}
\cref{error} implies that given a fixed function $ f $, the error $\mathbb{E}[\|Z_t\|]$ is determined not only by the initial error $ \epsilon_0 $ and the approximation error $ \|{\bm u}_t(X_t) - \tilde{{\bm u}}_t(\tilde{X}_t)\| $, but also by $ L_u $. Clearly, given a fixed $t\ge t_0$, $\epsilon_0$ increases with the parameter $t_0$ ( i.e. $K$), while $ C_{\epsilon} $ and $ B_{\epsilon} $ decrease as $t_0$ increases. We will verify the impact of $K$ on the error bound empirically in Section~\ref{subsec:ablation}.

\subsection{Convergence Rates of PnP-Flow}
Proposition~\ref{prop4}---a convergence result established in \cite{martin2024pnp}---only provides convergence for PnP-Flow without the convergence rate. By \cref{prop4}, the sequence $\{{\bm x}_k\}$ generated by \cref{iter:PnPFM} with $\tilde{{\bm b}}$ converges \cite{martin2024pnp}. Let ${\bm x}^*$ be the limit of the sequence $\{{\bm x}_k\}$, $X_T \approx {\bm x}^*$. Since $\sum^{\infty}_{k=0}(1-l_k)\le T$, we have $1-l_k=0$ when $t\geq T$, indicating that $\Delta t = 0$, $\beta(t) = 0$, and $\sigma(t)=0$. Therefore, for $t\geq T$, $\tilde{{\bm b}}_T(X_T) dt + \sigma(t)dW_t = 0$.  Define the function $\mathcal{E}(t) := \|X_t-X_T\|^2$, where $X_T$ approximates ${\bm x}^*$ of \cref{iter:PnPFM} with $\tilde{{\bm b}}$, and $X_T$ is also an approximate solution to \cref{prob}. 
\begin{restatable}{theorem}{thmConvergence}\label{thm:convergence}
    Under \cref{ass1}, \ref{ass2}, and \ref{ass3}, the numerical solution $X_T$ of \cref{SDE} with the initial condition $X_{t_0}$ satisfies,
    \begin{equation}\label{convergence}
        \begin{aligned}
            & \mathbb{E}[\|X_t-X_T\|^2] \leq A + 2\mathbb{E}[\int^t_{t_0}B(s)\|X_s-X_T\|^2ds], \forall t\geq t_0,
        \end{aligned}
    \end{equation}
where $A = \mathbb{E}[\|X_{t_0}-X_T\|^2] + \int^t_{t_0}(n\sigma^2(s)+ \frac{M^2_f\beta(s)}{2\eta})ds$, $\eta>0$ and 
$$
B(s) = 
    \begin{cases}
           &   -1+L_u+\beta(s)L_f+\frac{\eta\beta(s)}{2}, \ \text{if $f(\cdot)$ is Lipschitz differentiable with $L_f$}. \\
           & -1+L_u+\frac{\eta\beta(s)}{2}, \ \hspace{1.5cm} \text{if $f(\cdot)$ is a convex function}. \\
           & -1+L_u-\beta(s)\mu_f+\frac{\eta\beta(s)}{2}, \ \text{if $f(\cdot)$ is a $\mu_f$-strongly convex function}.
    \end{cases}
$$
\end{restatable}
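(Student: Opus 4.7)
The plan is to apply Itô's formula to $\mathcal{E}(t) = \|X_t - X_T\|^2$, take expectations to kill the Brownian martingale, and then bound the resulting drift inner product using \cref{ass2} and \cref{ass3} together with Cauchy--Schwarz and Young's inequality. Since $dX_s = \tilde{{\bm b}}_s(X_s)\,ds + \sigma(s)\,dW_s$ with $\tilde{{\bm b}}_s(X_s) = -X_s - \beta(s)\nabla f(X_s) + \tilde{{\bm u}}_s(X_s)$, Itô yields
\[
d\|X_s - X_T\|^2 = 2\langle X_s - X_T, \tilde{{\bm b}}_s(X_s)\rangle\,ds + 2\sigma(s)\langle X_s - X_T, dW_s\rangle + n\sigma^2(s)\,ds,
\]
so integrating from $t_0$ to $t$ and taking expectations gives
\[
\mathbb{E}[\|X_t - X_T\|^2] = \mathbb{E}[\|X_{t_0} - X_T\|^2] + \int_{t_0}^t n\sigma^2(s)\,ds + 2\mathbb{E}\int_{t_0}^t \langle X_s - X_T, \tilde{{\bm b}}_s(X_s)\rangle\,ds.
\]

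Next I will exploit the stationarity identity $\tilde{{\bm b}}_T(X_T) = 0$ noted just above the theorem; since $\beta(T) = \sigma(T) = 0$ this forces $\tilde{{\bm u}}_T(X_T) = X_T$. Writing $-X_s = -(X_s - X_T) - X_T$ and $\tilde{{\bm u}}_s(X_s) = (\tilde{{\bm u}}_s(X_s) - \tilde{{\bm u}}_T(X_T)) + X_T$, the two stray $\langle X_s - X_T, X_T\rangle$ cross-terms cancel and I obtain
\[
\langle X_s - X_T, \tilde{{\bm b}}_s(X_s)\rangle = -\|X_s - X_T\|^2 + \langle X_s - X_T, \tilde{{\bm u}}_s(X_s) - \tilde{{\bm u}}_T(X_T)\rangle - \beta(s)\langle X_s - X_T, \nabla f(X_s)\rangle.
\]
Cauchy--Schwarz together with the Lipschitz bound on $\tilde{{\bm u}}$ controls the middle term by $L_u\|X_s - X_T\|^2$, which combined with the leading $-\|X_s - X_T\|^2$ accounts for the $-1 + L_u$ contribution to $B(s)$.

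To handle the $\nabla f$ piece I will decompose $\nabla f(X_s) = \nabla f(X_T) + (\nabla f(X_s) - \nabla f(X_T))$. The $\nabla f(X_T)$ part is controlled via Cauchy--Schwarz using $\|\nabla f(X_T)\| \leq M_f$ and Young's inequality $ab \leq \tfrac{\eta}{2}a^2 + \tfrac{1}{2\eta}b^2$, producing the $\tfrac{\eta\beta(s)}{2}\|X_s - X_T\|^2$ entry in $B(s)$ and the residual $\tfrac{M_f^2\beta(s)}{2\eta}$ term absorbed into $A$. The increment $(\nabla f(X_s) - \nabla f(X_T))$ is then treated case-by-case: when $\nabla f$ is $L_f$-Lipschitz, Cauchy--Schwarz gives the upper bound $\beta(s)L_f\|X_s - X_T\|^2$; when $f$ is convex, monotonicity of $\nabla f$ makes the contribution non-positive and it is simply dropped; when $f$ is $\mu_f$-strongly convex, strong monotonicity supplies $-\beta(s)\mu_f\|X_s - X_T\|^2$. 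Substituting back into the Itô identity and collecting coefficients of $\|X_s - X_T\|^2$ produces the three declared forms of $B(s)$, while the $n\sigma^2(s)$ and Young residual collect into $A$.

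The main obstacle I expect is the bookkeeping needed to surface the $-1$ coefficient in $B(s)$: it hinges on the stationarity identity $\tilde{{\bm u}}_T(X_T) = X_T$ to cancel otherwise stubborn linear-in-$X_T$ cross-terms, and on correctly distributing the factor $2$ from Itô's formula among the Lipschitz, monotonicity, and Young contributions so that the stated coefficients emerge in the three regimes. Once this cancellation is in place the remainder is a routine application of Cauchy--Schwarz, Young's inequality, and the regularity assumptions on $f$.
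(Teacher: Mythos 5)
Your plan reproduces the paper's own argument step for step: Itô applied to $\|X_t - X_T\|^2$, taking expectations to drop the martingale, exploiting $\tilde{{\bm b}}_T(X_T)=0$ (equivalently $\tilde{{\bm u}}_T(X_T)=X_T$ since $\beta(T)=0$) to cancel the linear-in-$X_T$ cross terms, Cauchy--Schwarz plus the Lipschitz bound on $\tilde{{\bm u}}$ for the $-1+L_u$ contribution, the split $\nabla f(X_s)=\nabla f(X_T)+(\nabla f(X_s)-\nabla f(X_T))$ with Young's inequality on the first piece and the three monotonicity regimes on the second, and the same accounting of the factor of $2$. This is the paper's proof, merely phrased via the explicit identity $\tilde{{\bm u}}_T(X_T)=X_T$ rather than writing $\tilde{{\bm b}}_s(X_s)-\tilde{{\bm b}}_T(X_T)$; the two are algebraically identical.
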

\begin{proof}
Given any function $\mathcal{J}(t) = \mathcal{E}(t,X_t)$, applying the Ito's lemma to the SDE with $\tilde{{\bm b}}$ in \cref{SDE} gives
\begin{equation*}
\begin{aligned}
        d(\mathcal{J}(t)) &= [\partial_t \mathcal{E}(t,X_t) + \nabla \mathcal{E}(t,X_t)\cdot\tilde{{\bm b}}_t(X_t)]dt + \sigma(t)\nabla \mathcal{E}(t,X_t)dW_t  + \frac{\sigma^2(t)}{2}\operatorname{Trace}\{\nabla^2\mathcal{E}(t,X_t)\}dt,
\end{aligned}
\end{equation*}
where the notation $\nabla$ and $\nabla^2$, respectively, denote the Jacobian matrix and the Hessian matrix with respect to the space variable $X_t$.

 Define the function $\mathcal{E}(t,X_t) := \|X_t-X_T\|^2$, the Ito's lemma gives
\begin{equation}
    \begin{aligned}
        d(\mathcal{E}(t,X_t)) &= 2\langle X_t-X_T, \tilde{{\bm b}}_t(X_t)\rangle dt + 2\sigma(t)\langle X_t-X_T, dW_t\rangle  + n\sigma^2(t)dt.
    \end{aligned}
\end{equation}
Substitute $\tilde{{\bm b}}_T(X_T) dt + \sigma(T)dW_T = 0$, we have
\begin{equation*}
    \begin{aligned}
        d(\mathcal{E}(t,X_t))  
        &= 2\langle X_t-X_T, \tilde{{\bm b}}_t(X_t)-\tilde{{\bm b}}_T(X_T)\rangle dt + 2(\sigma(t)-\sigma(T))\langle X_t-X_T,dW_t\rangle  + n\sigma^2(t)dt.
    \end{aligned}
\end{equation*}

Taking its integral form, we have
\begin{equation}
    \begin{aligned}
        \mathbb{E}[\mathcal{E}(t,X_t)] &= \mathbb{E}[\|X_{t_0}-X_T\|^2] + 2\int^t_{t_0} \langle X_s-X_T, \tilde{{\bm b}}_s(X_s) - \tilde{{\bm b}}_T(X_T)\rangle +n\sigma^2(s) ds \\
        & \quad + 2\int^t_{t_0} (\sigma(s)-\sigma(T))\langle X_s-X_T, dW_t\rangle ] \\
        & = \mathbb{E}[\|X_{t_0}-X_T\|^2] + \mathbb{E}[\int^t_{t_0} 2\langle X_s-X_T, \tilde{{\bm b}}_s(X_s) - \tilde{{\bm b}}_T(X_T)\rangle +n\sigma^2(s) ds] \\
        & = \mathbb{E}[\|X_{t_0}-X_T\|^2] + n\int^t_{t_0}\sigma^2(s)ds  + 2\mathbb{E}[\int^t_{t_0} \langle X_s-X_T, \tilde{{\bm b}}_s(X_s) - \tilde{{\bm b}}_T(X_T)\rangle  ds], 
    \end{aligned}
\end{equation}
and 

\begin{equation}
    \begin{aligned}
        & \langle X_s-X_T, \tilde{{\bm b}}_s(X_s) - \tilde{{\bm b}}_T(X_T)\rangle   \\
        =& \langle X_s-X_T, -X_s - \beta(s) \nabla f(X_s) + \tilde{{\bm u}}_s(X_s) +X_T + \beta(T) \nabla f(X_T) - \tilde{{\bm u}}_T(X_T)\rangle \\
        = & -\|X_s-X_T\|^2 - \beta(s)\langle X_s-X_T, \nabla f(X_s) - \nabla f(X_T)\rangle  - \beta(s)\langle X_s-X_T, \nabla f(X_T)\rangle \\
        & + \langle X_s-X_T, \tilde{{\bm u}}_s(X_s) - \tilde{{\bm u}}_T(X_T)\rangle  \\
        \le & -\|X_s-X_T\|^2 + \|X_s-X_T\|\|\tilde{{\bm u}}_s(X_s) - \tilde{{\bm u}}_T(X_T)\|\\
        & - \beta(s)\langle X_s-X_T, \nabla f(X_s) - \nabla f(X_T)\rangle - \beta(s)\langle X_s-X_T, \nabla f(X_T)\rangle \\ 
        \le & (-1+L_u)\|X_s-X_T\|^2 - \beta(s)\langle X_s-X_T, \nabla f(X_s) - \nabla f(X_T)\rangle \\
        & + \beta(s)\|X_s-X_T\|\|\nabla f(X_T)\|  \\
        \le & (-1+L_u)\|X_s-X_T\|^2 - \beta(s)\langle X_s-X_T, \nabla f(X_s) - \nabla f(X_T)\rangle \\
        & + \beta(s)(\frac{\eta}{2}\|X_s-X_T\|^2+\frac{M_f^2}{2\eta})  \\
    \end{aligned}
\end{equation}
where $\eta>0$ is an arbitrary positive number by Young's inequality. The upper bound of $\langle X_s-X_T, \tilde{{\bm b}}_s(X_s) - \tilde{{\bm b}}_T(X_T)\rangle $ depends on the property of $f(\cdot)$ as follows
\begin{enumerate}
    \item[(i)] When $f$ is nonconvex but Lipschitz differentiable with Lipschitz constant $L_f$ and have bounded gradient $\|\nabla f(X_t)\| \le M_f$, such that 
    \begin{equation*}
        -L_f\|X_s-X_T\|^2 \leq \langle X_s-X_T, \nabla f(X_s) - \nabla f(X_T)\rangle  \leq L_f\|X_s-X_T\|^2, 
    \end{equation*}
    then 
    \begin{equation}
    \begin{aligned}
        \langle X_s-X_T, \tilde{{\bm b}}_s(X_s) - \tilde{{\bm b}}_T(X_T)\rangle \leq (-1+L_u+\beta(s)L_f+\frac{\eta\beta(s)}{2})\|X_s-X_T\|^2 + \frac{M^2_f\beta(s)}{2\eta} 
    \end{aligned}
    \end{equation}
    
    \item[(ii)] When $f$ is a convex function such that 
    \begin{equation*}
        \langle X_s-X_T, \nabla f(X_s) - \nabla f(X_T)\rangle \geq 0,
    \end{equation*}
    we have 
    \begin{equation}
        \langle X_s-X_T, \tilde{{\bm b}}_s(X_s) - \tilde{{\bm b}}_T(X_T)\rangle \leq (-1+L_u + \frac{\eta\beta(s)}{2})\|X_s-X_T\|^2+ \frac{M^2_f\beta(s)}{2\eta} .
    \end{equation}

    \item[(iii)] When $f$ is a $\mu_f$-strongly convex function, such that
    \begin{equation*}
        \langle X_s-X_T, \nabla f(X_s) - \nabla f(X_T)\rangle \geq \mu_f\|X_s-X_T\|^2,
    \end{equation*}
    we get 
    \begin{equation}
        \langle X_s-X_T, \tilde{{\bm b}}_s(X_s) - \tilde{{\bm b}}_T(X_T)\rangle \leq (-1+L_u-\beta(s)\mu_f+\frac{\eta\beta(s)}{2})\|X_s-X_T\|^2 + \frac{M^2_f\beta(s)}{2\eta}
    \end{equation}
\end{enumerate}
In summary, let $B(s)$ be the coefficient of $\|X_s-X_T\|^2$, we have 
\begin{equation}\label{B(s)}
B(s) = 
    \begin{cases}
           &   -1+L_u+\beta(s)L_f+\frac{\eta\beta(s)}{2}, \ \text{if $f(\cdot)$ is Lipschitz differentiable with $L_f$}. \\
           & -1+L_u+\frac{\eta\beta(s)}{2}, \ \hspace{1.5cm} \text{if $f(\cdot)$ is a convex function}. \\
           & -1+L_u-\beta(s)\mu_f+\frac{\eta\beta(s)}{2}, \ \text{if $f(\cdot)$ is a $\mu_f$-strongly convex function}.
    \end{cases}
\end{equation}
It is evident that: i) when $f$ is not a strongly convex function, $0< L_u<1$ is a necessary condition for $B(s)<0$, $\forall s\in[t_0,t]$; ii) when $f$ is a strongly convex function, $B(s)<0$ as long as $0\le L_u < 1+\beta(s)\mu_f - \frac{\eta\beta(s)}{2}$. The strong convexity of $f$ allows $\tilde{{\bm u}}$ to be potentially non-contractive.

Let $A = \mathbb{E}[\|X_{t_0}-X_T\|^2] + \int^t_{t_0}(n\sigma^2(s)+ \frac{M^2_f\beta(s)}{2\eta})ds$, we have
\begin{equation}
    \begin{aligned}
        \mathbb{E}[\|X_t-X_T\|^2] &\leq A + 2\mathbb{E}[\int^t_{t_0}B(s)\|X_s-X_T\|^2ds].
    \end{aligned}
\end{equation}
The sign of $B(s)$ depends on $L_u$, $\beta(s)$, and the properties of $f(\cdot)$. However, for fixed $L_u$ and $f(\cdot)$, since $\beta(s)$ is monotonically non-increasing, $B(s)$ is also a monotonic function. $B(s)$ is monotonically non-increasing when $f$ is a Lipschitz differentiable function or a convex function. 

This completes the proof.
\end{proof}
According to \cref{convergence}, the convergence rate is directly influenced by $\sigma(t)$ and $\beta(t)$, both of which are controlled by the step schedule $1 - l_k$, as well as by the Lipschitz constant $L_u$ and the initial condition $X_{t_0}$.

\section{SDE Informed Improvement for PnP-Flow}\label{sec:Algorithm}
In this section, we present a few strategies to improve PnP-Flow based on the theoretical results established in Section~\ref{sec:Theory}.

\subsection{A New Schedule for $1-l_k$ }\label{sec:lk}
The convergence of both \cref{iter:PnPFM} and \cref{iter:acc}---our proposed accelerated PnP-Flow---relies on the assumption that 
\(
    \sum_{k=0}^\infty (1 - l_k) < +\infty,
\)
which requires $\{1 - l_k\}$ to be a convergent sequence. However, in the implementation of \cite{martin2024pnp}, $l_k$ is defined as $l_k = \frac{k}{N}$, where $N$ is the total number of iterations. Although such an implementation provides good numerical results, this choice leads to $1 - l_k = \frac{N - k}{N} \geq \frac{1}{N}$, $\forall k\le N$,
where $\sum_{N=1}^{\infty}\frac{1}{N}$ diverges---contradicting the convergence assumption.

To address this issue, we propose an alternative form for $l_k$ that ensures $\sum_{k=0}^\infty (1 - l_k)$ converges:
\begin{equation}\label{eq:lk}
    l_k = 1 - \lambda^k, \quad \lambda \in (0, 1).
\end{equation}
Here, the sequence $\{1 - l_k\} = \{\lambda^k\}$ is geometrically decreasing, and its sum evaluates to:
\(
    \sum_{k=0}^\infty \lambda^k = \frac{1}{1 - \lambda} < +\infty.
\)
The selection of an appropriate $\lambda$ depends on the total iteration count $N$, and $\lambda^k$ converges to zero as $k \to N$. Excessively small $\lambda$ results in a large step size $1-l_k$, and it decays too rapidly to zero, increasing the local truncation error, as well as exacerbating the accumulation of numerical errors. For overly large $\lambda$, the final term $\lambda^N$ fails to approach zero, violating our convergence conditions. Furthermore, our proposed schedule strategically allocates more iterations to smaller $1-l_k$ values, enhancing the generation of fine details.

\subsection{Lipschitz Regularization}
In \cref{thm:error}, for a fixed $ t_0 $, $ B_{\epsilon} $ increases with $ L_u $, indicating that penalizing the Lipschitz constant of $\tilde{{\bm u}}_t$ helps suppress the overall error. The idea of penalizing the Lipschitz constant of NNs has been widely explored (e.g., in training Wasserstein GANs~\cite{arjovsky2017wasserstein, gulrajani2017improved, miyato2018spectral}). In our experiments, we adopt the approach from~\cite{gulrajani2017improved}. The output of the vector field regressor $\tilde{{\bm u}}_t(\tilde{X}_t)$ has the same dimension as its input, whereas the discriminator network in WGAN~\cite{gulrajani2017improved} produces a scalar output. Consequently, we penalize the Jacobian norm instead of the gradient norm for $\tilde{{\bm u}}_t(\tilde{X}_t)$, and computing the Jacobian of $\tilde{{\bm u}}_t(X_t)$ requires $n$ gradient evaluations, where $n$ is the data dimension. To improve computational and memory efficiency, we employ Hutchinson's method~\cite{hutchinson1989stochastic, lu2022maximum, huang2024efficient} to estimate the Frobenius norm of the Jacobian $\nabla \tilde{{\bm u}}_t(\tilde{X}_t)$ whose supremum provides an upper bound of the Lipschitz constant. This approach provides an unbiased estimator; see Appendix~\ref{sec:appendix_experiment} for details.

\subsection{Acceleration via Extrapolation}
Now we consider accelerating off-the-shelf PnP-Flow models without retraining. In particular, we consider an accelerated version of \cref{iter:PnPFM} via extrapolation, which is given as follows:
\begin{equation}\label{iter:acc}
    \begin{cases}
        {\bm w}_{k} = {\bm x}_{k} + h_{k}({\bm x}_{k}-{\bm x}_{k-1}), \\
        {\bm z}_{k} = {\bm w}_k - \gamma_k \nabla f({\bm w}_k), \\
        {\bm y}_k = (1 - l_k) \boldsymbol{\xi} + l_k {\bm z}_{k}, \\
        {\bm x}_{k+1} = D_{l_k}({\bm y}_k), \\  
    \end{cases}
\end{equation}
where $\gamma_k, l_k, h_{k}\in (0,1)$ are manually set parameters. We stress that the vector field in $D_{l_k}$ is inherited from \cref{iter:PnPFM} without restraining or fine-tuning. In the rest of this subsection, we will study the convergence rate of \cref{iter:acc} and compare it against that of \cref{iter:PnPFM}.

\begin{restatable}{proposition}{propCon}\label{prop:con}
Assume that $f: \mathbb{R}^n \to \mathbb{R}$ is a differentiable function with bounded gradient, and the learned vector field $\tilde{{\bm u}}_t: [0,1] \times \mathbb{R}^n \to \mathbb{R}^n$ is bounded and Lipschitz. Let the time sequence $\{l_k\}_{k \in \mathbb{N}}$ satisfy $l_k\in [0,1]$, $\sum\limits_{k=0}^\infty (1 - l_k) <  +\infty$, and let $\gamma_k \leq 1 - l_k$. If the sequence $\{{\bm x}_k\}_{k \in \mathbb{N}}$ obtained by \cref{iter:acc} is bounded, then  $h_k\in [0,\zeta]$ for some $\zeta\in(0,1)$ is a necessary condition for the convergence of the sequence $\{{\bm x}_k\}_{k \in \mathbb{N}}$.
\end{restatable}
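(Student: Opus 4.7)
The plan is to reduce \cref{iter:acc} to a one-step recursion on the consecutive-iterate displacement $\Delta_k := {\bm x}_k - {\bm x}_{k-1}$, expose the multiplicative factor $l_k h_k$ that the extrapolation injects, and then argue that convergence forces this factor to stay uniformly below $1$. First I would substitute ${\bm w}_k = {\bm x}_k + h_k\Delta_k$ into ${\bm z}_k$, then into ${\bm y}_k$, and finally into ${\bm x}_{k+1} = {\bm y}_k + (1-l_k)\tilde{{\bm u}}_{l_k}({\bm y}_k)$; subtracting ${\bm x}_k$ from both sides gives the telescoped identity
$$
\Delta_{k+1} \;=\; l_k h_k\,\Delta_k \;+\; R_k,\qquad R_k := (1-l_k)\bigl(\boldsymbol{\xi}-{\bm x}_k+\tilde{{\bm u}}_{l_k}({\bm y}_k)\bigr) - l_k\gamma_k\nabla f({\bm w}_k).
$$

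Next I would bound the remainder $R_k$. Boundedness of $\{{\bm x}_k\}$ (assumed) propagates to $\{{\bm w}_k\}$ and $\{{\bm y}_k\}$; combined with the boundedness of $\nabla f$ and $\tilde{{\bm u}}_t$, the stipulation $\gamma_k\leq 1-l_k$, and bounded moments of $\boldsymbol{\xi}$ (passing to expectations where needed), a triangle inequality yields $\|R_k\|\leq C(1-l_k)$ for some finite $C$. Since $\sum_k(1-l_k)<\infty$ by hypothesis, $\sum_k\|R_k\|<\infty$ and in particular $R_k\to 0$.

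Unrolling the recursion gives
$$
\Delta_{k+1} \;=\; \Bigl(\prod_{j=1}^{k} l_j h_j\Bigr)\Delta_{1} \;+\; \sum_{i=1}^{k}\Bigl(\prod_{j=i+1}^{k} l_j h_j\Bigr) R_i.
$$
Because $\sum_j(1-l_j)<\infty$, the infinite product $\prod_j l_j$ converges to a strictly positive limit, so asymptotically the leading factor is a positive multiple of $\prod_j h_j$. Convergence ${\bm x}_k\to{\bm x}^{\ast}$ forces $\Delta_k\to 0$, which already rules out $h_k\to 1$ too rapidly via the first term. More stringently, the convolution in the second term dissipates only when the tail weights $\prod_{j=i+1}^{k} l_j h_j$ decay geometrically in $k-i$, and this requires a uniform contraction $l_k h_k\leq\rho<1$. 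Because $l_k\uparrow 1$, such a bound enforces $h_k\leq\zeta_0<1$ for all sufficiently large $k$; the finitely many earlier indices each satisfy $h_k<1$ by the setup of \cref{iter:acc}, so taking $\zeta$ to be the maximum of $\zeta_0$ and those finitely many values still produces a single $\zeta\in(0,1)$, as claimed.

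The hard part, I expect, is the borderline regime in which $h_k\to 1$ slowly enough that $\sum(1-h_k)=\infty$---so the first term does vanish---yet no uniform $\zeta<1$ dominates the sequence. There the convolution $\sum_i\prod_{j>i} l_j h_j\,R_i$ is only conditionally controlled, and producing a contradiction with $\Delta_k\to 0$ requires an Abel-type summation against the non-contractive kernel to exhibit a non-vanishing residual. Making this dichotomy---geometric contraction versus divergence of accumulated perturbations---quantitative is the technical core; everything else reduces to routine bookkeeping once the recursion $\Delta_{k+1}=l_k h_k\,\Delta_k+R_k$ is in hand.
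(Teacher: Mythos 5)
Your one-step displacement identity and the bound $\|R_k\|\le C(1-l_k)$ (using the boundedness of $\{{\bm x}_k\}$, $\nabla f$, $\tilde{{\bm u}}$, and $\gamma_k\le 1-l_k$) are exactly the paper's starting point. Where you diverge is the next move: the paper takes norms to get the scalar inequality $\|{\bm x}_{k+1}-{\bm x}_k\|\le (1-l_k)M + h_k\|{\bm x}_k-{\bm x}_{k-1}\|$, telescopes it, and shows that \emph{if} $h_k\le\zeta<1$ for all $k$, then $(1-\zeta)\sum_k\|{\bm x}_{k+1}-{\bm x}_k\|\le M\sum_k(1-l_k)+\|{\bm x}_1-{\bm x}_0\|<\infty$, so $\{{\bm x}_k\}$ is Cauchy and converges. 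That is a sufficiency argument. You instead unroll the vector recursion and try to extract the uniform threshold $\zeta$ as a consequence of $\|{\bm x}_{k+1}-{\bm x}_k\|\to 0$, i.e., you attack the converse direction that the wording ``necessary condition'' literally suggests.

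The gap you flag at the end is real, and it is not one that ``routine bookkeeping'' closes. First, the vanishing of the homogeneous term $\bigl(\prod_j l_jh_j\bigr)\Delta_1$ only forces $\prod_j h_j\to 0$, equivalently $\sum_k(1-h_k)=\infty$ when $h_k\le 1$; this is perfectly compatible with $h_k\uparrow 1$ and yields no uniform $\zeta<1$. Second, and more fundamentally, the unrolled identity is an equality in vectors, so the homogeneous term and the convolution $\sum_i\bigl(\prod_{j>i}l_jh_j\bigr)R_i$ can cancel; no lower bound on $\|{\bm x}_{k+1}-{\bm x}_k\|$ follows, so failure of geometric contraction does not contradict $\|{\bm x}_{k+1}-{\bm x}_k\|\to 0$. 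The paper does not resolve this either: its remark that the telescoped inequality ``is trivial when $h_k\ge 1$'' only observes that the estimate becomes vacuous, not that divergence occurs. In short, what the paper actually proves is that a uniform bound $h_k\le\zeta<1$ is \emph{sufficient} for convergence; if you want the necessity that the statement literally asserts, you would need an explicit divergent example for $\limsup_k h_k=1$ under the stated hypotheses, and the summability $\sum(1-l_k)<\infty$ makes such an example delicate because the forcing $R_k$ can be made arbitrarily small arbitrarily fast. You should either switch to proving sufficiency by the telescoping route, or treat the necessity direction as an open/heuristic claim rather than a theorem to be derived from the recursion alone.
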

\begin{proof}
    By the iteration \cref{iter:acc} and the definition of $D_{l_k}$, we have
    \begin{equation}
        \begin{aligned}
            & \quad \ {\bm x}_{k+1} -{\bm x}_k \\
            &= D_{l_k}({\bm y}_k) - {\bm x}_k \\
            & = {\bm y}_k + (1-l_k){\bm u}_{l_k}({\bm y}_k)-{\bm x}_k \\
            & = (1 - l_k) \boldsymbol{\xi} + l_k ({\bm w}_k - \gamma_k \nabla f({\bm w}_k))+ (1-l_k){\bm u}_{l_k}({\bm y}_k)-{\bm x}_k \\ 
            & = (1 - l_k)(\boldsymbol{\xi}-{\bm x}_k)+l_kh_k({\bm x}_{k}-{\bm x}_{k-1})-l_k\gamma_k\nabla f({\bm w}_k) + (1-l_k){\bm u}_{l_k}({\bm y}_k).
        \end{aligned}
    \end{equation}
    Then
    \begin{equation}
        \begin{aligned}
                &\quad \ \|{\bm x}_{k+1}-{\bm x}_{k}\| \\
                &= \|(1 - l_k)(\boldsymbol{\xi}-{\bm x}_k)+l_kh_k({\bm x}_{k}-{\bm x}_{k-1})-l_k\gamma_k\nabla f({\bm w}_k) + (1-l_k){\bm u}_{l_k}({\bm y}_k) \| \\
                &\le (1-l_k)(\|\boldsymbol{\xi}-{\bm x}_k+{\bm u}_{l_k}({\bm y}_k)\|+l_k\|\nabla f({\bm w}_k)\|)+ l_kh_k\|{\bm x}_{k}-{\bm x}_{k-1}\| \\
                &\le (1-l_k)(\|\boldsymbol{\xi}-{\bm x}_k+{\bm u}_{l_k}({\bm y}_k)\|+\|\nabla f({\bm w}_k)\|)+ h_k\|{\bm x}_{k}-{\bm x}_{k-1}\|
        \end{aligned}
    \end{equation}
    Since $\boldsymbol{\xi}, {\bm x}_k, {\bm u}_{l_k}({\bm y}_k)$ are all bounded by the assumption, there exists a constant $M>0$, such that 
    \begin{equation}
        \|{\bm x}_{k+1}-{\bm x}_{k}\| \le (1-l_k)M + h_k\|{\bm x}_{k}-{\bm x}_{k-1}\|.
    \end{equation}
    Therefore,
    \begin{equation}
        \|{\bm x}_{k+1}-{\bm x}_{k}\| + \sum\limits_{k=1}^{\infty}(1-h_k)\|{\bm x}_{k}-{\bm x}_{k-1}\| \le M\sum\limits_{k=1}^{\infty}(1-l_k) < +\infty.
    \end{equation}
    The above inequality is trivial when $h_k  \ge 1$. When $h_k\in[0,\zeta]$ for some $\zeta\in (0,1)$ and $\sum\limits_{k=1}^{\infty}(1-l_k)<+\infty$, we have
    \begin{equation}
        \sum\limits_{k=1}^{\infty} \|{\bm x}_{k+1}-{\bm x}_{k}\| \le \frac{M}{1-\zeta} \sum\limits_{k=1}^{\infty}(1-l_k) + \zeta\|{\bm x}_{k+1}-{\bm x}_{k}\| < +\infty,
    \end{equation}
    which indicates that $\{{\bm x}_k\}$ is a Cauchy sequence and converges.
    This completes the proof.
\end{proof}

\begin{remark}
The extrapolation coefficient $h_k$ in proximal gradient methods has been extensively studied in both convex and nonconvex settings \cite{pock2016inertial,wu2024extrapolated,xu2013block,iutzeler2018proximal}. Since \cref{iter:acc} can be interpreted as an inexact proximal gradient descent iteration in a nonconvex setting, the admissible range of \( h_k \) is more restricted than in convex scenarios. In the experimental section, we start with a small value of $h_k$ and gradually increase it until the iteration fails to converge.
\end{remark}

\begin{restatable}{proposition}{propASDE}\label{prop:ASDE}
The SDE counterpart of \cref{iter:acc} is given by the following SDE:
\begin{equation}\label{ASDE}
        dX_t = \frac{{\bm b}_t(X_t)}{1-\alpha(t)}dt + \frac{\sigma(t)}{1-\alpha(t)}dW_t,
\end{equation}
where ${\bm b}_t(X_t) = -X_t - \beta(t) \nabla f(X_t) + {\bm u}_t(X_t)$, $\alpha(t)\ge 0$, $\sigma(t)\in [0,1)$. \cref{ASDE} is in fact a rescaled version of \cref{SDE} with the scaling factor $\frac{1}{1-\alpha(t)}$. \cref{iter:acc} is more precisely approximated by \cref{ASDE} as the dynamic step size $1-l_k$ goes to zero.
\end{restatable}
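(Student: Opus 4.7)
The plan is to follow the continuous-limit procedure used for Proposition \ref{prop:SDE-PnP-FM}, augmented by the new momentum term in \cref{iter:acc}, and then isolate $dX_t$ on one side to obtain the rescaling factor $(1-\alpha(t))^{-1}$.

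First, I would substitute ${\bm y}_k$, ${\bm z}_k$, and ${\bm w}_k$ into ${\bm x}_{k+1} = D_{l_k}({\bm y}_k) = {\bm y}_k + (1-l_k){\bm u}_{l_k}({\bm y}_k)$ and collect terms, yielding
\begin{equation*}
{\bm x}_{k+1} - {\bm x}_k = (1-l_k)(\boldsymbol{\xi} - {\bm x}_k) + l_k h_k({\bm x}_k - {\bm x}_{k-1}) - l_k \gamma_k \nabla f({\bm w}_k) + (1-l_k){\bm u}_{l_k}({\bm y}_k).
\end{equation*}
I would then Taylor-expand $\nabla f({\bm w}_k)$ around ${\bm x}_k$ (using ${\bm w}_k - {\bm x}_k = h_k({\bm x}_k - {\bm x}_{k-1})$) and ${\bm u}_{l_k}({\bm y}_k)$ around ${\bm x}_k$ as in \cref{eq:taylor_u}. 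Combined with $\gamma_k = O(1-l_k)$ and the Lipschitz assumptions on ${\bm u}_t$ and $\nabla f$, the remainder and cross terms collapse into an $o(1-l_k)$ residual. Setting $\beta_k = l_k\gamma_k/(1-l_k)$, this reduces to
\begin{equation*}
({\bm x}_{k+1} - {\bm x}_k) - l_k h_k({\bm x}_k - {\bm x}_{k-1}) = (1-l_k)\bigl(-{\bm x}_k - \beta_k \nabla f({\bm x}_k) + {\bm u}_{l_k}({\bm x}_k)\bigr) + (1-l_k)\boldsymbol{\xi} + o(1-l_k),
\end{equation*}
which is precisely the momentum-shifted analog of \cref{eq:discrete-iter}.

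Next, I would pass to the continuous limit with $\Delta t = 1-l_k$, invoking the ansatz ${\bm x}_k \approx X_t$ together with the identifications $l_k h_k \to \alpha(t)$, $\beta_k \to \beta(t)$, ${\bm u}_{l_k}(X_t) \to {\bm u}_t(X_t)$, and matching $(1-l_k)\boldsymbol{\xi}$ with the Euler--Maruyama increment $\sigma(t)(W_{t+\Delta t}-W_t)$ just as in the derivation of \cref{SDE}. Since ${\bm x}_{k+1}-{\bm x}_k$ and ${\bm x}_k-{\bm x}_{k-1}$ are consecutive one-step updates with vanishing step size, in the limit both are absorbed into the common infinitesimal $dX_t$, giving
\begin{equation*}
(1-\alpha(t))\, dX_t = {\bm b}_t(X_t)\, dt + \sigma(t)\, dW_t,
\end{equation*}
with ${\bm b}_t(X_t) = -X_t - \beta(t)\nabla f(X_t) + {\bm u}_t(X_t)$. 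Dividing through by $1-\alpha(t)$, which is strictly positive by Proposition \ref{prop:con} (since $h_k \in [0,\zeta]$ with $\zeta \in (0,1)$ and $l_k \le 1$), yields \cref{ASDE}.

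The main obstacle is the treatment of the momentum term ${\bm x}_k - {\bm x}_{k-1}$ in the continuous limit: since SDE sample paths are nowhere differentiable, identifying both ${\bm x}_{k+1} - {\bm x}_k$ and ${\bm x}_k - {\bm x}_{k-1}$ with the same $dX_t$ is delicate. My approach sidesteps this difficulty by adopting the same formal surrogate viewpoint already used for Proposition \ref{prop:SDE-PnP-FM}: both increments share leading-order drift of size $\Delta t$ and Brownian contribution of size $\sqrt{\Delta t}$, so their difference is subsumed into the lower-order residual, and each is identified with the common differential $dX_t$. The proposition is thus interpreted as a continuous-limit surrogate rather than a fully rigorous weak-convergence statement, consistent with the treatment throughout Section \ref{sec:SDE-limit}.
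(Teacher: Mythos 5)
Your derivation follows the paper's skeleton: write out ${\bm x}_{k+1}-{\bm x}_k$, Taylor-expand $\nabla f({\bm w}_k)$ and ${\bm u}_{l_k}({\bm y}_k)$ around ${\bm x}_k$, absorb cross terms into $o(1-l_k)$, set $\beta_k = l_k\gamma_k/(1-l_k)$, and pass to the limit via the ansatz ${\bm x}_k \approx X_t$. That is all consistent with the paper.

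However, there is a concrete slip in how you handle the momentum term, and it changes the identified coefficient $\alpha(t)$. You treat ${\bm x}_{k+1}-{\bm x}_k$ and ${\bm x}_k-{\bm x}_{k-1}$ as the same infinitesimal $dX_t$ and read off $\alpha(t)=\lim l_k h_k$. But the step sizes are non-uniform: the forward increment satisfies ${\bm x}_{k+1}-{\bm x}_k = (1-l_k)\dot X(t)+o(1-l_k)$, whereas the backward increment satisfies ${\bm x}_k-{\bm x}_{k-1} = (1-l_{k-1})\dot X(t)+o(1-l_{k-1})$ — same derivative, different prefactor. Substituting both and dividing by $(1-l_k)$ gives
\begin{equation*}
\Bigl(1 - \frac{l_k h_k (1-l_{k-1})}{1-l_k}\Bigr)\dot X(t) = \boldsymbol{\xi} - X(t) - \beta_k\nabla f(X(t)) + {\bm u}_{l_k}(X(t)) + o(1),
\end{equation*}
so the correct identification is $\alpha_k = \frac{l_k h_k (1-l_{k-1})}{1-l_k}$, which the paper uses and which feeds directly into the remark after the proposition ($\alpha_k = \frac{1-\lambda^k}{\lambda}h_k$ for the geometric schedule). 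Your $\alpha(t)=\lim l_k h_k$ drops the step-ratio factor $(1-l_{k-1})/(1-l_k)$; with the paper's geometric schedule this factor is $1/\lambda \neq 1$, so the discrepancy does not vanish in the limit. The SDE form you obtain is the right one, but the coefficient is off, which matters in \cref{thm:acc_con} where $\alpha(t)$ appears explicitly. To repair: keep the two step sizes distinct when identifying $dX_t$ on each side, exactly because the schedule $1-l_k$ is not constant.
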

\begin{proof} 
    Clearly, we have
    \begin{equation}\label{eq:difx}
        \begin{aligned}
            &\quad \ {\bm x}_{k+1} -{\bm x}_k  \\
            & = (1 - l_k)(\boldsymbol{\xi}-{\bm x}_k)+l_kh_k({\bm x}_{k}-{\bm x}_{k-1})-l_k\gamma_k\nabla f({\bm w}_k) + (1-l_k){\bm u}_{l_k}({\bm y}_k),
        \end{aligned}
    \end{equation}
    and 
    \begin{equation}\label{eq:nablaf}
        \begin{aligned}
            \nabla f({\bm w}_k) = \nabla f({\bm x}_k) +\nabla^2f({\bm x}_k)h_k({\bm x}_{k}-{\bm x}_{k-1}) + o(h_k({\bm x}_{k}-{\bm x}_{k-1})),
        \end{aligned}
    \end{equation}
    
    \begin{equation}\label{eq:u}
        \begin{aligned}
            {\bm u}_{l_k}({\bm y}_k) &= {\bm u}_{l_k}({\bm x}_k) + \nabla {\bm u}_{l_k}({\bm x}_k)({\bm y}_k -{\bm x}_k) + o({\bm y}_k -{\bm x}_k),  \\
            \text{where}  \ {\bm y}_k -{\bm x}_k & =  (1 - l_k) \boldsymbol{\xi} + l_k{\bm z}_k - {\bm x}_k \\
            & = (1 - l_k)(\boldsymbol{\xi}-{\bm x}_k) + l_kh_k({\bm x}_k-{\bm x}_{k-1}))-l_k\gamma_k\nabla f({\bm w}_k).
        \end{aligned}
    \end{equation}
    When selecting sufficiently small $\gamma_k, h_k$ such that $(1-l_k)({\bm y}_k -{\bm x}_k) = o(1-l_k) $ and $\gamma_kh_k = o(1-l_k)$, then combine \cref{eq:difx,eq:nablaf,eq:u} together, we have
    \begin{equation}
        \begin{aligned}
             & \quad {\bm x}_{k+1} -{\bm x}_k  \\
            & = (1 - l_k)(\boldsymbol{\xi}-{\bm x}_k)+l_kh_k({\bm x}_{k}-{\bm x}_{k-1})-l_k\gamma_k\nabla f({\bm w}_k) + (1-l_k){\bm u}_{l_k}({\bm y}_k) \\
            & = (1 - l_k)(\boldsymbol{\xi}-{\bm x}_k)+l_kh_k({\bm x}_{k}-{\bm x}_{k-1}) - l_k\gamma_k(\nabla f({\bm x}_k) +\nabla^2f({\bm x}_k)h_k({\bm x}_{k}-{\bm x}_{k-1})) \\
            &\quad - l_k\gamma_ko(h_k({\bm x}_{k}-{\bm x}_{k-1}))) + (1-l_k)({\bm u}_{l_k}({\bm x}_k) + \nabla {\bm u}_{l_k}({\bm x}_k)({\bm y}_k -{\bm x}_k) + o({\bm y}_k -{\bm x}_k)) \\
            & = (1 - l_k)(\boldsymbol{\xi}-{\bm x}_k)+l_kh_k({\bm x}_{k}-{\bm x}_{k-1}) - l_k\gamma_k\nabla f({\bm x}_k)+(1-l_k){\bm u}_{l_k}({\bm x}_k) +o(1-l_k).
        \end{aligned}
    \end{equation}
    
    Set $\beta_k  = \frac{l_k \gamma_k}{1 - l_k}$, we have 
    \begin{equation}
        {\bm x}_{k+1} -{\bm x}_k = (1 - l_k)(\boldsymbol{\xi}-{\bm x}_k - \beta_k\nabla f({\bm x}_k)+{\bm u}_{l_k}({\bm x}_k))+l_kh_k({\bm x}_{k}-{\bm x}_{k-1}) +o(1-l_k)
    \end{equation}
    
    Again, introducing \textit{Ansatz} $ {\bm x}_k \approx X_t $ where $X_t $ is a smooth curve defined for $ t \geq 0 $. As the step size $ 1 - l_k \to 0 $, we have $ X_t \approx {\bm x}_k $, $ X(t + 1-l_k) \approx {\bm x}_{k+1}$, $ X(t - (1-l_{k-1})) \approx {\bm x}_{k-1}$. And Taylor expansion gives
    \begin{equation}
        \begin{aligned}
            {\bm x}_{k+1}-{\bm x}_{k}  &=  (1-l_k)\dot{X}(t) + o(1-l_k) \\
            {\bm x}_{k}-{\bm x}_{k-1}  &= (1-l_{k-1})\dot{X}(t) + o(1-l_{k-1})
        \end{aligned}
    \end{equation}
    Therefore, 
    \begin{equation}
    \begin{aligned}
            &\quad ((1-l_k)-l_kh_k(1-l_{k-1}))\dot{X}(t) \\
            & = (1 - l_k)(\boldsymbol{\xi}-X(t) - \beta_k\nabla f(X(t))+{\bm u}_{l_k}(X(t)))+o(1-l_k) +o(1-l_{k-1})\\
    \end{aligned}
    \end{equation}
    Set $\alpha_k = \frac{l_kh_k(1-l_{k-1})}{1-l_k}$, when $1-l_k\to 0$, we get 
    \begin{equation}
        \begin{aligned}
            (1-\alpha_k)\dot{X}(t) =  \boldsymbol{\xi}-X(t) - \beta_k\nabla f(X(t))+{\bm u}_{l_k}(X(t)).
        \end{aligned}
    \end{equation}
    Replacing $\alpha_k, \beta_k, X(t)$ with $\alpha(t), \beta(t), X_t$, and introducing $\sigma(t)$ yields
    \begin{equation}
        dX_t = \frac{{\bm b}_t(X_t)}{1-\alpha(t)}dt + \frac{\sigma(t)}{1-\alpha(t)}dW_t,
    \end{equation}
    where ${\bm b}_t(X_t) = -X_t - \beta(t) \nabla f(X_t) + {\bm v}(t,X_t)$. The above SDE is in fact a recaled version of \cref{SDE} with the scaling factor $\frac{1}{1-\alpha(t)}$. $\frac{1}{1-\alpha(t)}$ increase with $h_k$, however, to ensure the convergence of the sequence $\{{\bm x}_k\}$, $h_k$ can be sufficiently small but it is upper bounded.
    
    This completes the proof.
\end{proof}
\begin{remark}
    Since we set $\alpha_k = \frac{l_kh_k(1-l_{k-1})}{1-l_k}$, the range of $\alpha_k$ directly depends on the value of $h_k$ and the schedule for $1-l_k$. In the setting in \cref{eq:lk}, we see that $\alpha_k = \frac{1-\lambda^k}{\lambda}h_k$. When selecting proper $0\ll \lambda <1$ and $0\le h_k\ll 1$, we can ensure that $0\le \alpha_k < 1$.
\end{remark}

\begin{restatable}{theorem}{thmAccCon}\label{thm:acc_con}
    Under \cref{ass1} and \ref{ass2}, the solution $X_T$ of \cref{ASDE} with the initial condition $X_{t_0}$ satisfies,
    \begin{equation}\label{acc_convergence}
        \begin{aligned}
            & \mathbb{E}[\|X_t-X_T\|^2] &\leq A_{\alpha} + 2\mathbb{E}[\int^t_{t_0}\frac{B(s)}{1-\alpha(s)}\|X_s-X_T\|^2ds],\ \ \forall t\geq t_0,
        \end{aligned}
    \end{equation}
where $A_{\alpha} = \mathbb{E}[\|X_{t_0}-X_T\|^2] + \int^t_{t_0}\frac{n\sigma^2(s)+ \frac{M^2_f\beta(s)}{2\eta}}{(1-\alpha(s))^2}ds$ and 
$$
B(s) = 
    \begin{cases}
           &   -1+L_u+\beta(s)L_f+\frac{\eta\beta(s)}{2}, \ \text{if $f(\cdot)$ is Lipschitz differentiable with $L_f$}. \\
           & -1+L_u+\frac{\eta\beta(s)}{2}, \ \hspace{1.5cm} \text{if $f(\cdot)$ is a convex function}. \\
           & -1+L_u-\beta(s)\mu_f+\frac{\eta\beta(s)}{2}, \ \text{if $f(\cdot)$ is a $\mu_f$-strongly convex function}.
    \end{cases}
$$
\end{restatable}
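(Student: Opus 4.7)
The plan is to mirror the proof of \cref{thm:convergence} while carefully tracking how the scaling factor $1/(1-\alpha(t))$ propagates through the Itô calculus, since \cref{ASDE} is precisely a rescaled version of \cref{SDE}. First, I would apply Itô's lemma to the function $\mathcal{E}(t,X_t)=\|X_t-X_T\|^2$ driven by \cref{ASDE}, yielding
\begin{equation*}
d\mathcal{E}(t,X_t)=\frac{2}{1-\alpha(t)}\langle X_t-X_T,\tilde{{\bm b}}_t(X_t)\rangle dt+\frac{2\sigma(t)}{1-\alpha(t)}\langle X_t-X_T,dW_t\rangle+\frac{n\sigma^2(t)}{(1-\alpha(t))^2}dt.
\end{equation*}
Then, as in the proof of \cref{thm:convergence}, I would use the terminal consistency relation $\tilde{{\bm b}}_T(X_T)dt+\sigma(T)dW_T=0$ (following from $1-l_k\to 0$, $\beta(T)=\sigma(T)=0$) to rewrite the drift cross term as $\langle X_t-X_T,\tilde{{\bm b}}_t(X_t)-\tilde{{\bm b}}_T(X_T)\rangle$ before integrating.

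Next, I would integrate from $t_0$ to $t$ and take expectations. The stochastic integral is a mean-zero martingale under the boundedness and Lipschitz assumptions in \cref{ass1} and \cref{ass2}, so it drops out. This leaves
\begin{equation*}
\mathbb{E}[\|X_t-X_T\|^2]=\mathbb{E}[\|X_{t_0}-X_T\|^2]+\int_{t_0}^{t}\frac{n\sigma^2(s)}{(1-\alpha(s))^2}ds+2\mathbb{E}\!\left[\int_{t_0}^{t}\frac{\langle X_s-X_T,\tilde{{\bm b}}_s(X_s)-\tilde{{\bm b}}_T(X_T)\rangle}{1-\alpha(s)}ds\right].
\end{equation*}
Now I would invoke the inner-product estimate that was derived inside the proof of \cref{thm:convergence}: combining Cauchy--Schwarz on the learned vector-field difference with the $L_u$-Lipschitz bound from \cref{ass2}, plus Young's inequality on $\beta(s)\langle X_s-X_T,\nabla f(X_T)\rangle$ with parameter $\eta>0$, one obtains
\begin{equation*}
\langle X_s-X_T,\tilde{{\bm b}}_s(X_s)-\tilde{{\bm b}}_T(X_T)\rangle\le B(s)\|X_s-X_T\|^2+\frac{M_f^2\beta(s)}{2\eta},
\end{equation*}
with $B(s)$ taking the three forms listed in the theorem according to whether $f$ is Lipschitz-differentiable, convex, or $\mu_f$-strongly convex; this is verbatim from the case analysis already carried out.

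Finally, dividing that bound by $1-\alpha(s)>0$ and plugging it back into the integrated identity groups the $\frac{M_f^2\beta(s)/(2\eta)}{(1-\alpha(s))^2}$ term (after factoring one $1-\alpha(s)$ from the drift and absorbing the other into the scaling, using $1/(1-\alpha(s))\le 1/(1-\alpha(s))^2$ is not needed since the intended $A_\alpha$ in the statement already carries $(1-\alpha(s))^{-2}$; I would simply bound $(1-\alpha(s))^{-1}\le (1-\alpha(s))^{-2}$ for the constant term to match the stated $A_\alpha$) with the variance term into $A_\alpha$, while leaving the $B(s)/(1-\alpha(s))$ factor on the Grönwall-type quadratic term. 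I expect the main obstacle to be bookkeeping this rescaling cleanly so that the final constants line up exactly with the statement: in particular, justifying that $1-\alpha(s)\in(0,1]$ throughout (which relies on the range of $h_k$ discussed in the remark after \cref{prop:ASDE}) so the division is well-defined and the inequality direction is preserved. Everything else is a routine translation of the \cref{thm:convergence} argument through the constant rescaling factor.
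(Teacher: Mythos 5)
Your proposal follows the paper's own argument essentially verbatim: apply Itô's lemma to $\mathcal{E}(t,X_t)=\|X_t-X_T\|^2$ driven by \cref{ASDE}, carry the $1/(1-\alpha(t))$ rescaling through the drift and diffusion terms, use the terminal relation $\tilde{{\bm b}}_T(X_T)dt+\sigma(T)dW_T=0$, integrate and drop the martingale, and then reuse the inner-product bound from the proof of \cref{thm:convergence} to separate the quadratic Grönwall term from the constant term. The one place where you go slightly beyond the paper is in noting explicitly that matching the stated $A_\alpha$ requires bounding $(1-\alpha(s))^{-1}\le(1-\alpha(s))^{-2}$ (which holds since $\alpha(s)\ge 0$); the paper simply writes down $A_\alpha$ without this remark, so your version is the more careful reading.
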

\begin{proof}
 Define the function $\mathcal{E}(t,X_t) := \|X_t-X_T\|^2$, taking the integral form, we have
\begin{equation}
    \begin{aligned}
        \mathbb{E}[\mathcal{E}(t,X_t)] 
        & = \mathbb{E}[\|X_{t_0}-X_T\|^2] + \int^t_{t_0}\frac{n\sigma^2(s)}{(1-\alpha(s))^2}ds \\
        & \quad + 2\mathbb{E}[\int^t_{t_0} \frac{1}{1-\alpha(s)}\langle X_s-X_T, \tilde{{\bm b}}_s(X_s) - \tilde{{\bm b}}_T(X_T)\rangle  ds], 
    \end{aligned}
\end{equation}
and 
\begin{equation}
    \begin{aligned}
        & \langle X_s-X_T, \tilde{{\bm b}}_s(X_s) - \tilde{{\bm b}}_T(X_T)\rangle   \\
        \le &(-1+L_u)\|X_s-X_T\|^2 - \beta(s)\langle X_s-X_T, \nabla f(X_s) - \nabla f(X_T)\rangle \\
        & + \beta(s)(\frac{\eta}{2}\|X_s-X_T\|^2+\frac{M_f^2}{2\eta}) .
    \end{aligned}
\end{equation}
Let $A_{\alpha} = \mathbb{E}[\|X_{t_0}-X_T\|^2] + \int^t_{t_0}\frac{n\sigma^2(s)+ \frac{M^2_f\beta(s)}{2\eta}}{(1-\alpha(s))^2}ds$ we have
\begin{equation}
    \begin{aligned}
        \mathbb{E}[\|X_t-X_T\|^2] &\leq A_{\alpha} + 2\mathbb{E}[\int^t_{t_0}\frac{B(s)}{1-\alpha(s)}\|X_s-X_T\|^2ds].
    \end{aligned}
\end{equation}

This completes the proof.
\end{proof}
The upper bound in \cref{convergence} becomes nontrivial when $\int_{t_0}^t B(s)\|X_s-X_T\|^2\,ds < 0$. Under the same setting, when \( \alpha(t) \) is a constant \( \alpha \in [0,1) \) , it is easy to verify that $\frac{1}{1 - \alpha} \int_{t_0}^t B(s)\|X_s-X_T\|^2\,ds \le \int_{t_0}^t B(s)\|X_s-X_T\|^2\,ds$, since \( \frac{1}{1 - \alpha} \ge 1 \). This shows that the extrapolation step in \cref{iter:acc} permits a wider admissible range of values for \( B(s) \) such that the upper bound remains nontrivial, thereby relaxing the conditions required on \( f \). In practice, we can select proper $\beta(t)$ and $\alpha(t)$ such that \( \frac{B(s)}{1 - \alpha(s)} \) is the dominant term determining the magnitude of the upper bound, as well as the convergence rate, leading to smaller $\|X_t-X_T\|^2$.

\section{Numerical Experiments}\label{sec:Experiments}
In this section, we validate the performance of our improved PnP-Flow (IPnP-Flow) method---as laid out in Section~\ref{sec:Algorithm}---using several benchmark image restoration tasks. The numerical experiments also aim to solidify our established theoretical analysis of PnP-Flow. We compare IPnP-Flow with baseline methods for various image restoration tasks in Section~\ref{subsec:exp-results}, and we perform ablation studies to investigate the effects of each additional component of IPnP-Flow over PnP-Flow in Section~\ref{subsec:ablation}. Addtional experimental details and results are presented in \cref{sec:appendix_experiment}.

\textbf{Experiment Setup:} Our implementation is based on the codes provided in the paper \cite{martin2024pnp} using PyTorch. All experiments are conducted on NVIDIA RTX 4090 GPUs. Training details and hyperparameters for different tasks are available in the \cref{sec:appendix_experiment}. For each task, we conduct five independent runs with different random seeds and report the mean performance ± standard deviation. 

\textbf{Image Restoration Tasks:} The experiments focus on denoising, deblurring, super-resolution, and inpainting tasks, conducted on two widely adopted datasets: CelebA \cite{yang2015facial} and AFHQ-Cat \cite{choi2020stargan}. To ensure a fair comparison with PnP-Flow (the baseline our work primarily improves upon), we adopt identical experimental settings from \cite{martin2024pnp}, including dataset splits (training/validation/test), model architectures, task settings, and evaluation protocols.

\textbf{Baseline Methods:} We compare our approach against the original PnP-Flow \cite{martin2024pnp} and other SOTA methods, including OT-ODE \cite{pokle2023training}, D-Flow \cite{ben2024d}, Flow priors \cite{zhang2024flow}, PnP-Diff \cite{zhu2023denoising}, and PnP-FBS \cite{hurault2021gradient}, demonstrating improvements in restoration quality and efficiency.

\subsection{Image Restoration Results}\label{subsec:exp-results}

\begin{figure}[!ht]
    \centering

    \begin{minipage}[b]{0.13\textwidth}
        \centering
        \small Clean
    \end{minipage}
    \begin{minipage}[b]{0.13\textwidth}
        \centering
        \small Degraded
    \end{minipage}
    \begin{minipage}[b]{0.13\textwidth}
        \centering
        \small PnP-GS
    \end{minipage}
    \begin{minipage}[b]{0.13\textwidth}
        \centering
        \small OT-ODE
    \end{minipage}
    \begin{minipage}[b]{0.13\textwidth}
        \centering
        \small Flow-Priors
    \end{minipage}
    \begin{minipage}[b]{0.13\textwidth}
        \centering
        \small PnP-Flow
    \end{minipage}
    \begin{minipage}[b]{0.13\textwidth}
        \centering
        \small Ours
    \end{minipage}

\noindent\begin{minipage}[t]{0.13\textwidth}
  \vspace{0pt}  
  \centering
  \includegraphics[width=\linewidth]{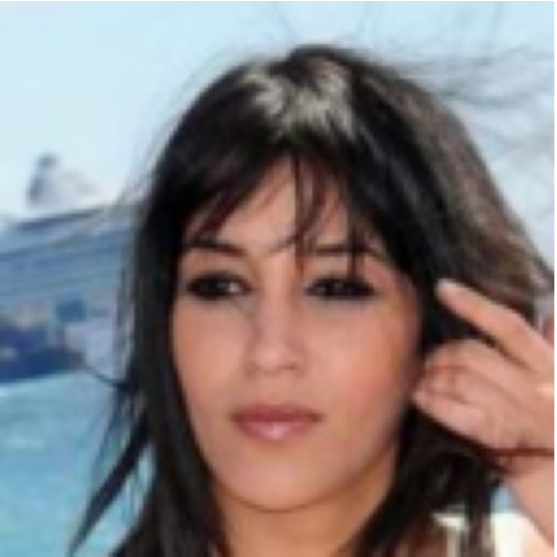}\\
  \makebox[\linewidth][c]{\tiny Denoising}
\end{minipage}%
\hspace{0.01cm}
\begin{minipage}[t]{0.13\textwidth}
  \vspace{0pt}
  \centering
  \includegraphics[width=\linewidth]{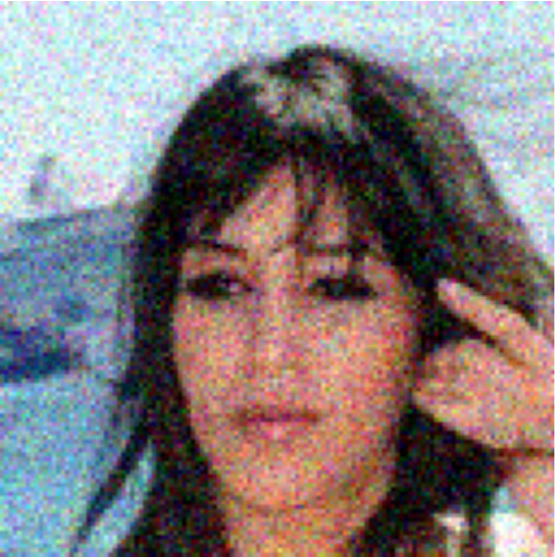}\\
  \noindent\makebox[\linewidth][c]{\tiny PSNR: 19.97}
\end{minipage}%
\hspace{0.01cm}
\begin{minipage}[t]{0.13\textwidth}
  \vspace{0pt}
  \centering
  \includegraphics[width=\linewidth]{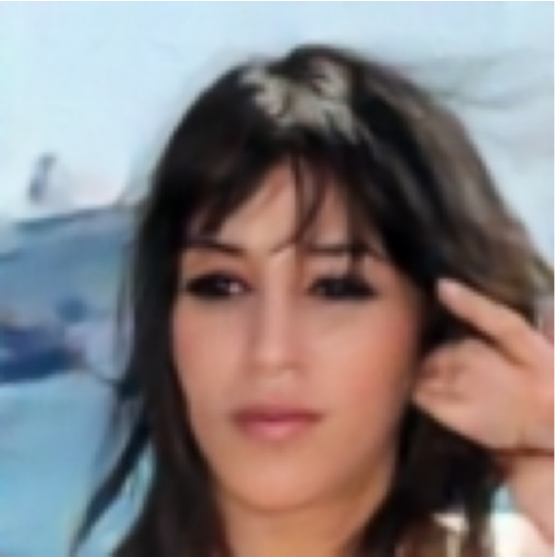}\\
  \noindent\makebox[\linewidth][c]{\tiny PSNR: 31.86}
\end{minipage}%
\hspace{0.01cm}
\begin{minipage}[t]{0.13\textwidth}
  \vspace{0pt}
  \centering
  \includegraphics[width=\linewidth]{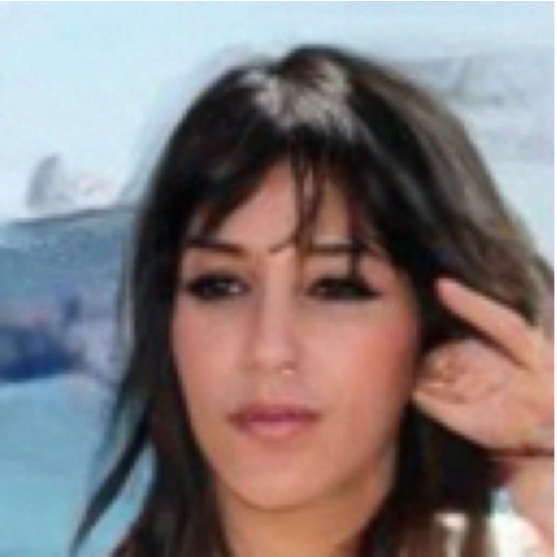}\\
  \noindent\makebox[\linewidth][c]{\tiny PSNR: 29.48}
\end{minipage}%
\hspace{0.01cm}
\begin{minipage}[t]{0.13\textwidth}
  \vspace{0pt}
  \centering
  \includegraphics[width=\linewidth]{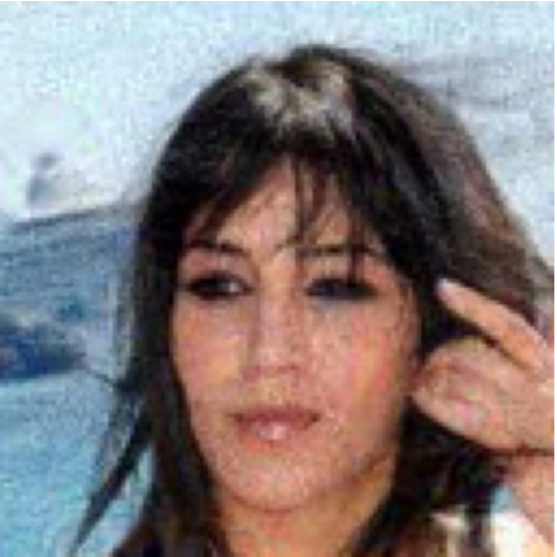}\\
  \noindent\makebox[\linewidth][c]{\tiny PSNR: 28.30}
\end{minipage}%
\hspace{0.01cm}
\begin{minipage}[t]{0.13\textwidth}
  \vspace{0pt}
  \centering
  \includegraphics[width=\linewidth]{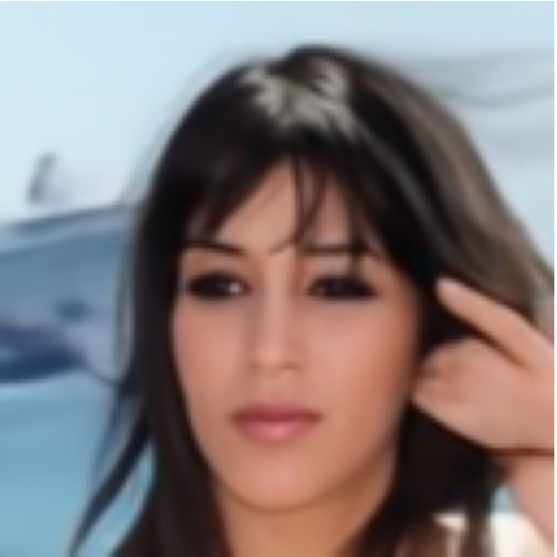}\\
  \noindent\makebox[\linewidth][c]{\tiny PSNR: 30.95}
\end{minipage}%
\hspace{0.01cm}
\begin{minipage}[t]{0.13\textwidth}
  \vspace{0pt}
  \centering
  \includegraphics[width=\linewidth]{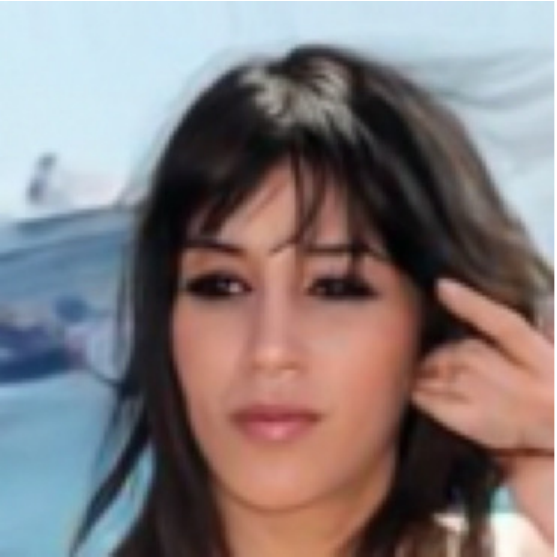}\\
  \noindent\makebox[\linewidth][c]{\tiny PSNR: \textbf{32.25}}
\end{minipage}


    \begin{minipage}[b]{0.13\textwidth}
    \captionsetup{skip=-0.11cm}
        \includegraphics[width=\linewidth]{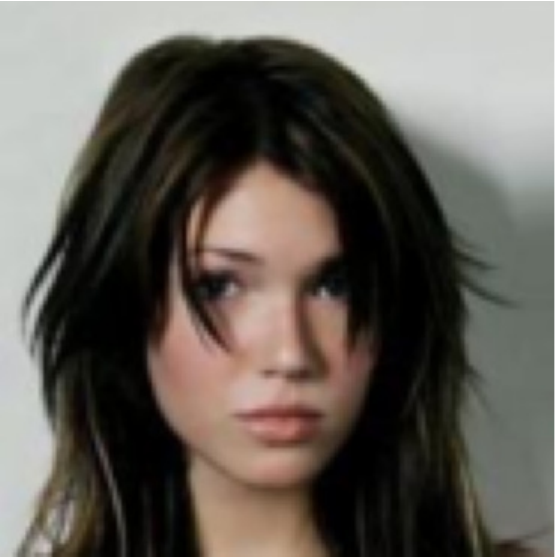}
       \makebox[\linewidth][c]{\tiny Deblurring }
    \end{minipage}
    \begin{minipage}[b]{0.13\textwidth}
    \captionsetup{skip=-0.11cm}
        \includegraphics[width=\linewidth]{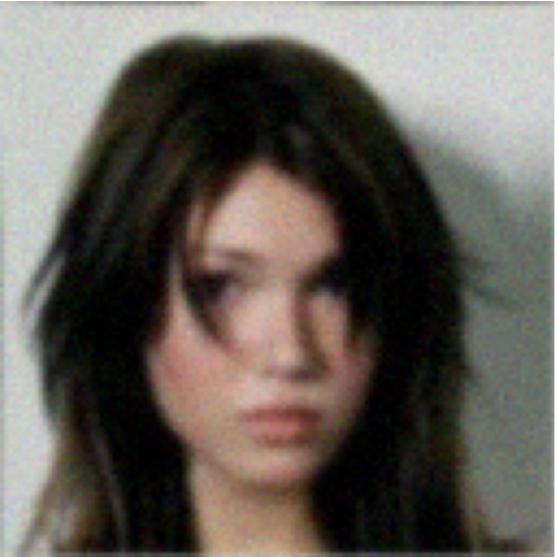}
       \makebox[\linewidth][c]{\tiny PSNR: 26.75}
    \end{minipage}
    \begin{minipage}[b]{0.13\textwidth}
    \captionsetup{skip=-0.11cm}
        \includegraphics[width=\linewidth]{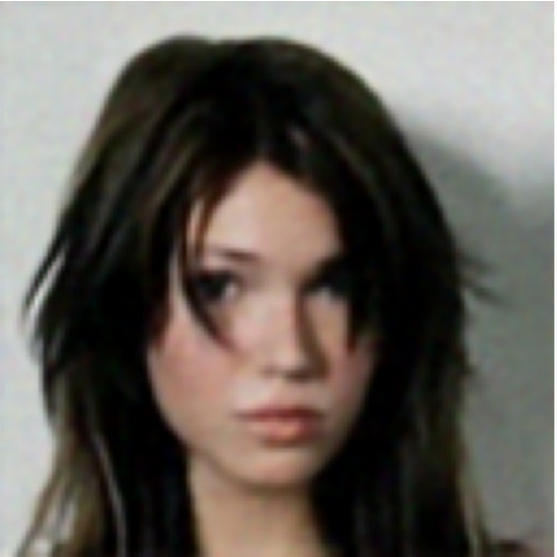}
       \makebox[\linewidth][c]{\tiny PSNR:  34.33}
    \end{minipage}
    \begin{minipage}[b]{0.13\textwidth}
    \captionsetup{skip=-0.11cm}
        \includegraphics[width=\linewidth]{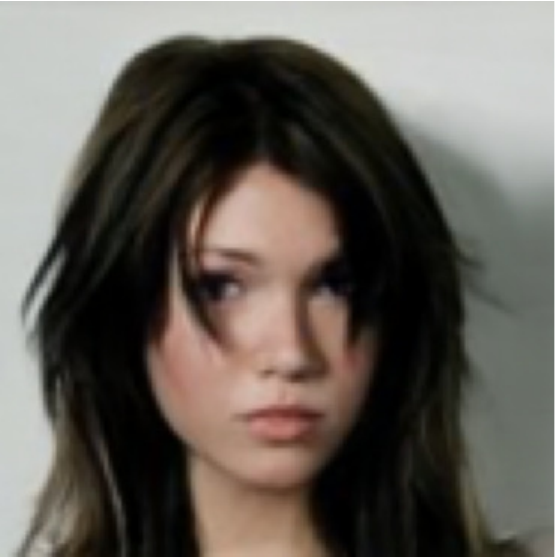}
       \makebox[\linewidth][c]{\tiny PSNR:  33.33}
    \end{minipage}
    \begin{minipage}[b]{0.13\textwidth}
    \captionsetup{skip=-0.11cm}
        \includegraphics[width=\linewidth]{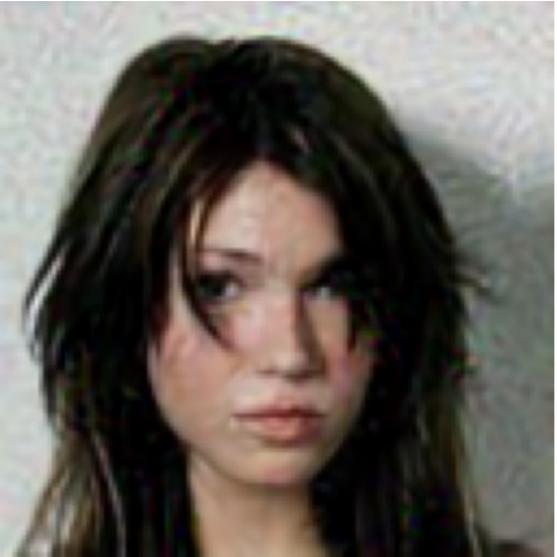}
       \makebox[\linewidth][c]{\tiny PSNR:  31.59}
    \end{minipage}
    \begin{minipage}[b]{0.13\textwidth}
    \captionsetup{skip=-0.11cm}
        \includegraphics[width=\linewidth]{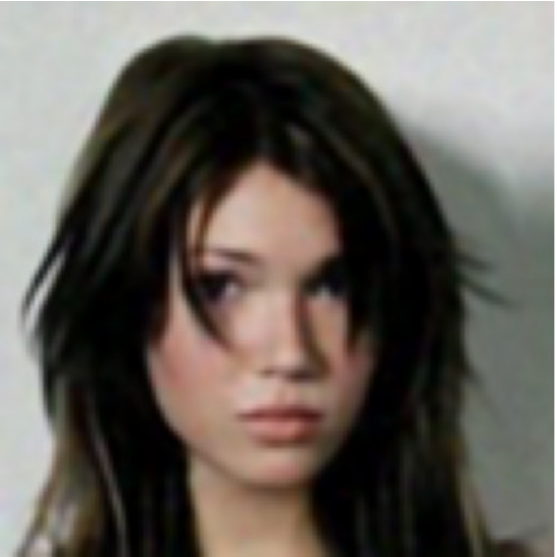}
       \makebox[\linewidth][c]{\tiny PSNR:  34.93}
    \end{minipage}
    \begin{minipage}[b]{0.13\textwidth}
    \captionsetup{skip=-0.11cm}
        \includegraphics[width=\linewidth]{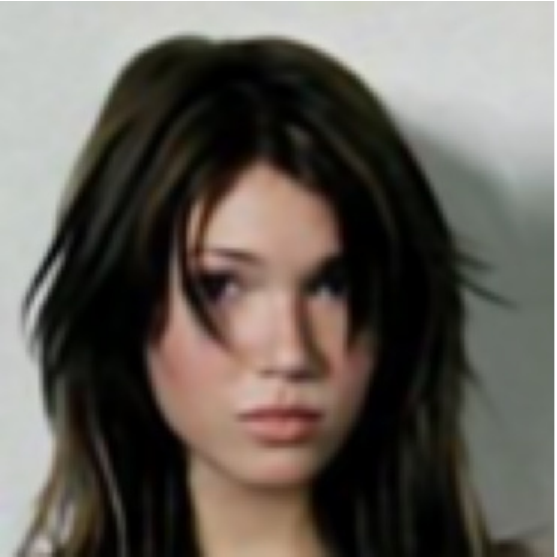}
       \makebox[\linewidth][c]{\tiny PSNR: \textbf{35.98}}
    \end{minipage}


    \begin{minipage}[b]{0.13\textwidth}
    \captionsetup{skip=-0.11cm}
        \includegraphics[width=\linewidth]{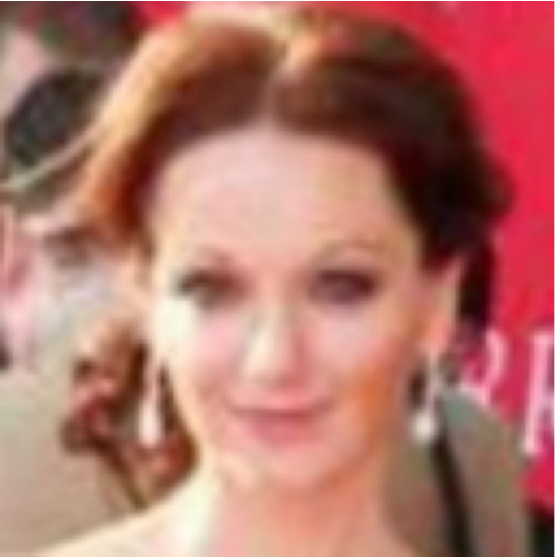}
       \makebox[\linewidth][c]{\tiny Super-resolution }
    \end{minipage}
    \begin{minipage}[b]{0.13\textwidth}
    \captionsetup{skip=-0.11cm}
        \includegraphics[width=\linewidth]{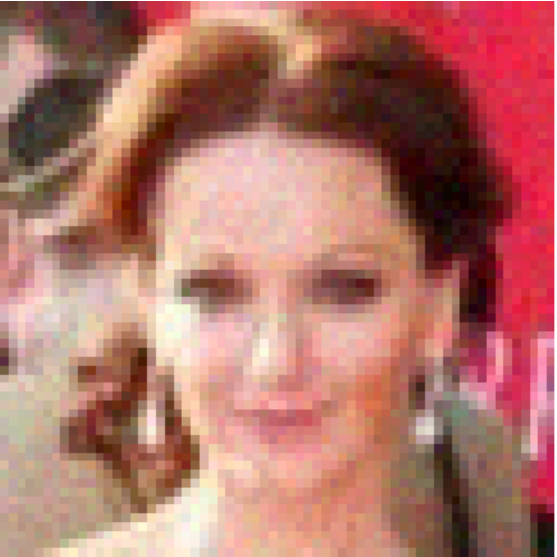}
       \makebox[\linewidth][c]{\tiny PSNR:  8.98}
    \end{minipage}
    \begin{minipage}[b]{0.13\textwidth}
    \captionsetup{skip=-0.11cm}
        \includegraphics[width=\linewidth]{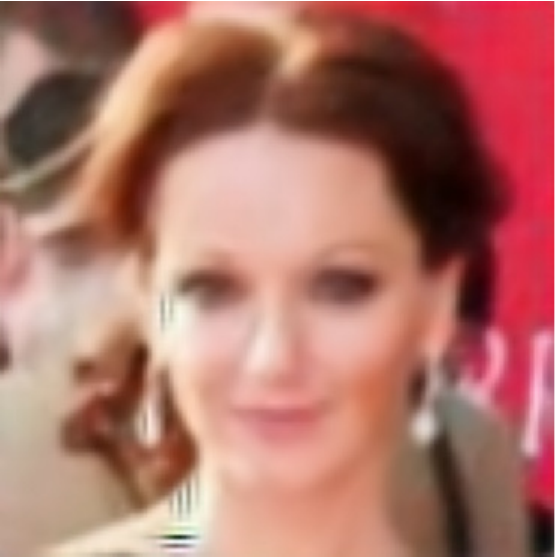}
       \makebox[\linewidth][c]{\tiny PSNR:  30.57}
    \end{minipage}
    \begin{minipage}[b]{0.13\textwidth}
    \captionsetup{skip=-0.11cm}
        \includegraphics[width=\linewidth]{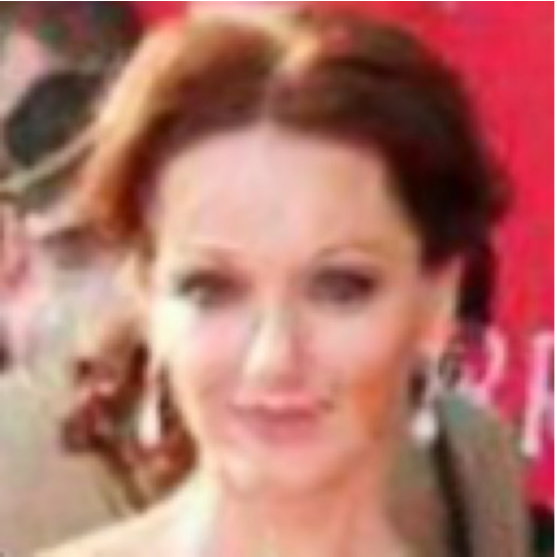}
       \makebox[\linewidth][c]{\tiny PSNR:  34.31}
    \end{minipage}
    \begin{minipage}[b]{0.13\textwidth}
    \captionsetup{skip=-0.11cm}
        \includegraphics[width=\linewidth]{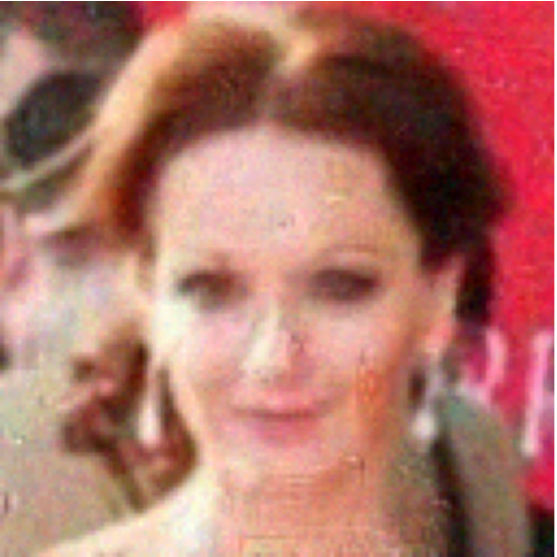}
       \makebox[\linewidth][c]{\tiny PSNR:  29.01}
    \end{minipage}
    \begin{minipage}[b]{0.13\textwidth}
    \captionsetup{skip=-0.11cm}
        \includegraphics[width=\linewidth]{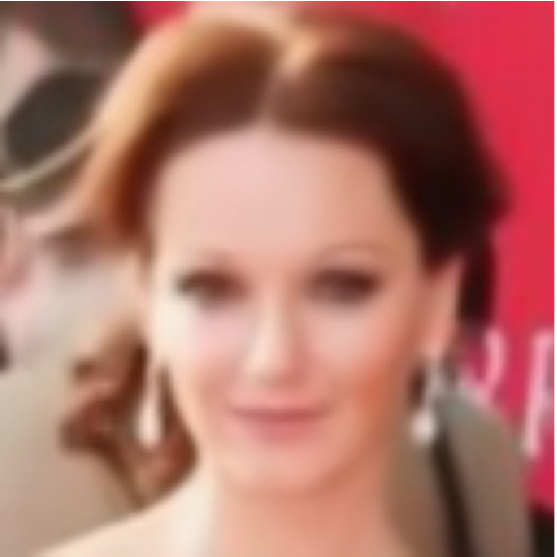}
       \makebox[\linewidth][c]{\tiny PSNR:  32.25}
    \end{minipage}
    \begin{minipage}[b]{0.13\textwidth}
    \captionsetup{skip=-0.11cm}
        \includegraphics[width=\linewidth]{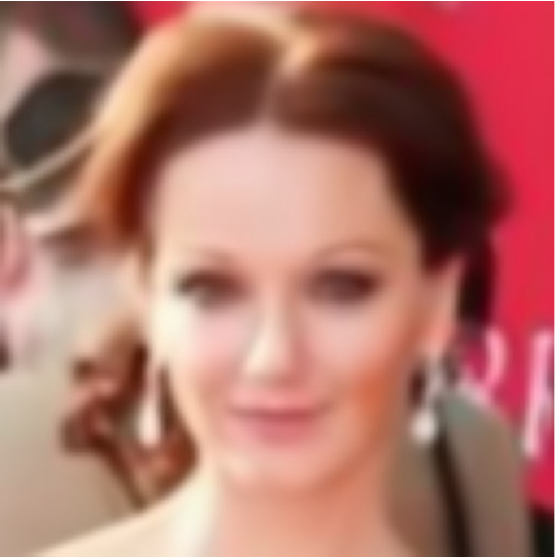}
       \makebox[\linewidth][c]{\tiny PSNR:  \textbf{34.56}}
    \end{minipage}


    \begin{minipage}[b]{0.13\textwidth}
    \captionsetup{skip=-0.11cm}
        \includegraphics[width=\linewidth]{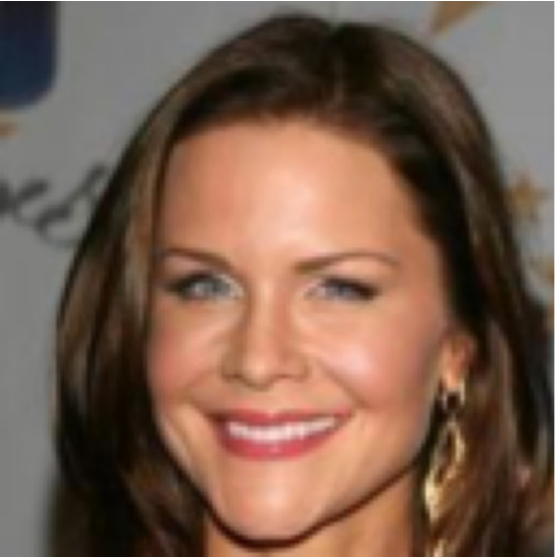}
       \makebox[\linewidth][c]{\tiny Rand. Inpainting }
    \end{minipage}
    \begin{minipage}[b]{0.13\textwidth}
    \captionsetup{skip=-0.11cm}
        \includegraphics[width=\linewidth]{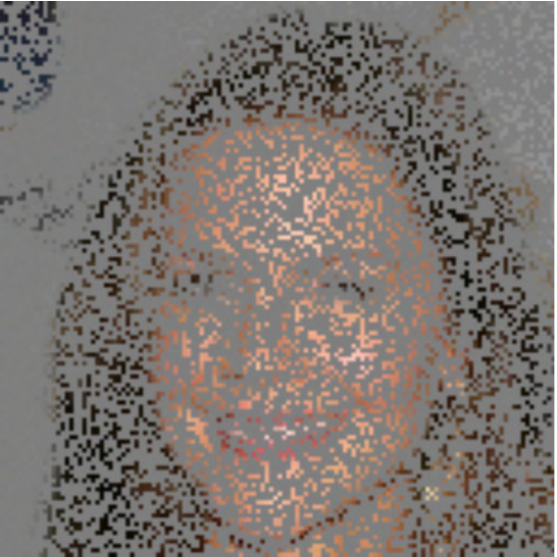}
       \makebox[\linewidth][c]{\tiny PSNR: 13.09}
    \end{minipage}
    \begin{minipage}[b]{0.13\textwidth}
    \captionsetup{skip=-0.11cm}
        \includegraphics[width=\linewidth]{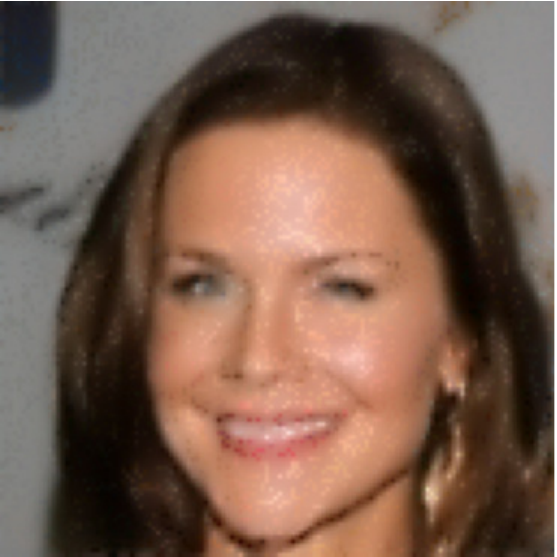}
       \makebox[\linewidth][c]{\tiny PSNR: 29.06}
    \end{minipage}
    \begin{minipage}[b]{0.13\textwidth}
    \captionsetup{skip=-0.11cm}
        \includegraphics[width=\linewidth]{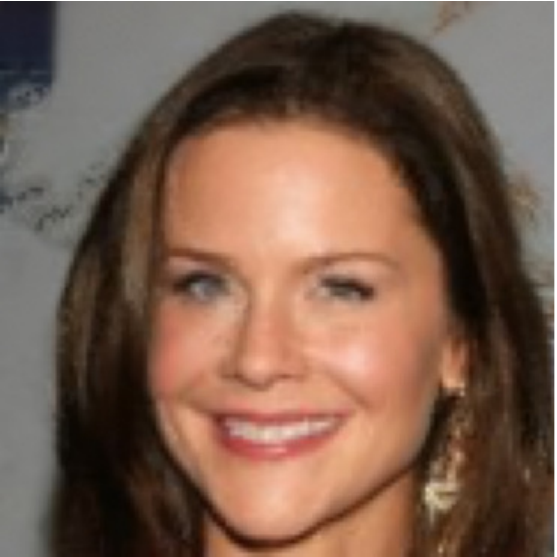}
       \makebox[\linewidth][c]{\tiny PSNR: 28.70}
    \end{minipage}
    \begin{minipage}[b]{0.13\textwidth}
    \captionsetup{skip=-0.11cm}
        \includegraphics[width=\linewidth]{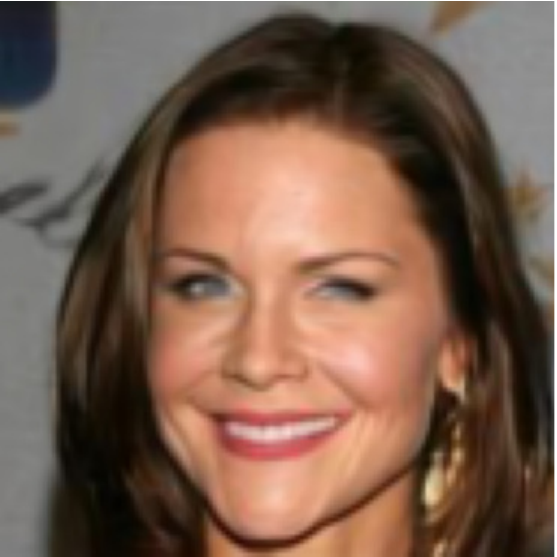}
       \makebox[\linewidth][c]{\tiny PSNR: 32.78}
    \end{minipage}
    \begin{minipage}[b]{0.13\textwidth}
    \captionsetup{skip=-0.11cm}
        \includegraphics[width=\linewidth]{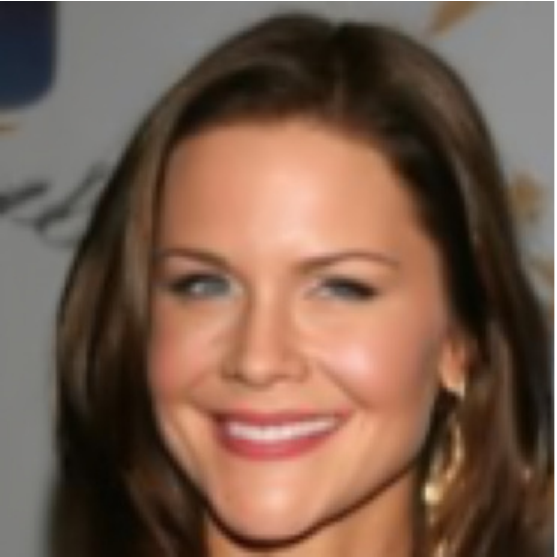}
       \makebox[\linewidth][c]{\tiny PSNR: 33.51}
    \end{minipage}
    \begin{minipage}[b]{0.13\textwidth}
        \captionsetup{skip=-0.11cm}
        \includegraphics[width=\linewidth]{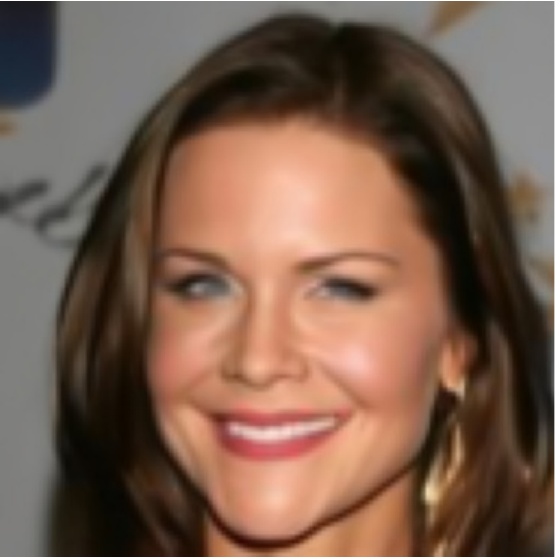}
       \makebox[\linewidth][c]{\tiny PSNR: \textbf{34.72}}
    \end{minipage}


    \begin{minipage}[b]{0.13\textwidth}
    \captionsetup{skip=-0.11cm}
        \includegraphics[width=\linewidth]{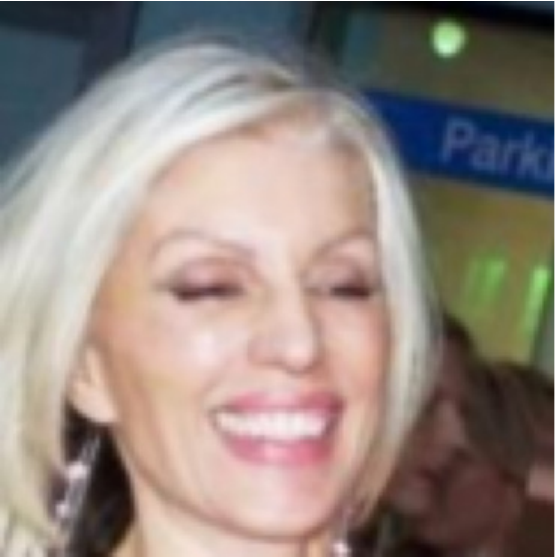}
       \makebox[\linewidth][c]{\tiny Box Inpainting }
    \end{minipage}
    \begin{minipage}[b]{0.13\textwidth}
    \captionsetup{skip=-0.11cm}
        \includegraphics[width=\linewidth]{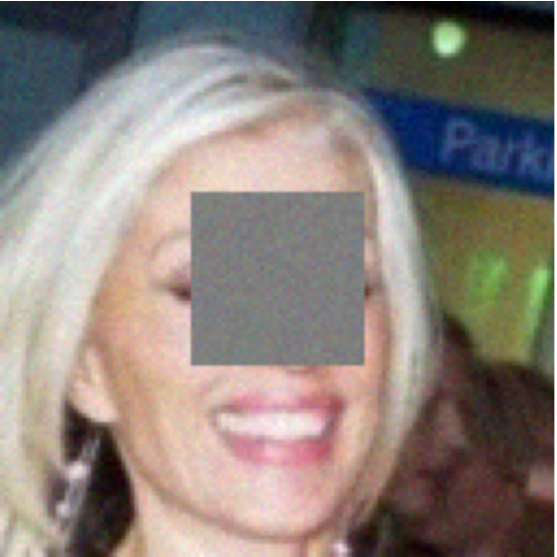}
       \makebox[\linewidth][c]{\tiny PSNR: 20.58}
    \end{minipage}
    \begin{minipage}[b]{0.13\textwidth}
    \captionsetup{skip=-0.11cm}
        \phantom{}
       \makebox[\linewidth][c]{\tiny PSNR: N/A}
    \end{minipage}
    \begin{minipage}[b]{0.13\textwidth}
    \captionsetup{skip=-0.11cm}
        \includegraphics[width=\linewidth]{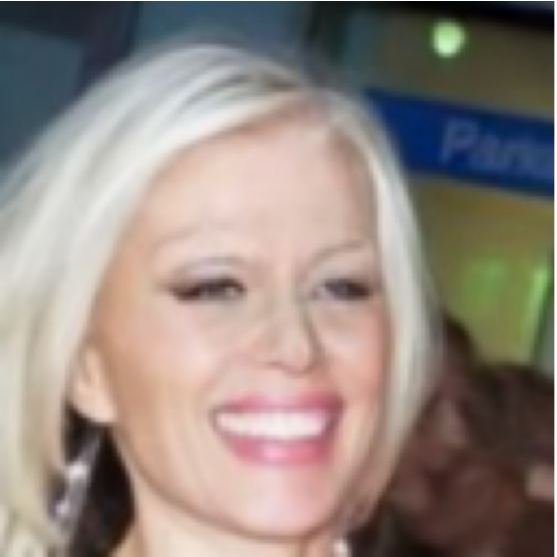}
       \makebox[\linewidth][c]{\tiny PSNR: 31.95}
    \end{minipage}
    \begin{minipage}[b]{0.13\textwidth}
    \captionsetup{skip=-0.11cm}
        \includegraphics[width=\linewidth]{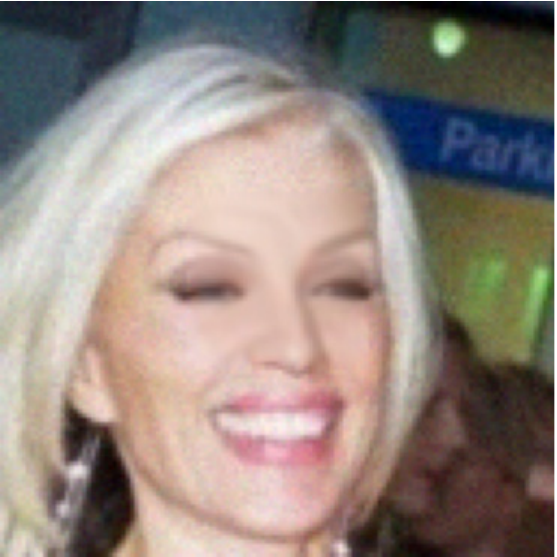}
       \makebox[\linewidth][c]{\tiny PSNR: 32.61}
    \end{minipage}
    \begin{minipage}[b]{0.13\textwidth}
    \captionsetup{skip=-0.11cm}
        \includegraphics[width=\linewidth]{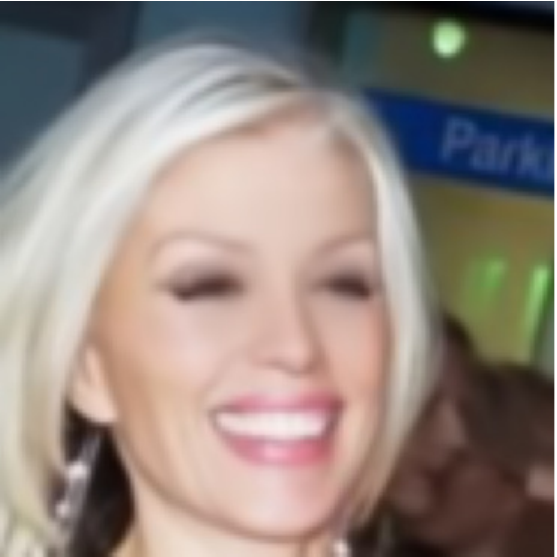}
       \makebox[\linewidth][c]{\tiny PSNR: 32.39}
    \end{minipage}
    \begin{minipage}[b]{0.13\textwidth}
        \captionsetup{skip=-0.11cm}
        \includegraphics[width=\linewidth]{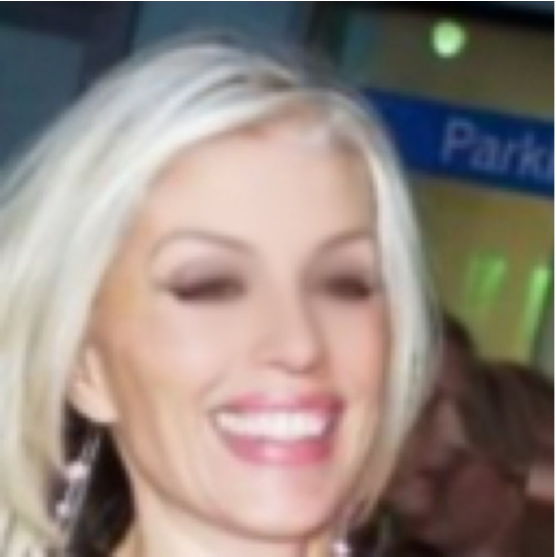}
       \makebox[\linewidth][c]{\tiny PSNR: \textbf{33.81}}
    \end{minipage}
    
     \vspace{-0.2cm}
    \caption{Comparison of image restoration methods on CelebA.}
    \label{fig:celeba}
    \vspace{-0.2cm}
\end{figure}

We present quantitative results in \cref{tab:celeba} (CelebA) and \cref{tab:afhq} (AFHQ-Cat), reporting PSNR and SSIM averaged over 100 test images per task. For competing methods, we directly adopt the best-reported results from \cite{martin2024pnp}, where hyperparameters (including task-specific iteration numbers ranging from 100 to 500) were meticulously optimized. In contrast, our IPnP-Flow uses a fixed 100 iterations across all tasks. Despite this conservative setup, IPnP-Flow consistently soutperforms PnP-Flow and other baselines, demonstrating the efficacy of our SDE-informed enhancements. While higher iterations could further improve results, our method achieves superior performance with greater efficiency.

\begin{table}[ht]
\caption{Performance (PSNR/SSIM) on CelebA Dataset}
\label{tab:celeba}
\centering
\small
\begin{adjustbox}{max width=\textwidth}
\begin{tabular}{lccccc}
\toprule
Method & Denoising & Deblurring & Super-resolution & Random Inpainting & Box Inpainting \\ 
\midrule
Degraded & 20.00 / 0.348 & 27.67 / 0.740 & 7.527 / 0.012 & 11.82 / 0.197  & 22.12 / 0.742 \\
OT-ODE & 30.50 / 0.867 & 32.63 / 0.915 & 31.05 / 0.902 & 28.36 / 0.865  & 28.84 / 0.914 \\
D-Flow & 26.42 / 0.651 & 31.07 / 0.877 & 30.75 / 0.866 & 33.07 / 0.938 & 29.70 / 0.893 \\
Flow-Priors & 29.26 / 0.766 & 31.40 / 0.856 & 28.35 / 0.717 & 32.33 / 0.945 & 29.40 / 0.858 \\
PnP-Diff & 31.00 / 0.883 & 32.49 / 0.911 & 31.20 / 0.893 & 31.43 / 0.917 & N/A  \\
PnP-GS & 32.45 / 0.908 & 33.65 / 0.924 & 30.69 / 0.889 & 28.45 / 0.848 & N/A \\ \hline
PnP-Flow & \uline{32.45} / \uline{0.911} & \uline{34.51} / \uline{0.940} & \uline{31.49} / \uline{0.907} & \uline{33.54} / \uline{0.953} & \uline{30.59} \ \uline{0.943} \\
Ours (PSNR) & \textbf{33.31} $\pm$ 0.07  & \textbf{35.13} $\pm$ 0.08  & \textbf{33.66} $\pm$ 0.16   & \textbf{34.65} $\pm$ 0.04  & \textbf{30.93} $\pm$ 0.05 \\
Ours (SSIM) & \textbf{0.927} $\pm$ 0.003 & \textbf{0.951} $\pm$ 0.002 & \textbf{0.946} $\pm$ 0.003 &  \textbf{0.965} $\pm$ 0.001 & \textbf{0.953} $\pm$ 0.001 \\
\bottomrule
\end{tabular}
\end{adjustbox}
\end{table}

\begin{table}[ht]
\caption{Performance  (PSNR/SSIM) on AFHQ-Cat Dataset}
\label{tab:afhq}
\centering
\small
\begin{adjustbox}{max width=\textwidth}
\begin{tabular}{lccccc}
\toprule
Method & Denoising & Deblurring & Super-resolution & Random Inpainting & Box Inpainting \\
\midrule
Degraded & 20.00 / 0.319 & 23.77 / 0.514 & 10.74 / 0.042 & 13.35 / 0.234 & 21.50 / 0.744 \\
OT-ODE & 29.90 / 0.831 & 26.43 / 0.709 & 25.17 / 0.711 & 28.84 / 0.838  & 23.88 / 0.874 \\
D-Flow & 26.22 / 0.620 & 27.49 / 0.740 & 24.10 / 0.595 & 31.37 / 0.888 & 26.69 / 0.833  \\
Flow-Priors & 29.32 / 0.768 & 25.78 / 0.692 & 23.34 / 0.573 & 31.76 / 0.909 & 25.85 / 0.822 \\
PnP-Diff & 30.27 / 0.835 & \uline{27.97} / \uline{0.764} & 23.22 / 0.601 & 31.08 / 0.882 & N/A \\
PnP-GS & \uline{32.26}  / \textbf{0.895} & 27.33 / 0.749 & 21.86 / 0.619 & 29.61 / 0.855  & N/A \\ \hline
PnP-Flow & 31.65 / 0.876 & 27.62 / 0.763 & \uline{26.75} / \uline{0.774} & \uline{32.98} / \uline{0.930} & \uline{26.87} / \uline{0.904} \\
Ours (PSNR) & \textbf{32.44} $\pm$ 0.13  & \textbf{28.15} $\pm$ 0.06 & \textbf{27.38} $\pm$ 0.04 & \textbf{33.21} $\pm$ 0.04 & \textbf{27.01} $\pm$ 0.05 \\
Ours (SSIM) &  \uline{0.894} $\pm$ 0.003 &  \textbf{0.780} $\pm$ 0.004 &  \textbf{0.796} $\pm$ 0.003 &  \textbf{0.937} $\pm$ 0.001 & \textbf{0.908} $\pm$ 0.002 \\
\bottomrule
\end{tabular}
\end{adjustbox}
\end{table}

Our method achieves superior PSNR and SSIM scores across all tasks and datasets compared to PnP-Flow, with gains of up to 2.17 dB in PSNR (CelebA Super-resolution) and 0.79 dB (AFHQ-Cat denoising). These improvements stem from the SDE model’s guidance on error reduction and acceleration. Qualitative results (see \cref{fig:celeba}) further illustrate that our method produces sharper, artifact-free restorations compared to baselines, enhancing visual fidelity.

Our improved PnP-Flow maintains nearly identical computational efficiency to the original PnP-Flow during inference, as the model architecture is unchanged. The additional extrapolation step has minimal impact on computation time, with the dominant cost still coming from denoising steps.

\vspace{-0.1cm}
\subsection{Ablation Study}\label{subsec:ablation}
\vspace{-0.1cm}
The performance of IPnP-Flow, particularly its error bound and convergence rate, is governed by three principal factors: (1) the extrapolation step size $h_k$, (2) the schedule of $1-l_k$, and (3) the Lipschitz constant $L_u$. We conduct a systematic empirical analysis on the CelebA validation dataset to investigate their individual effects. 

\begin{figure}[htpb]
    \centering 
    \begin{minipage}[b]{0.58\textwidth}
        \centering
        \includegraphics[height=4.5cm,width=6cm]{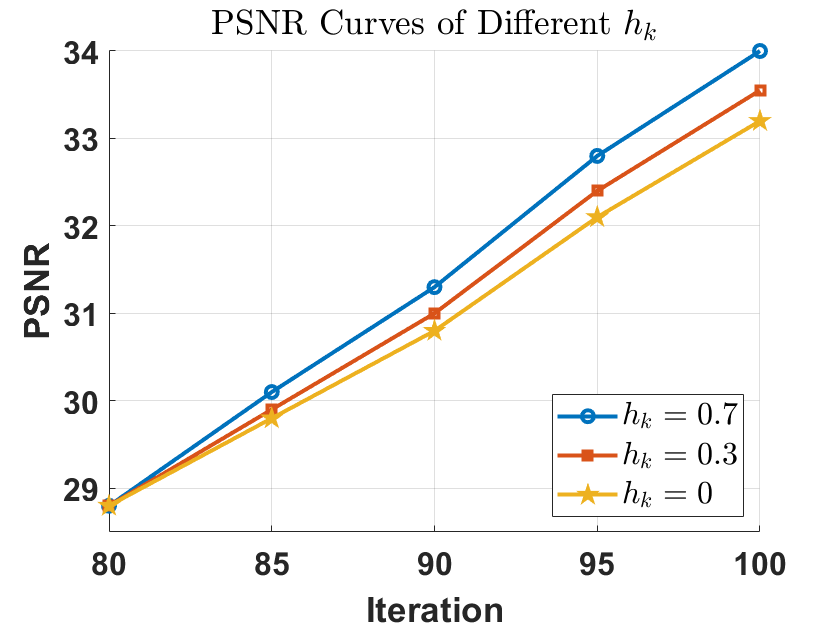}
        \vspace{-0.2cm}
       \caption{Studey on $h_k$}
        \label{fig:hk}
    \end{minipage}
    \begin{minipage}[b]{0.4\textwidth}
        \centering
        \begin{minipage}[b]{\linewidth}
            \centering
            \begin{tabular}{|c|c|c|c}
                \hline
                Schedule & PSNR & SSIM \\ \hline
                \cite{martin2024pnp} & 32.23 & 0.9361 \\ \hline
                $\lambda=0.90$ & 28.99 & 0.7237 \\ \hline
                $\lambda=0.94$ & 32.92 & 0.8837 \\ \hline
                $\lambda=0.96$ & \textbf{34.49} & \textbf{0.9479} \\ \hline
                $\lambda=0.99$ & 30.36 & 0.8984 \\ \hline
            \end{tabular}
            \vspace{-0.2cm}
            \captionof{table}{Study on $1 - l_k$ \label{tab:lk}}
            \vspace{0.2cm}
        \end{minipage}
        \begin{minipage}[b]{\linewidth}
            \centering
            \begin{tabular}{|c|c|c|c}
                \hline
                 & PSNR & SSIM \\ \hline
                With $L_u$ & 33.53 & 0.9401 \\ \hline
                W/o $L_u$ & 33.14 & 0.9358 \\ \hline
            \end{tabular}
            \vspace{-0.2cm}
            \captionof{table}{Study on Lipschitz Penalty \label{tab:lv}}
        \end{minipage}
    \end{minipage}
\vspace{-0.1cm}
\end{figure}

\textbf{Extrapolation Step Size $h_k$}: The upper bound in \cref{thm:acc_con} for $\mathbb{E}[\|X_t-X_T\|^2]$ depends critically on $h_k$. Since $X_T$ approximates the ground truth $X_G\in\mathbb{R}^n$, we use PSNR
as our evaluation metric, where higher PSNR corresponds to smaller $\mathbb{E}[\|X_t-X_T\|^2]$.

With fixed $\gamma_k=0.001$ and $N=100$, we first verify that $K\in\{40,60,80\}$ yields nearly identical PSNR (mean difference $< 0.1$dB), suggesting $K$ has minimal impact in this experiment. We therefore fix $K=80$,  gradient step size $\gamma_k=0.001$, total iteration number $N=100$, and then evaluate the impact of varying $h_k$ on the deblurring task. \cref{fig:hk} presents the PSNR curve after $K$-th iteration for different $h_k$ configurations (final PNSR = $\{33.23,33.55,33.95\}$ for $h_k$ = $\{0, 0.3, 0.7\}$) . The results show that an appropriately chosen $h_k$ ensures higher PSNR with fewer iteration numbers, implying smaller $\|X_t-X_T\|^2$—consistent with our theoretical results in \cref{prop:con} and \cref{thm:acc_con}. However, excessively large $h_k$ values degrade performance due to poor sequence convergence.

\textbf{Schedule of $1-l_k$}: With all other settings fixed, we set gradient step size $\gamma_k=0.001$, total iteration number $N=100$, $h_k=0$, and then evaluate the impact of $1-l_k$ on the deblurring task using the CelebA dataset. \cref{tab:lk} presents the final PSNR and SSIM values for different $\lambda$ configurations. The second row in \cref{tab:lk} uses the schedule proposed in PnP-Flow, the 3-6 rows use our proposed schedule \cref{eq:lk} with four different $\lambda$. As discussed in \cref{sec:lk}, a proper $\lambda$ can significantly improve the generated image quality in terms of PSNR and SSIM.

\textbf{Lipschitz Penalty $L_u$}: With all other settings fixed, we set step size $\gamma_k=0.001$, total iteration number $N=100$, $h_k=0$. We empirically set the coefficient of the Lipschitz penalty loss to be 0.1 and compare our method with and without Lipschitz regularization. Table~\ref{tab:lv} shows that penalizing the Lipschitz constant improves PSNR and SSIM, aligning with the error analysis in Section~\ref{sec:Theory}.

\vspace{-0.1cm}
\section{Conclusion}
\vspace{-0.1cm}
In this paper, we advance the theoretical and practical understandings of PnP-Flow for image restoration from a continuous-time SDE perspective. Our approach uniquely integrates SDE analysis with discrete PnP-Flow, providing both theoretical rigor and practical improvements for PnP-Flow. 
As a limitation of our work, we 
focuses on straight-line flows in this paper, generalizing our work to manifold-based flows remains future work. Although developed for PnP-Flow, our methodology can be extended to other generative models and deep learning frameworks. Key future directions include: 1) Extending the SDE analysis to manifold-based flow matching models to broaden applicability. 2) Developing advanced projection techniques for the interpolation step to improve denoising performance. 3) Investigating additional SDE-informed strategies, such as adaptive noise schedules or alternative extrapolation methods, to further enhance efficiency and accuracy. By pursuing these avenues, we aim to further develop the SDE-informed frameworks, making them more robust and versatile tools for image restoration and beyond.

\noindent{\bf Societal Impacts:} Our paper presents an SDE framework for analyzing and improving PnP-Flow. Imaging is a fundamental technique for many of the modern scientific disciplines, improving imaging algorithms will have foundational impact on these disciplines. We do not see additional negative societal impact compared to existing approaches due to our work.


\appendix

\section{Additional Experimental Details}\label{sec:appendix_experiment}
\subsection{Frobenius Norm Estimator}
For a $n\times n$ matrix ${\bm A}$, its trace can be unbiasedly estimated by~\cite{hutchinson1989stochastic}
\begin{equation}
\text{Trace}\big({\bm A}\big) = \mathbb{E}_{{\bm \epsilon}\sim p({\bm \epsilon})}\big[{\bm \epsilon}^\top{\bm A}{\bm \epsilon}\big]
\end{equation}
, where $p({\bm \epsilon})$ is a $n$-dimensional distribution s.t. $\mathbb{E}[{\bm \epsilon}]=\bm 0$ and $\text{Cov}[{\bm \epsilon}]={\bm I}_{n\times n}$ and $p({\bm \epsilon})$ is always chosen as standard Gaussian. Then given a vector field regressor $\tilde {\bm \epsilon}(t, X_t)$, the Frobenius norm of the Jacobian can be estimated
\begin{equation}\label{eq:Hutchison_F}
\begin{split}
\lVert\nabla_{\bm x} \tilde{\bm u}_t(X_t)\rVert_{F}^2 &= \text{Trace}\big(\nabla_{\bm x} \tilde{\bm u}_t(X_t)^\top\nabla_{\bm x} \tilde{\bm u}_t(X_t)\big)  \\
& = \mathbb{E}_{{\bm \epsilon}\sim p({\bm \epsilon})}\big[{\bm \epsilon}^\top \nabla_{\bm x} \tilde{\bm u}_t(X_t)^\top\nabla_{\bm x} \tilde{\bm u}_t(X_t) {\bm \epsilon}\big] \\
& = \mathbb{E}_{{\bm \epsilon}\sim p({\bm \epsilon})}\big[\lVert \nabla_{\bm x} \tilde{\bm u}_t(X_t){\bm \epsilon}  \rVert_2^2\big]
\end{split}
\end{equation}
and notice the fact that $\lVert \nabla_{\bm x} \tilde{\bm u}_t(X_t) \rVert_2\leq \lVert \nabla_{\bm x} \tilde{\bm u}_t(X_t) \rVert_F$ and $L_u = \sup_{X\in\mathbb R^n} \lVert \nabla_{\bm x} \tilde{{\bm u}}_t( X_t) \rVert_2$. So penalizing Eq.(\ref{eq:Hutchison_F}) serves to constrain the Lipschitz constant.

Given the influence of the Lipschitz constant $L_u$ of the estimated vector field $\tilde{{\bm u}}$ (or $\tilde{{\bm u}}$) on error bounds and convergence rates, we introduce an FM model regularized by a Lipschitz penalty term (see \cref{sec:4.1}). We use the same training settings in \cite{martin2024pnp}. We use the pre-trained model provided by \cite{martin2024pnp} as the baseline model.

\subsection{Hyper-parameter Setting}
We list the hyper-parameters used for our method on the CelebA dataset and AFHQ-Cat dataset in the following two tables.

\begin{table}[ht]
\caption{Hyper-parameters used for our method on the CelebA dataset.}
\label{tab:celeba_hyper}
\centering
\begin{adjustbox}{max width=\textwidth}
\begin{tabular}{lccccc}
\toprule
 & Denoising & Deblurring & Super-resolution & Random Inpainting & Box Inpainting \\
\midrule
$r_k$ (gradient step size) & 0.004 & 0.003  & 0.002  & 0.0002 & 0.0012 \\
$h_k$ (extrapolation step size) & 0.5  & 0.5 & 0.5 & 0.5 & 0.5 \\
$\lambda$ ($1-l_k$) & 0.965 & 0.965 & 0.965 & 0.965 & 0.965 \\
N (number of iterations) & 100  & 100 & 100 & 100  & 100 \\
K (SDE $t_0$) & 80 & 80  & 80 & 80 & 80 \\
\bottomrule
\end{tabular}
\end{adjustbox}
\end{table}

\begin{table}[ht]
\caption{Hyper-parameters used for our method on the AFHQ-Cat dataset.}
\label{tab:afhq_hyper}
\centering
\begin{adjustbox}{max width=\textwidth}
\begin{tabular}{lccccc}
\toprule
 & Denoising & Deblurring & Super-resolution & Random Inpainting & Box Inpainting \\
\midrule
$r_k$ (gradient step size) & 0.004 & 0.0045 & 0.004 &  0.0002 &  0.002 \\
$h_k$ (extrapolation step size) & 0.3  & 0.3 & 0.3 & 0.3 & 0.3 \\
$\lambda$ ($1-l_k$) & 0.965 & 0.965 & 0.965 & 0.965 & 0.965  \\
N (number of iterations) & 100  & 100 & 100 & 100 & 100  \\
K (SDE $t_0$) & 80 & 80  & 80 & 80 & 80  \\
\bottomrule
\end{tabular}
\end{adjustbox}
\end{table}

\subsection{Additional Visual Results}

In this section, we provide some additional visual results for comparison.

\begin{figure}[!ht]
    \centering

    \begin{minipage}[b]{0.13\textwidth}
        \centering
        {\footnotesize \text{Clean}}
    \end{minipage}
    \begin{minipage}[b]{0.13\textwidth}
        \centering
        \small Noisy
    \end{minipage}
    \begin{minipage}[b]{0.13\textwidth}
        \centering
        \small PnP-GS
    \end{minipage}
    \begin{minipage}[b]{0.13\textwidth}
        \centering
        \small OT-ODE
    \end{minipage}
    \begin{minipage}[b]{0.13\textwidth}
        \centering
        \small Flow-Priors
    \end{minipage}
    \begin{minipage}[b]{0.13\textwidth}
        \centering
        \small PnP-Flow
    \end{minipage}
    \begin{minipage}[b]{0.13\textwidth}
        \centering
        \small Ours
    \end{minipage}

    \begin{minipage}[b]{0.13\textwidth}
    \captionsetup{skip=-0.01cm}
        \includegraphics[width=\linewidth]{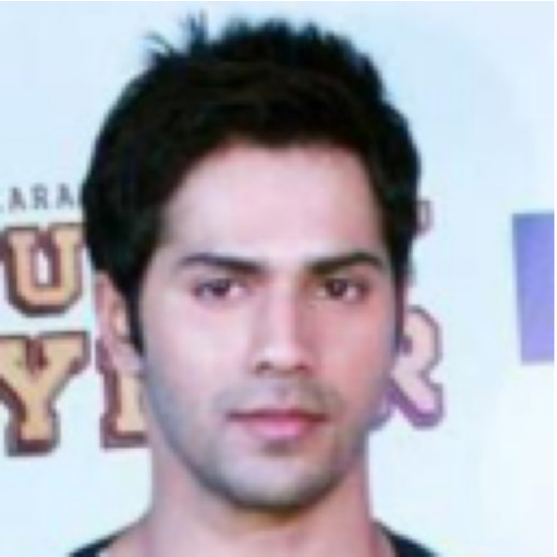}
        \makebox[\linewidth][c]{\footnotesize }
    \end{minipage}
    \begin{minipage}[b]{0.13\textwidth}
    \captionsetup{skip=-0.01cm}
        \includegraphics[width=\linewidth]{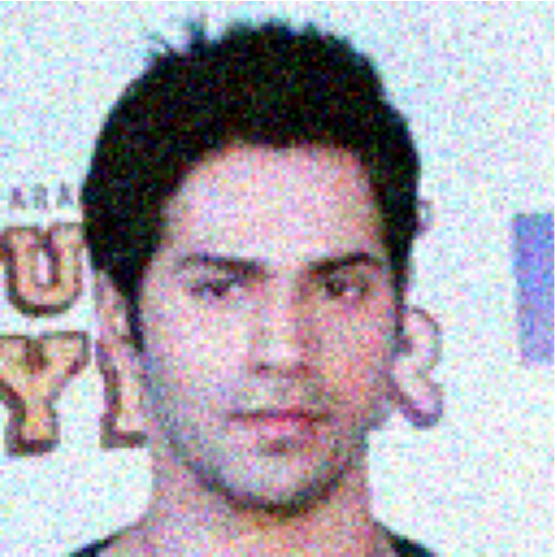}
        \makebox[\linewidth][c]{\footnotesize PSNR: 19.97}
    \end{minipage}
    \begin{minipage}[b]{0.13\textwidth}
    \captionsetup{skip=-0.01cm}
        \includegraphics[width=\linewidth]{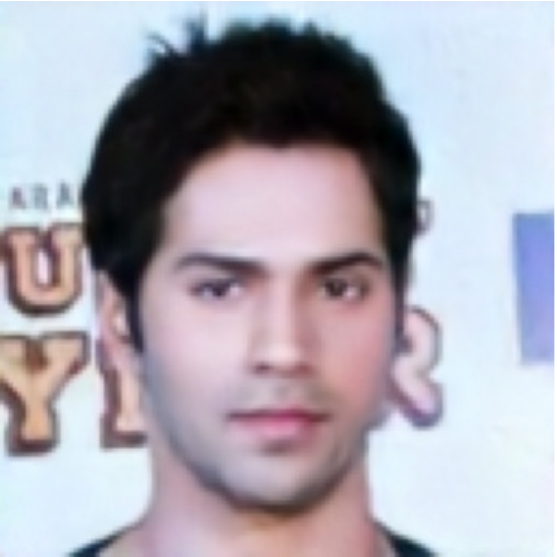}
       \makebox[\linewidth][c]{\footnotesize PSNR: 32.36}
    \end{minipage}
    \begin{minipage}[b]{0.13\textwidth}
    \captionsetup{skip=-0.01cm}
        \includegraphics[width=\linewidth]{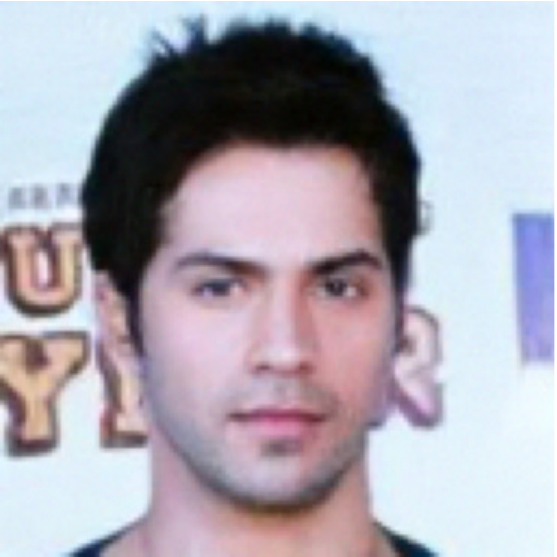}
       \makebox[\linewidth][c]{\footnotesize PSNR:  30.22}
    \end{minipage}
    \begin{minipage}[b]{0.13\textwidth}
    \captionsetup{skip=-0.01cm}
        \includegraphics[width=\linewidth]{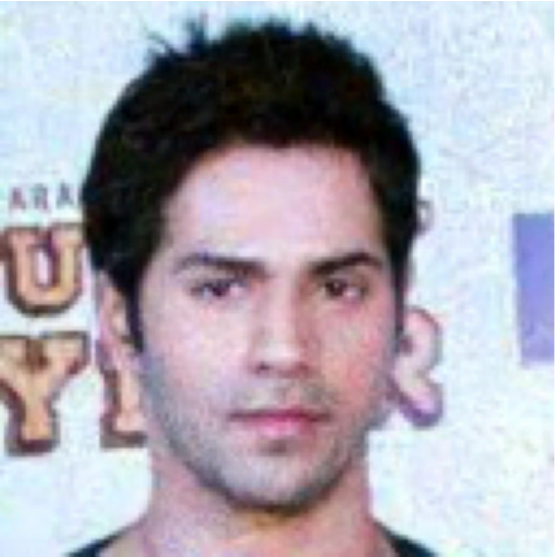}
       \makebox[\linewidth][c]{\footnotesize PSNR:  29.51}
    \end{minipage}
    \begin{minipage}[b]{0.13\textwidth}
    \captionsetup{skip=-0.01cm}
        \includegraphics[width=\linewidth]{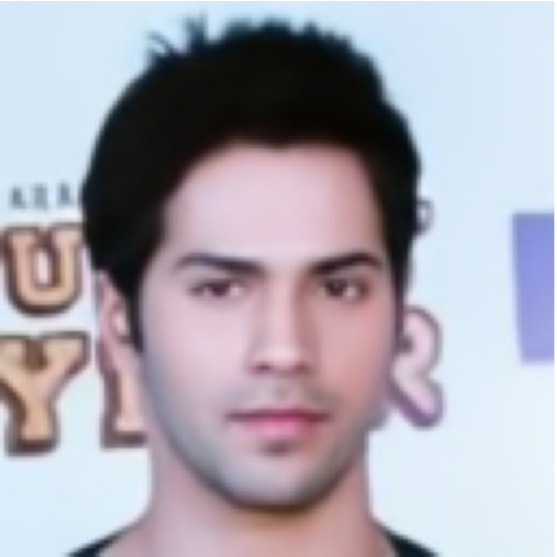}
       \makebox[\linewidth][c]{\footnotesize PSNR:  31.58}
    \end{minipage}
    \begin{minipage}[b]{0.13\textwidth}
    \captionsetup{skip=-0.01cm}
        \includegraphics[width=\linewidth]{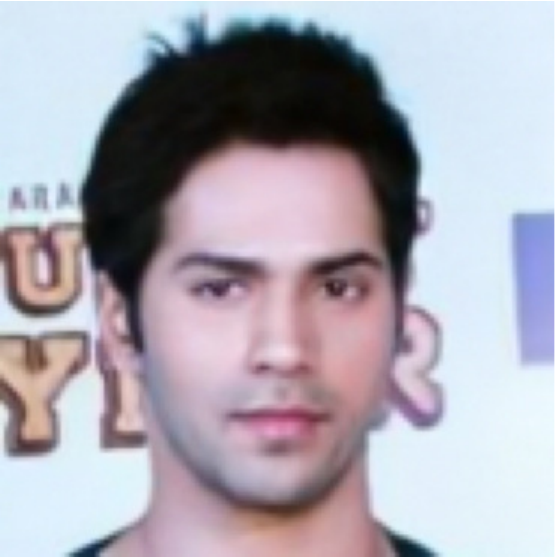}
       \makebox[\linewidth][c]{\footnotesize PSNR:  \textbf{32.94}}
    \end{minipage}

    \vspace{0.2cm}

    \begin{minipage}[b]{0.13\textwidth}
    \captionsetup{skip=-0.01cm}
        \includegraphics[width=\linewidth]{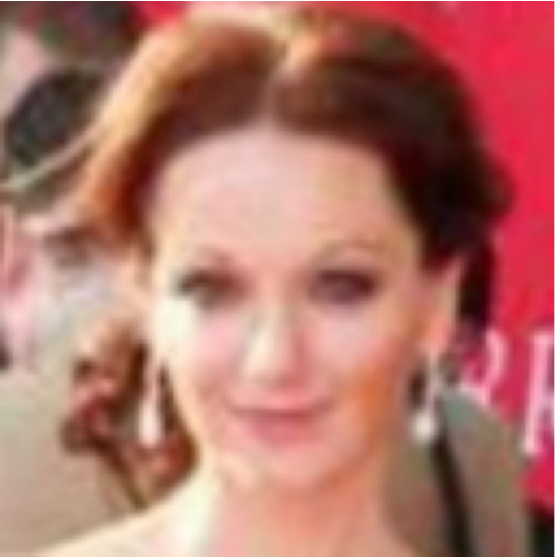}
       \makebox[\linewidth][c]{\footnotesize  }
    \end{minipage}
    \begin{minipage}[b]{0.13\textwidth}
        \captionsetup{skip=-0.01cm}
        \includegraphics[width=\linewidth]{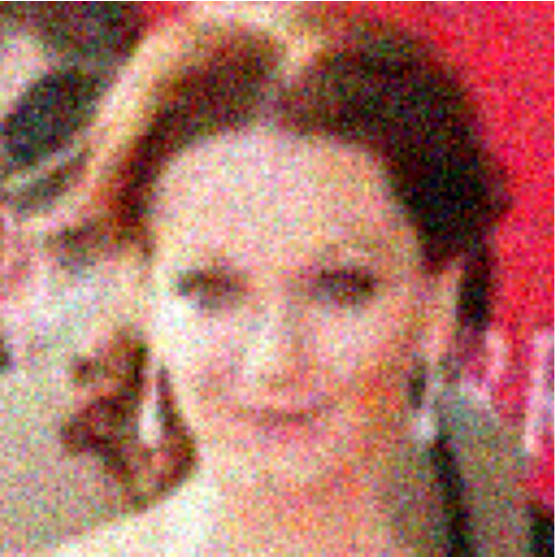}
       \makebox[\linewidth][c]{\footnotesize PSNR: 20.00}
    \end{minipage}
    \begin{minipage}[b]{0.13\textwidth}
    \captionsetup{skip=-0.01cm}
        \includegraphics[width=\linewidth]{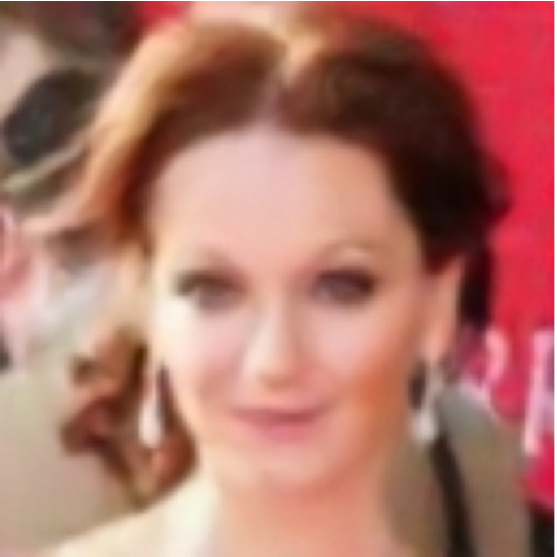}
       \makebox[\linewidth][c]{\footnotesize PSNR: 32.70}
    \end{minipage}
    \begin{minipage}[b]{0.13\textwidth}
    \captionsetup{skip=-0.01cm}
        \includegraphics[width=\linewidth]{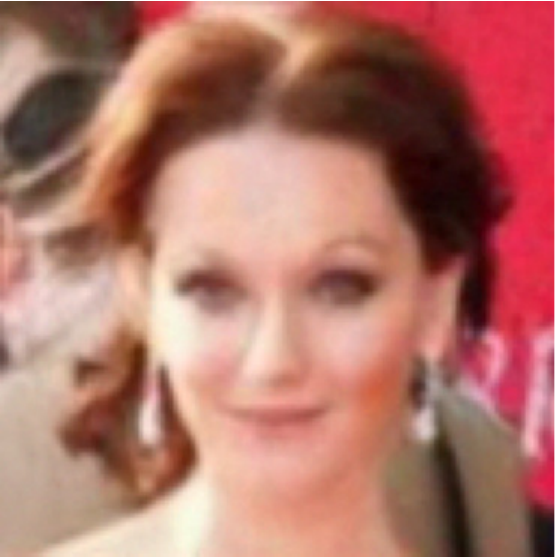}
       \makebox[\linewidth][c]{\footnotesize PSNR:  30.53}
    \end{minipage}
    \begin{minipage}[b]{0.13\textwidth}
    \captionsetup{skip=-0.01cm}
        \includegraphics[width=\linewidth]{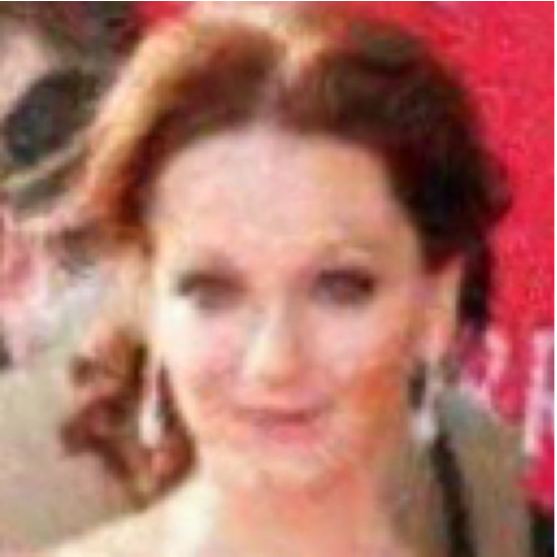}
       \makebox[\linewidth][c]{\footnotesize PSNR:  30.97}
    \end{minipage}
    \begin{minipage}[b]{0.13\textwidth}
    \captionsetup{skip=-0.01cm}
        \includegraphics[width=\linewidth]{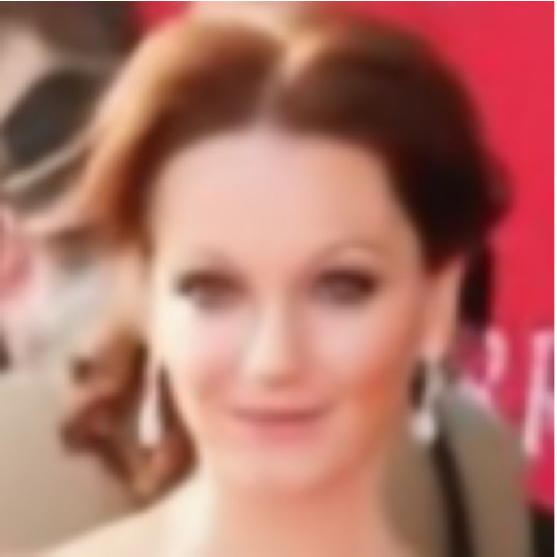}
       \makebox[\linewidth][c]{\footnotesize PSNR:  32.35}
    \end{minipage}
    \begin{minipage}[b]{0.13\textwidth}
    \captionsetup{skip=-0.01cm}
        \includegraphics[width=\linewidth]{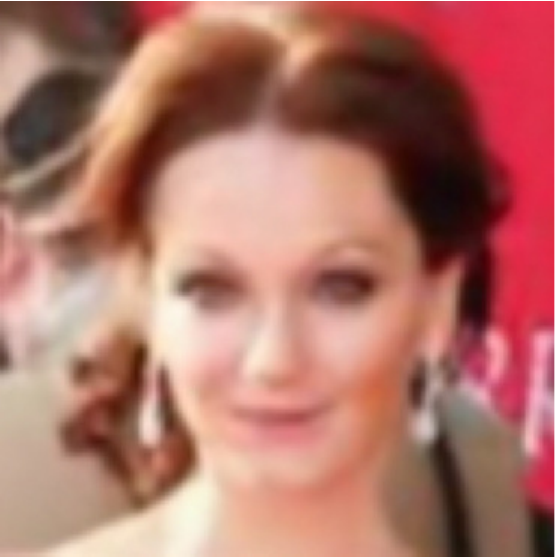}
       \makebox[\linewidth][c]{\footnotesize PSNR:  \textbf{33.40}}
    \end{minipage}

   \vspace{0.2cm}

    \begin{minipage}[b]{0.13\textwidth}
    \captionsetup{skip=-0.01cm}
        \includegraphics[width=\linewidth]{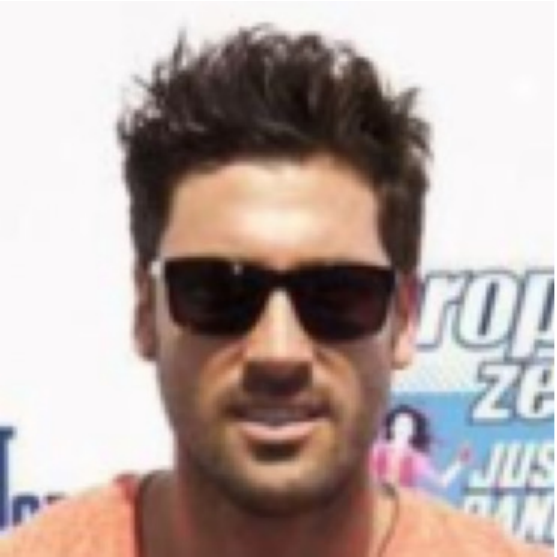}
       \makebox[\linewidth][c]{\footnotesize }
    \end{minipage}
    \begin{minipage}[b]{0.13\textwidth}
    \captionsetup{skip=-0.01cm}
        \includegraphics[width=\linewidth]{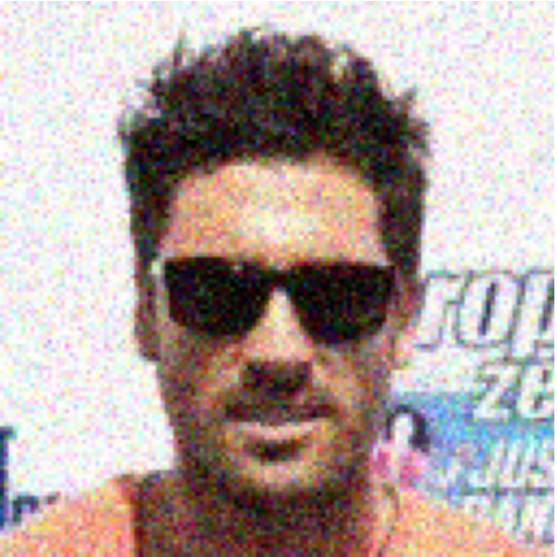}
       \makebox[\linewidth][c]{\footnotesize PSNR:  20.02}
    \end{minipage}
    \begin{minipage}[b]{0.13\textwidth}
    \captionsetup{skip=-0.01cm}
        \includegraphics[width=\linewidth]{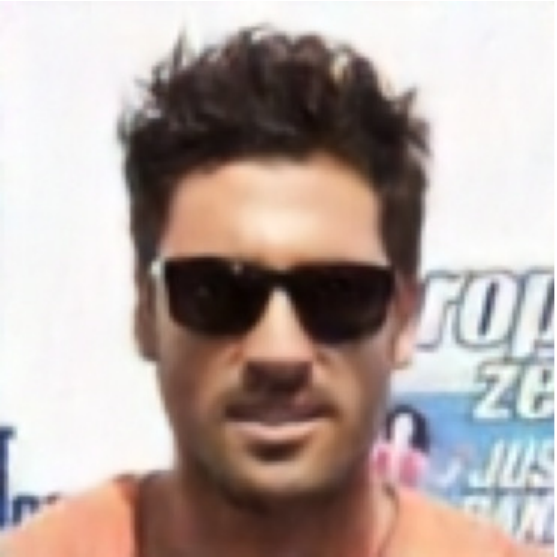}
       \makebox[\linewidth][c]{\footnotesize PSNR:  31.71}
    \end{minipage}
    \begin{minipage}[b]{0.13\textwidth}
    \captionsetup{skip=-0.01cm}
        \includegraphics[width=\linewidth]{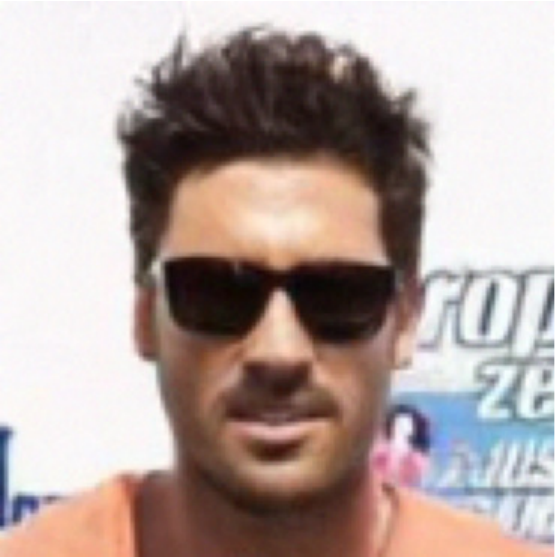}
       \makebox[\linewidth][c]{\footnotesize PSNR:  29.63}
    \end{minipage}
    \begin{minipage}[b]{0.13\textwidth}
    \captionsetup{skip=-0.01cm}
        \includegraphics[width=\linewidth]{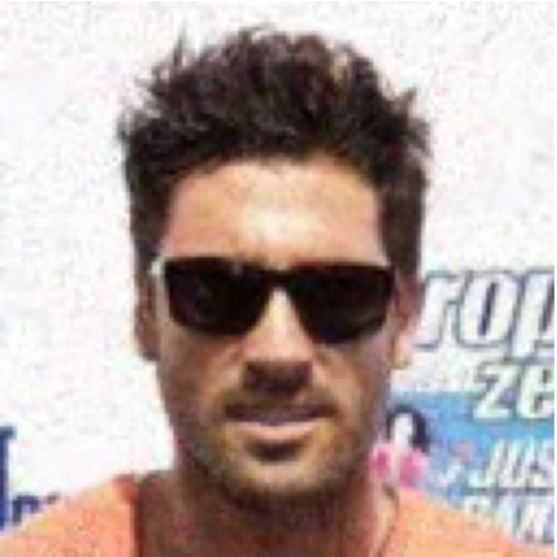}
       \makebox[\linewidth][c]{\footnotesize PSNR: 29.35}
    \end{minipage}
    \begin{minipage}[b]{0.13\textwidth}
    \captionsetup{skip=-0.01cm}
        \includegraphics[width=\linewidth]{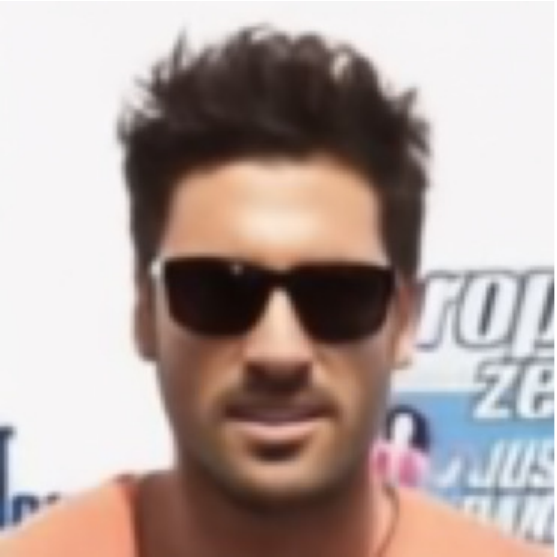}
       \makebox[\linewidth][c]{\footnotesize PSNR:  30.89}
    \end{minipage}
    \begin{minipage}[b]{0.13\textwidth}
    \captionsetup{skip=-0.01cm}
        \includegraphics[width=\linewidth]{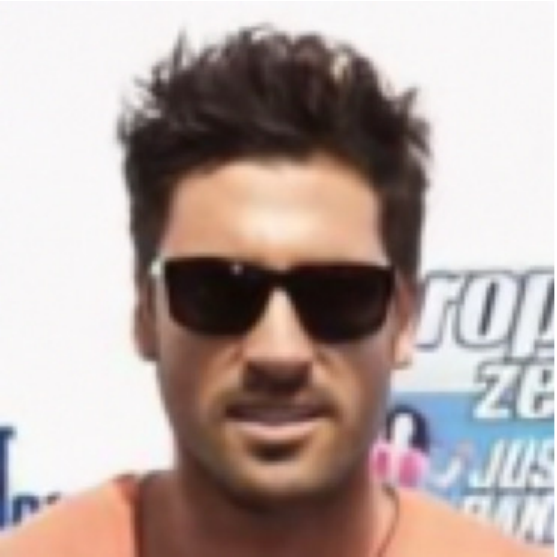}
       \makebox[\linewidth][c]{\footnotesize PSNR:  \textbf{32.27}}
    \end{minipage}

    \caption{Comparison of image denoising results on CelebA.}
    \label{fig:celeba_denoise}
\end{figure}

\vspace{5cm}

\begin{figure}[!ht]
    \centering

    \begin{minipage}[b]{0.13\textwidth}
        \centering
        {\footnotesize \text{Clean}}
    \end{minipage}
    \begin{minipage}[b]{0.13\textwidth}
        \centering
        \small Blurry
    \end{minipage}
    \begin{minipage}[b]{0.13\textwidth}
        \centering
        \small PnP-GS
    \end{minipage}
    \begin{minipage}[b]{0.13\textwidth}
        \centering
        \small OT-ODE
    \end{minipage}
    \begin{minipage}[b]{0.13\textwidth}
        \centering
        \small Flow-Priors
    \end{minipage}
    \begin{minipage}[b]{0.13\textwidth}
        \centering
        \small PnP-Flow
    \end{minipage}
    \begin{minipage}[b]{0.13\textwidth}
        \centering
        \small Ours
    \end{minipage}

    \begin{minipage}[b]{0.13\textwidth}
    \captionsetup{skip=-0.01cm}
        \includegraphics[width=\linewidth]{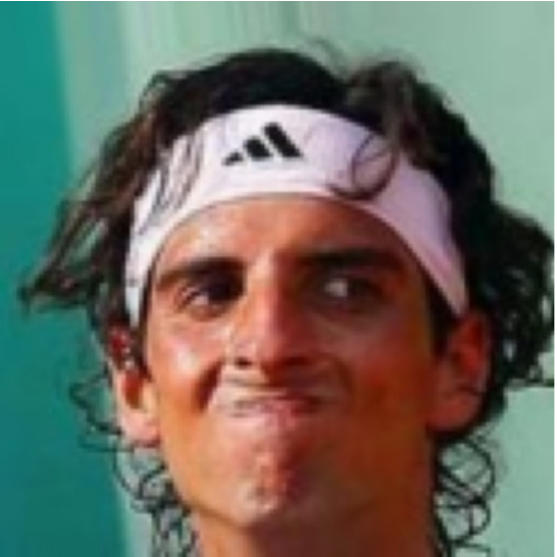}
        \makebox[\linewidth][c]{\footnotesize }
    \end{minipage}
    \begin{minipage}[b]{0.13\textwidth}
    \captionsetup{skip=-0.01cm}
        \includegraphics[width=\linewidth]{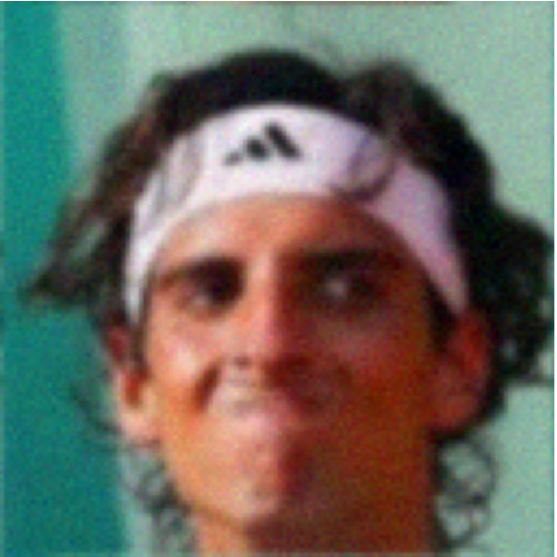}
        \makebox[\linewidth][c]{\footnotesize PSNR: 25.52}
    \end{minipage}
    \begin{minipage}[b]{0.13\textwidth}
    \captionsetup{skip=-0.01cm}
        \includegraphics[width=\linewidth]{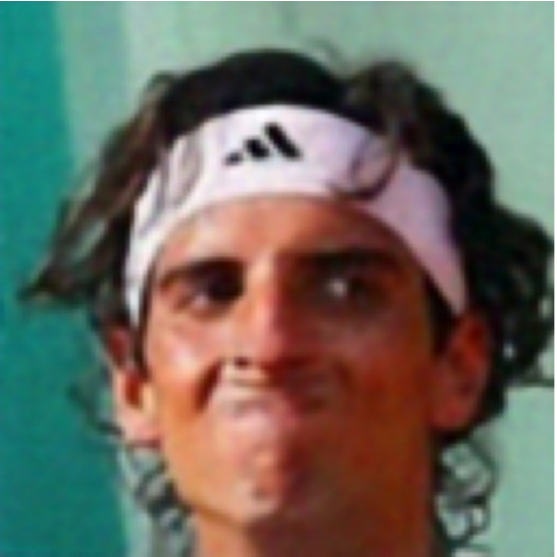}
       \makebox[\linewidth][c]{\footnotesize PSNR: 29.70}
    \end{minipage}
    \begin{minipage}[b]{0.13\textwidth}
    \captionsetup{skip=-0.01cm}
        \includegraphics[width=\linewidth]{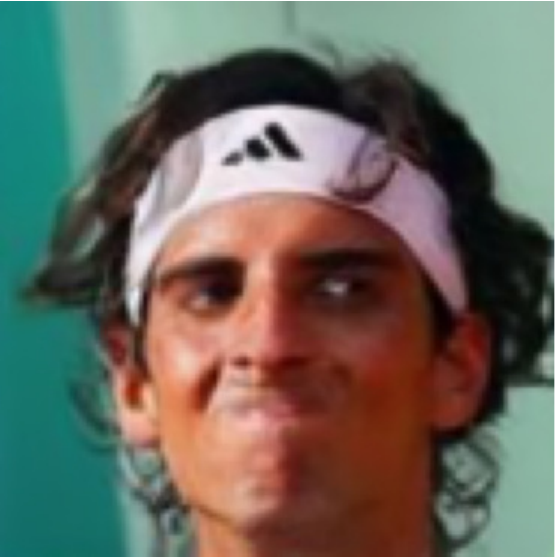}
       \makebox[\linewidth][c]{\footnotesize PSNR:  28.59}
    \end{minipage}
    \begin{minipage}[b]{0.13\textwidth}
    \captionsetup{skip=-0.01cm}
        \includegraphics[width=\linewidth]{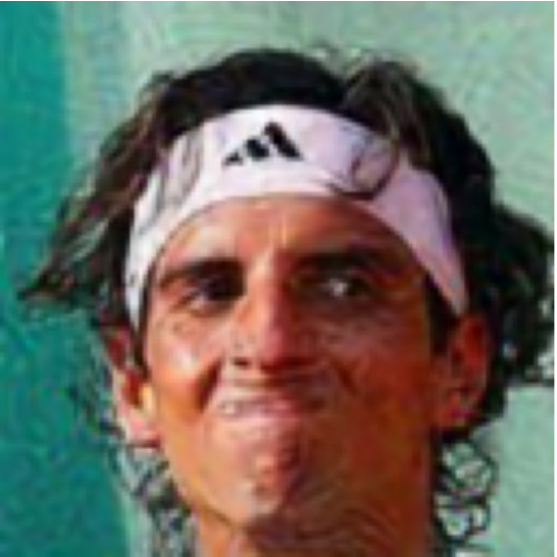}
       \makebox[\linewidth][c]{\footnotesize PSNR:  28.43}
    \end{minipage}
    \begin{minipage}[b]{0.13\textwidth}
    \captionsetup{skip=-0.01cm}
        \includegraphics[width=\linewidth]{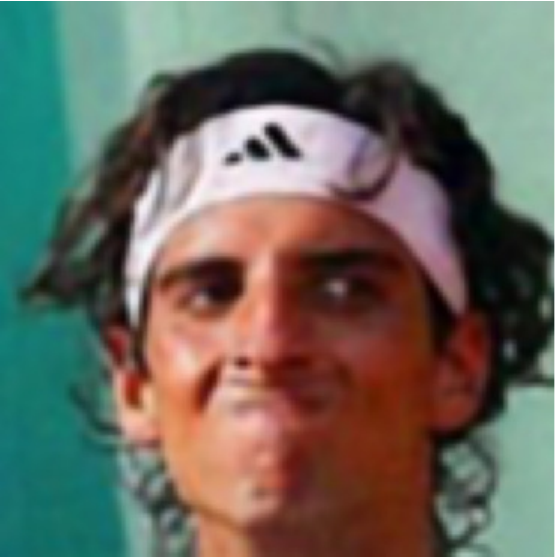}
       \makebox[\linewidth][c]{\footnotesize PSNR:  29.62}
    \end{minipage}
    \begin{minipage}[b]{0.13\textwidth}
    \captionsetup{skip=-0.01cm}
        \includegraphics[width=\linewidth]{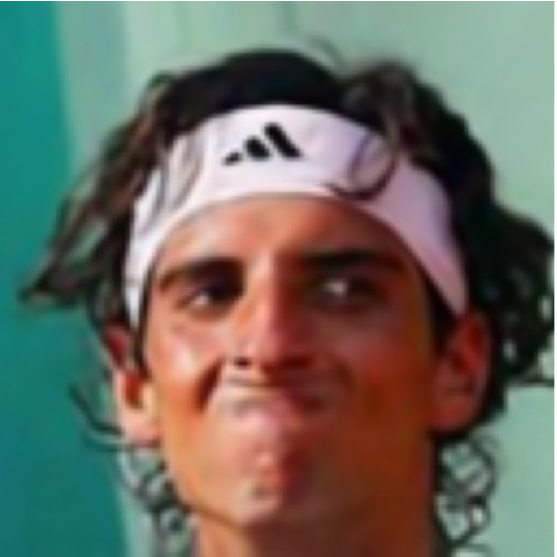}
       \makebox[\linewidth][c]{\footnotesize PSNR:  \textbf{30.56}}
    \end{minipage}

    \vspace{0.2cm}

    \begin{minipage}[b]{0.13\textwidth}
    \captionsetup{skip=-0.01cm}
        \includegraphics[width=\linewidth]{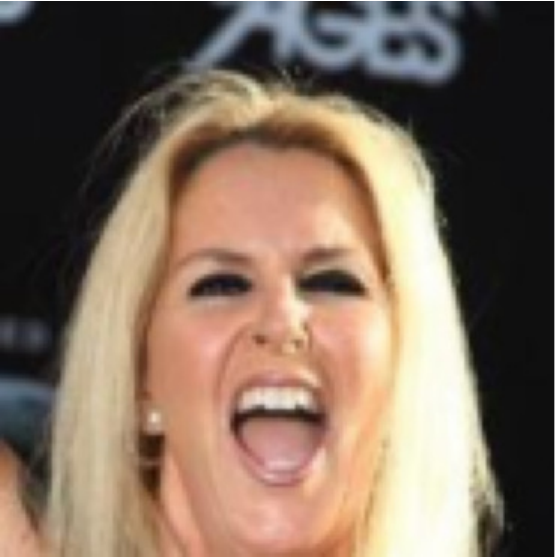}
       \makebox[\linewidth][c]{\footnotesize }
    \end{minipage}
    \begin{minipage}[b]{0.13\textwidth}
    \captionsetup{skip=-0.01cm}
        \includegraphics[width=\linewidth]{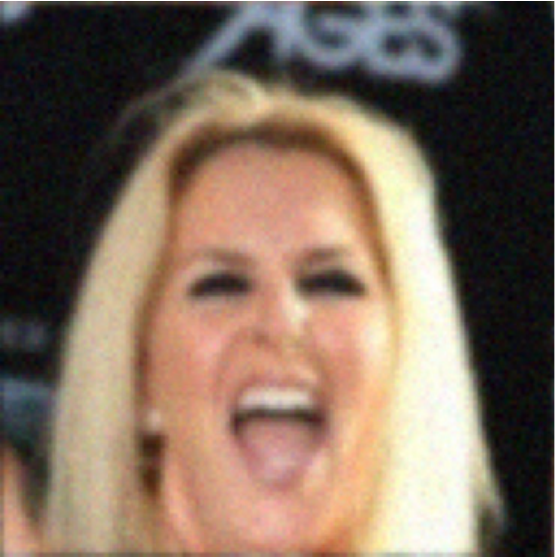}
       \makebox[\linewidth][c]{\footnotesize PSNR: 27.05}
    \end{minipage}
    \begin{minipage}[b]{0.13\textwidth}
    \captionsetup{skip=-0.01cm}
        \includegraphics[width=\linewidth]{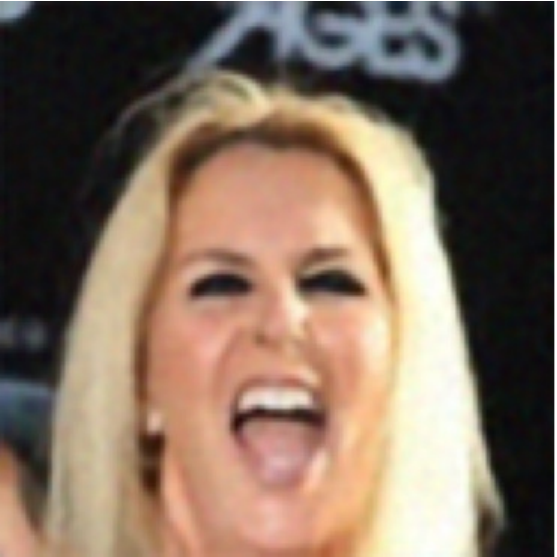}
       \makebox[\linewidth][c]{\footnotesize PSNR:  33.10}
    \end{minipage}
    \begin{minipage}[b]{0.13\textwidth}
    \captionsetup{skip=-0.01cm}
        \includegraphics[width=\linewidth]{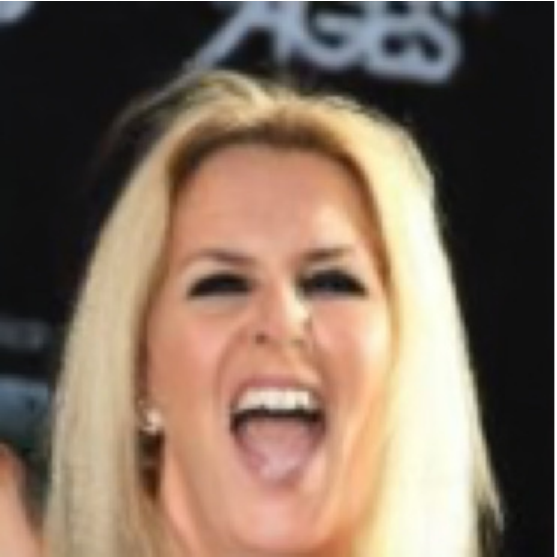}
       \makebox[\linewidth][c]{\footnotesize PSNR: 31.79}
    \end{minipage}
    \begin{minipage}[b]{0.13\textwidth}
    \captionsetup{skip=-0.01cm}
        \includegraphics[width=\linewidth]{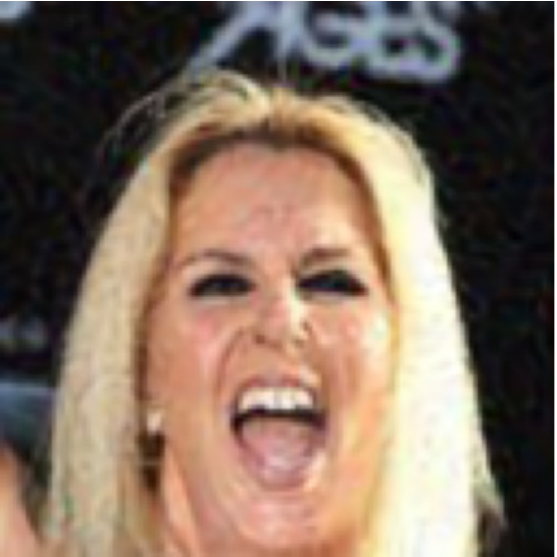}
       \makebox[\linewidth][c]{\footnotesize PSNR:  31.02}
    \end{minipage}
    \begin{minipage}[b]{0.13\textwidth}
    \captionsetup{skip=-0.01cm}
        \includegraphics[width=\linewidth]{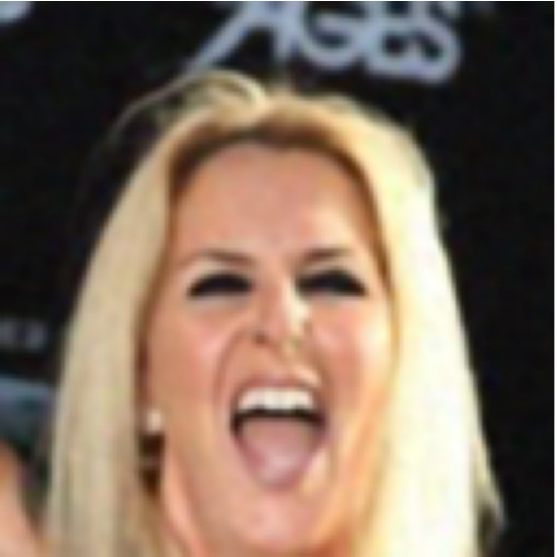}
       \makebox[\linewidth][c]{\footnotesize PSNR:  33.59}
    \end{minipage}
    \begin{minipage}[b]{0.13\textwidth}
    \captionsetup{skip=-0.01cm}
        \includegraphics[width=\linewidth]{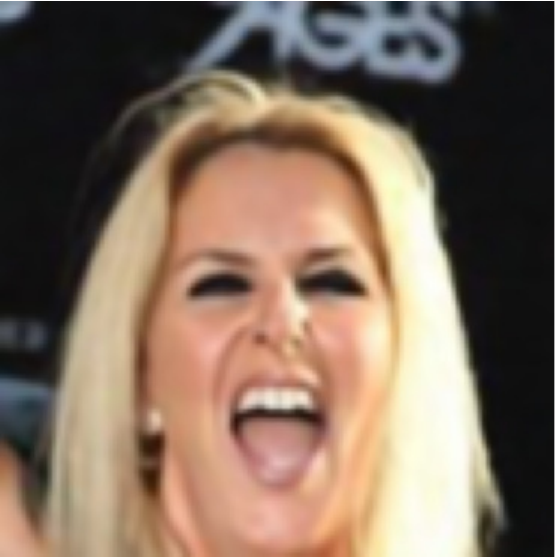}
       \makebox[\linewidth][c]{\footnotesize PSNR: \textbf{34.15}}
    \end{minipage}

\vspace{0.2cm}

    \begin{minipage}[b]{0.13\textwidth}
    \captionsetup{skip=-0.01cm}
        \includegraphics[width=\linewidth]{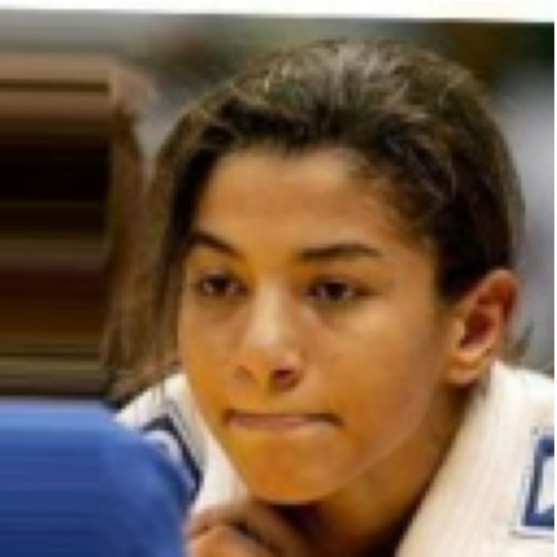}
       \makebox[\linewidth][c]{\footnotesize  }
    \end{minipage}
    \begin{minipage}[b]{0.13\textwidth}
    \captionsetup{skip=-0.01cm}
        \includegraphics[width=\linewidth]{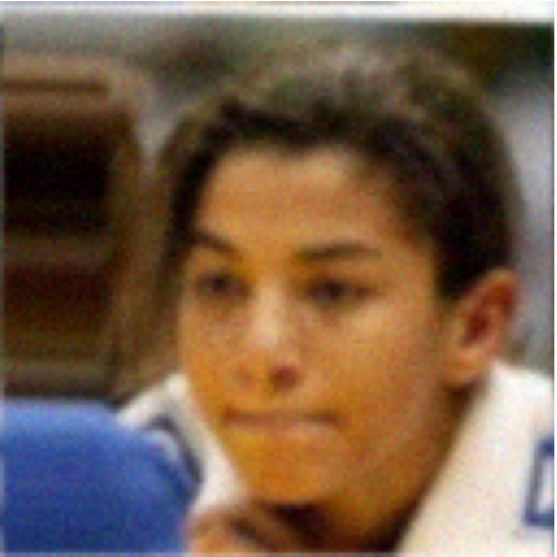}
       \makebox[\linewidth][c]{\footnotesize PSNR:  25.85}
    \end{minipage}
    \begin{minipage}[b]{0.13\textwidth}
    \captionsetup{skip=-0.01cm}
        \includegraphics[width=\linewidth]{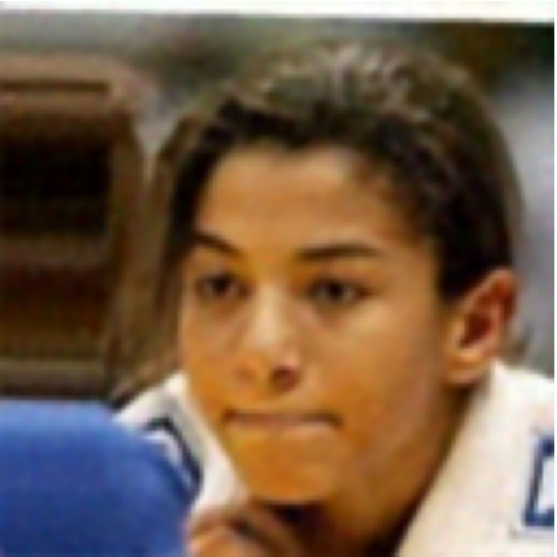}
       \makebox[\linewidth][c]{\footnotesize PSNR:  32.56}
    \end{minipage}
    \begin{minipage}[b]{0.13\textwidth}
    \captionsetup{skip=-0.01cm}
        \includegraphics[width=\linewidth]{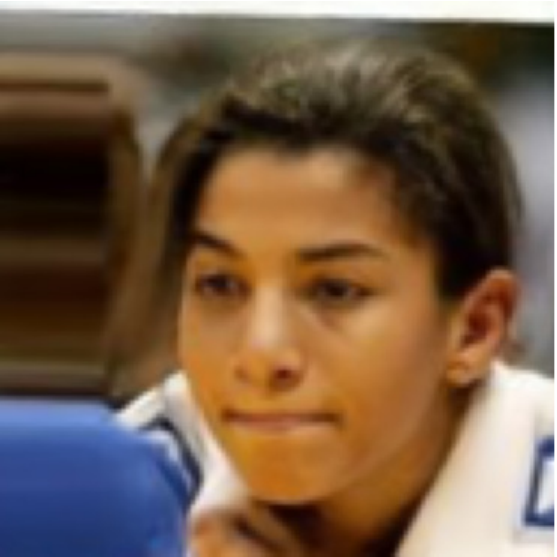}
       \makebox[\linewidth][c]{\footnotesize PSNR:  31.48}
    \end{minipage}
    \begin{minipage}[b]{0.13\textwidth}
    \captionsetup{skip=-0.01cm}
        \includegraphics[width=\linewidth]{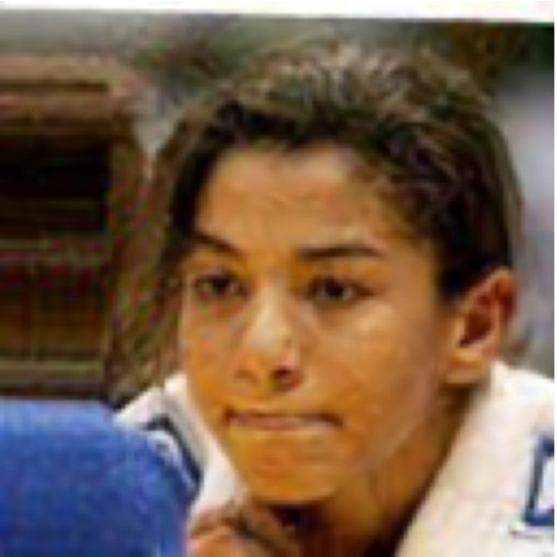}
       \makebox[\linewidth][c]{\footnotesize PSNR:  30.87}
    \end{minipage}
    \begin{minipage}[b]{0.13\textwidth}
    \captionsetup{skip=-0.01cm}
        \includegraphics[width=\linewidth]{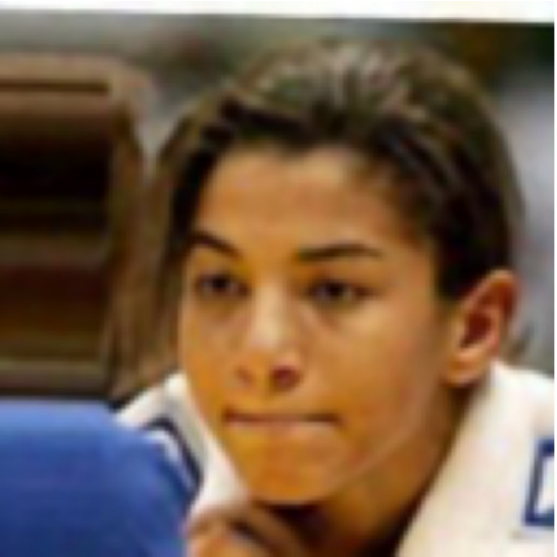}
       \makebox[\linewidth][c]{\footnotesize PSNR:  33.00}
    \end{minipage}
    \begin{minipage}[b]{0.13\textwidth}
    \captionsetup{skip=-0.01cm}
        \includegraphics[width=\linewidth]{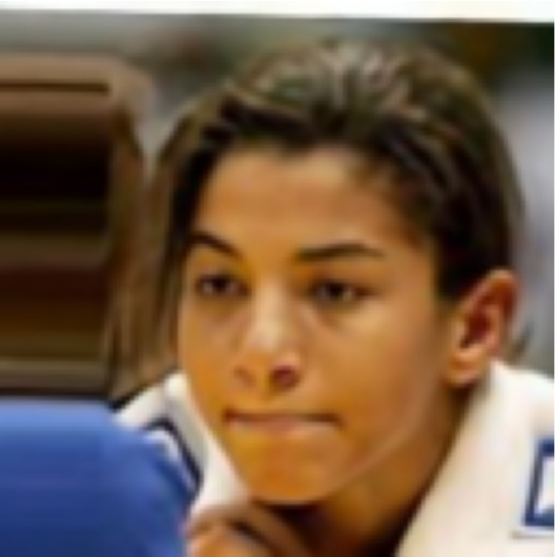}
       \makebox[\linewidth][c]{\footnotesize PSNR:  \textbf{33.86}}
    \end{minipage}

    \caption{Comparison of image deblurring results on CelebA.}
    \label{fig:celeba_deblur}
\end{figure}

     \vspace{5cm}

\begin{figure}[!ht]
    \centering

    \begin{minipage}[b]{0.13\textwidth}
        \centering
        {\footnotesize \text{Clean}}
    \end{minipage}
    \begin{minipage}[b]{0.13\textwidth}
        \centering
        \small Masked
    \end{minipage}
    \begin{minipage}[b]{0.13\textwidth}
        \centering
        \small PnP-GS
    \end{minipage}
    \begin{minipage}[b]{0.13\textwidth}
        \centering
        \small OT-ODE
    \end{minipage}
    \begin{minipage}[b]{0.13\textwidth}
        \centering
        \small Flow-Priors
    \end{minipage}
    \begin{minipage}[b]{0.13\textwidth}
        \centering
        \small PnP-Flow
    \end{minipage}
    \begin{minipage}[b]{0.13\textwidth}
        \centering
        \small Ours
    \end{minipage}

    \begin{minipage}[b]{0.13\textwidth}
    \captionsetup{skip=-0.01cm}
        \includegraphics[width=\linewidth]{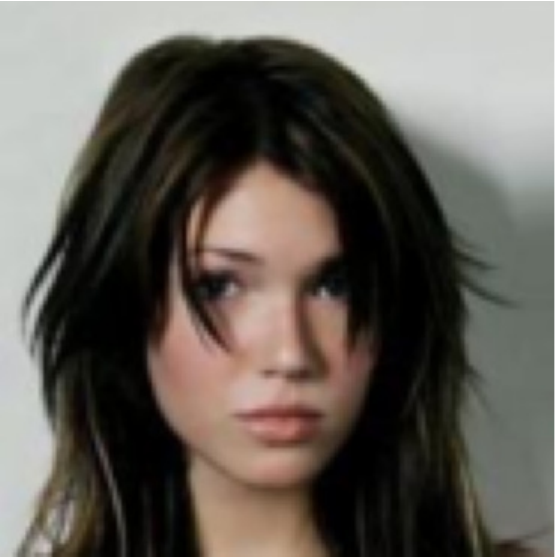}
        \makebox[\linewidth][c]{\footnotesize }
    \end{minipage}
    \begin{minipage}[b]{0.13\textwidth}
    \captionsetup{skip=-0.01cm}
        \includegraphics[width=\linewidth]{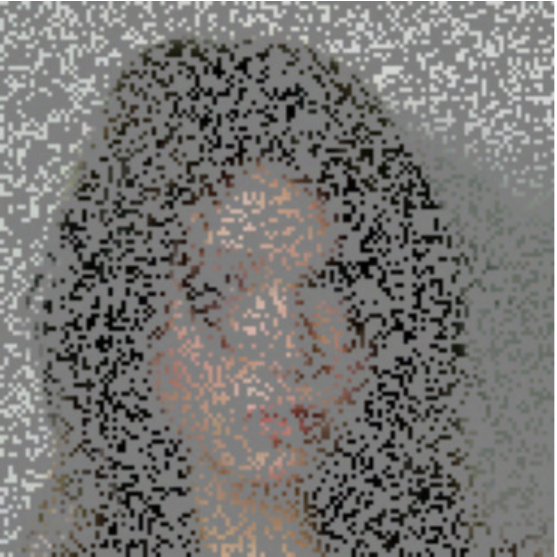}
        \makebox[\linewidth][c]{\footnotesize PSNR: 11.09}
    \end{minipage}
    \begin{minipage}[b]{0.13\textwidth}
    \captionsetup{skip=-0.01cm}
        \includegraphics[width=\linewidth]{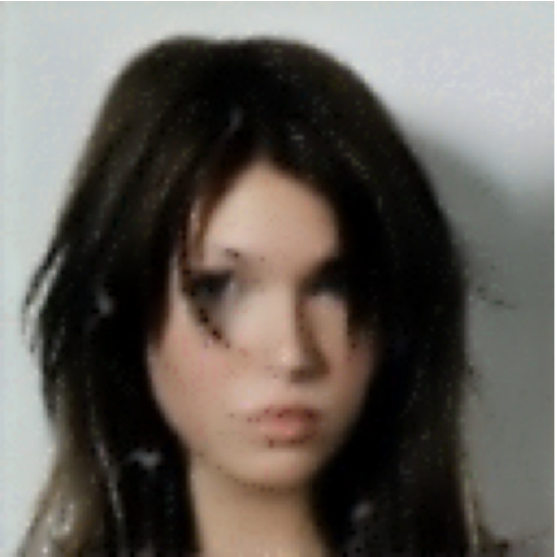}
       \makebox[\linewidth][c]{\footnotesize PSNR: 29.63}
    \end{minipage}
    \begin{minipage}[b]{0.13\textwidth}
    \captionsetup{skip=-0.01cm}
        \includegraphics[width=\linewidth]{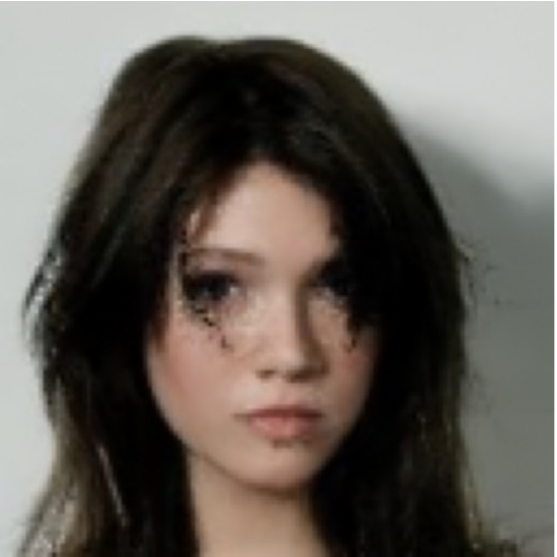}
       \makebox[\linewidth][c]{\footnotesize PSNR:  28.99}
    \end{minipage}
    \begin{minipage}[b]{0.13\textwidth}
    \captionsetup{skip=-0.01cm}
        \includegraphics[width=\linewidth]{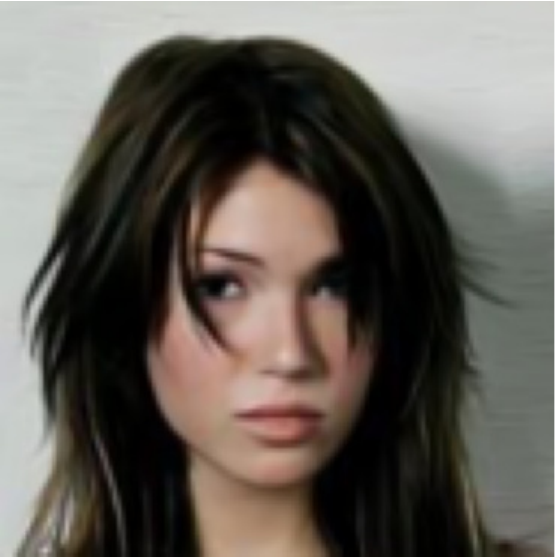}
       \makebox[\linewidth][c]{\footnotesize PSNR:  34.41}
    \end{minipage}
    \begin{minipage}[b]{0.13\textwidth}
    \captionsetup{skip=-0.01cm}
        \includegraphics[width=\linewidth]{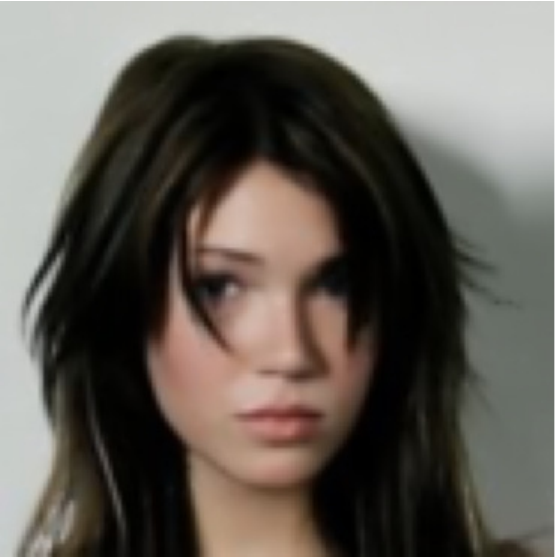}
       \makebox[\linewidth][c]{\footnotesize PSNR:  35.28}
    \end{minipage}
    \begin{minipage}[b]{0.13\textwidth}
    \captionsetup{skip=-0.01cm}
        \includegraphics[width=\linewidth]{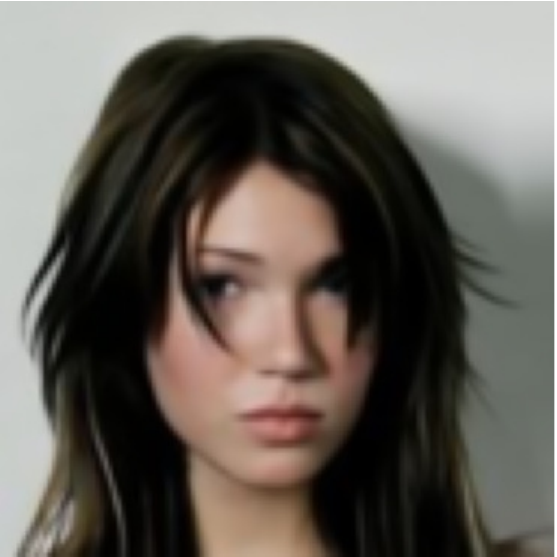}
       \makebox[\linewidth][c]{\footnotesize PSNR:  \textbf{36.40}}
    \end{minipage}

         \vspace{0.2cm}

    \begin{minipage}[b]{0.13\textwidth}
    \captionsetup{skip=-0.01cm}
        \includegraphics[width=\linewidth]{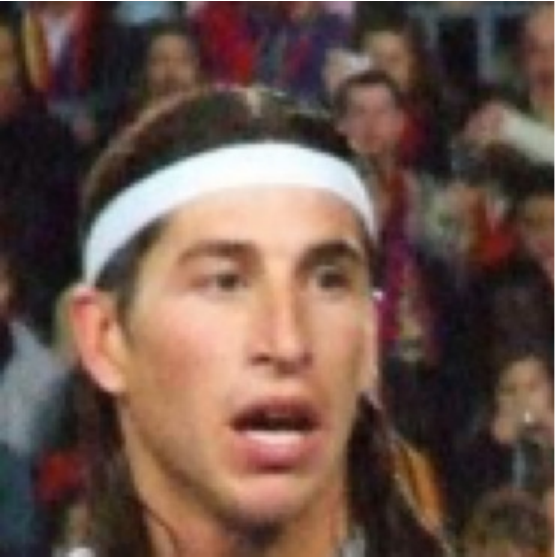}
       \makebox[\linewidth][c]{\footnotesize }
    \end{minipage}
    \begin{minipage}[b]{0.13\textwidth}
    \captionsetup{skip=-0.01cm}
        \includegraphics[width=\linewidth]{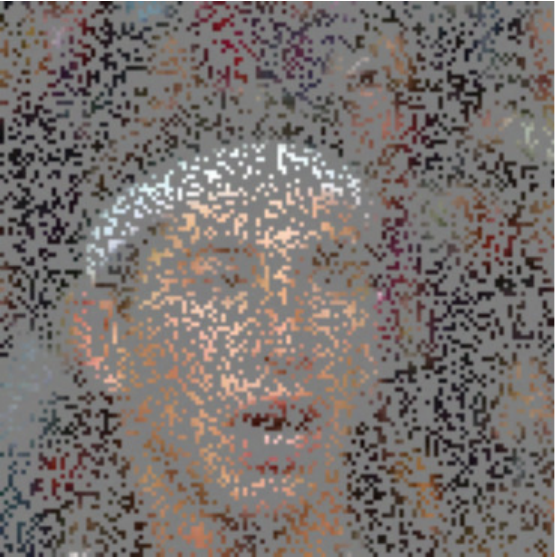}
       \makebox[\linewidth][c]{\footnotesize PSNR: 12.34}
    \end{minipage}
    \begin{minipage}[b]{0.13\textwidth}
    \captionsetup{skip=-0.01cm}
        \includegraphics[width=\linewidth]{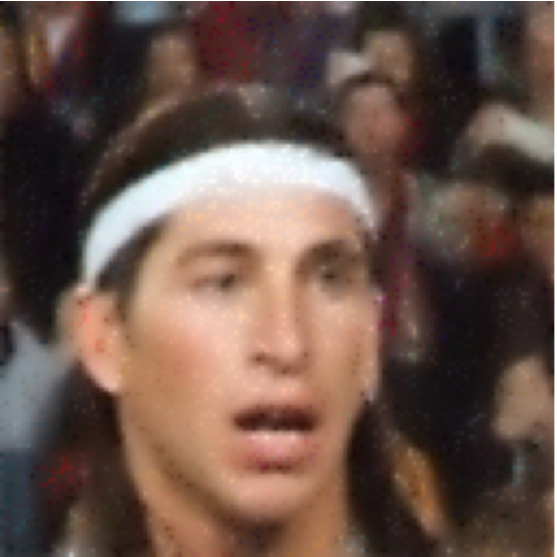}
       \makebox[\linewidth][c]{\footnotesize PSNR:  28.05}
    \end{minipage}
    \begin{minipage}[b]{0.13\textwidth}
    \captionsetup{skip=-0.01cm}
        \includegraphics[width=\linewidth]{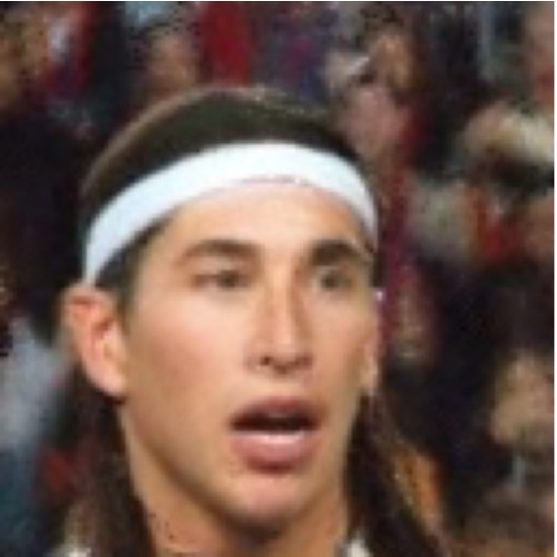}
       \makebox[\linewidth][c]{\footnotesize PSNR:  26.95}
    \end{minipage}
    \begin{minipage}[b]{0.13\textwidth}
    \captionsetup{skip=-0.01cm}
        \includegraphics[width=\linewidth]{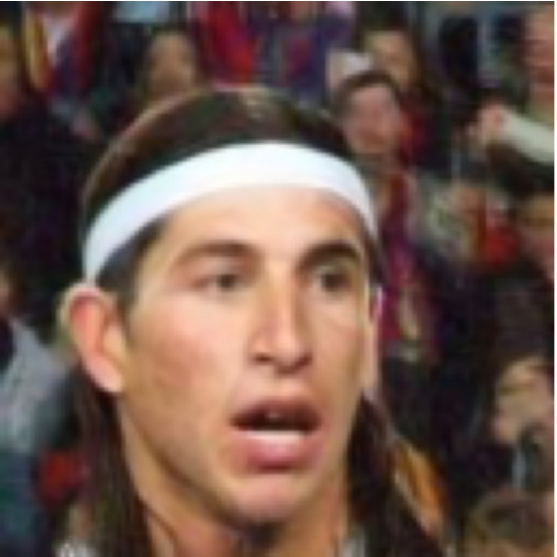}
       \makebox[\linewidth][c]{\footnotesize PSNR:  30.92}
    \end{minipage}
    \begin{minipage}[b]{0.13\textwidth}
    \captionsetup{skip=-0.01cm}
        \includegraphics[width=\linewidth]{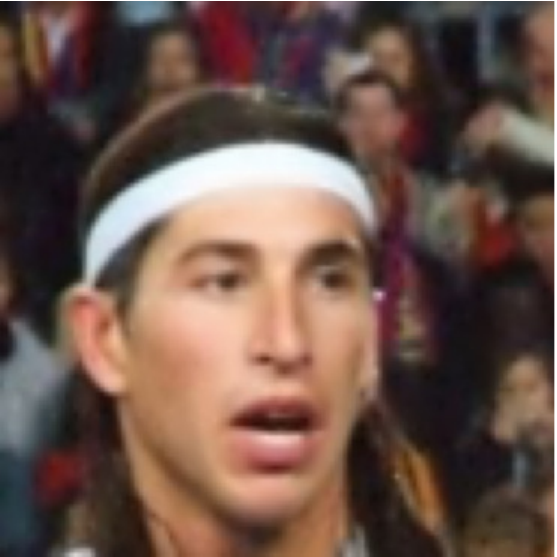}
       \makebox[\linewidth][c]{\footnotesize PSNR:  31.96}
    \end{minipage}
    \begin{minipage}[b]{0.13\textwidth}
    \captionsetup{skip=-0.01cm}
        \includegraphics[width=\linewidth]{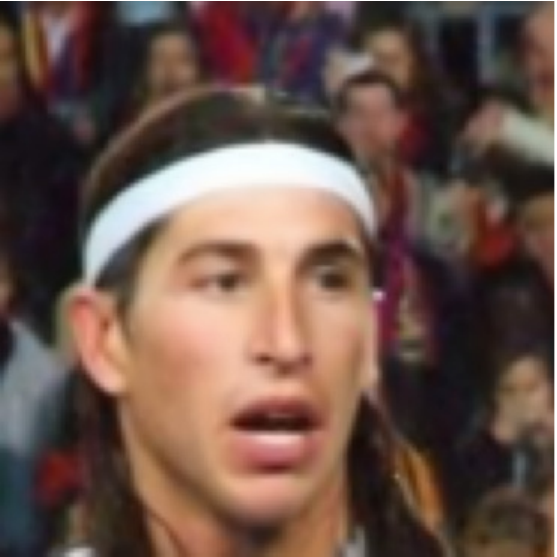}
       \makebox[\linewidth][c]{\footnotesize PSNR: \textbf{33.24}}
    \end{minipage}

     \vspace{0.2cm}

    \begin{minipage}[b]{0.13\textwidth}
    \captionsetup{skip=-0.01cm}
        \includegraphics[width=\linewidth]{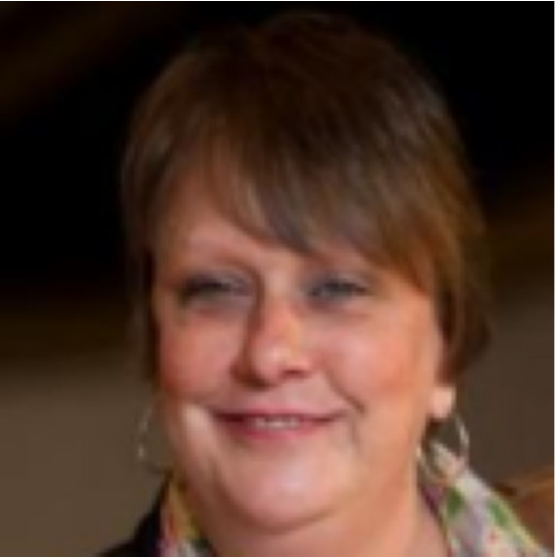}
       \makebox[\linewidth][c]{\footnotesize }
    \end{minipage}
    \begin{minipage}[b]{0.13\textwidth}
    \captionsetup{skip=-0.01cm}
        \includegraphics[width=\linewidth]{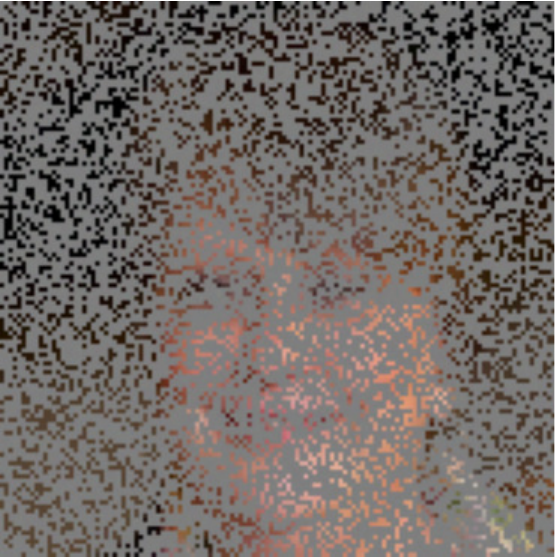}
       \makebox[\linewidth][c]{\footnotesize PSNR:  10.74}
    \end{minipage}
    \begin{minipage}[b]{0.13\textwidth}
    \captionsetup{skip=-0.01cm}
        \includegraphics[width=\linewidth]{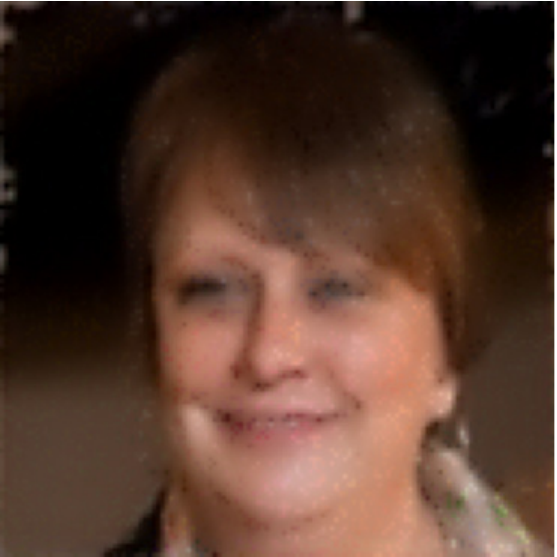}
       \makebox[\linewidth][c]{\footnotesize PSNR: 29.71}
    \end{minipage}
    \begin{minipage}[b]{0.13\textwidth}
    \captionsetup{skip=-0.01cm}
        \includegraphics[width=\linewidth]{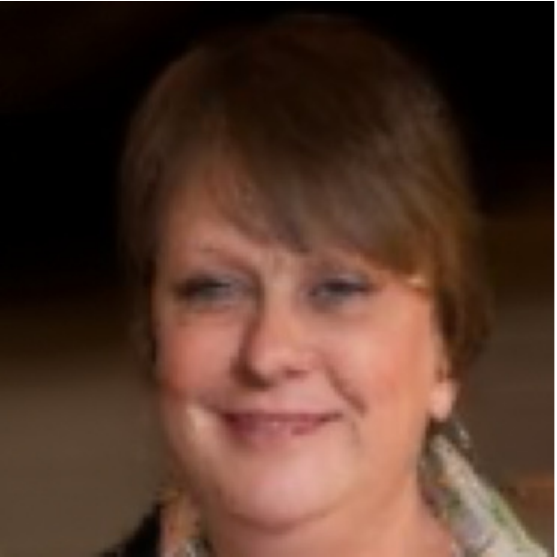}
       \makebox[\linewidth][c]{\footnotesize PSNR:  31.77}
    \end{minipage}
    \begin{minipage}[b]{0.13\textwidth}
    \captionsetup{skip=-0.01cm}
        \includegraphics[width=\linewidth]{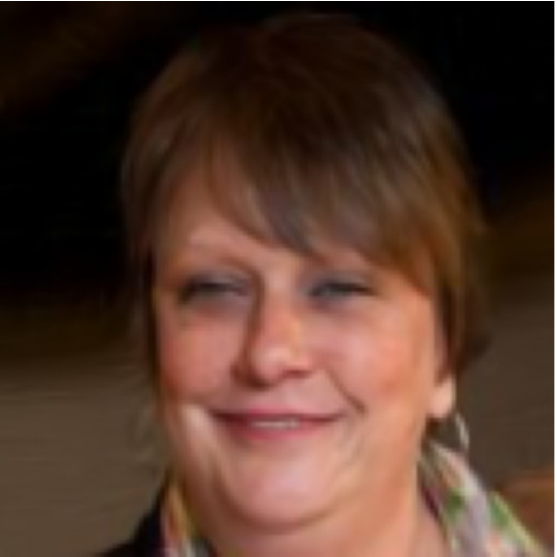}
       \makebox[\linewidth][c]{\footnotesize PSNR:  35.65}
    \end{minipage}
    \begin{minipage}[b]{0.13\textwidth}
    \captionsetup{skip=-0.01cm}
        \includegraphics[width=\linewidth]{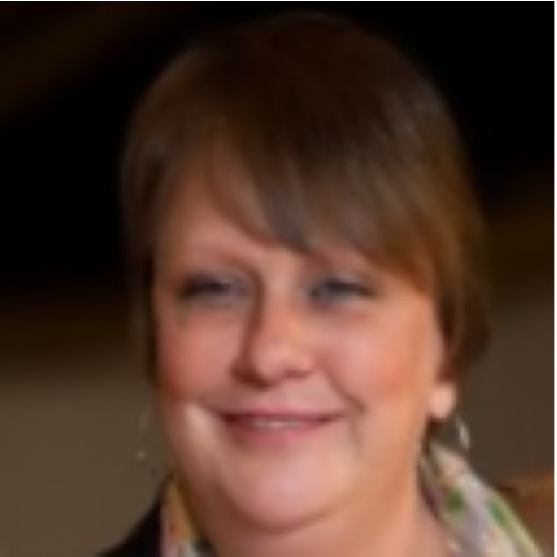}
       \makebox[\linewidth][c]{\footnotesize PSNR:  36.86}
    \end{minipage}
    \begin{minipage}[b]{0.13\textwidth}
    \captionsetup{skip=-0.01cm}
        \includegraphics[width=\linewidth]{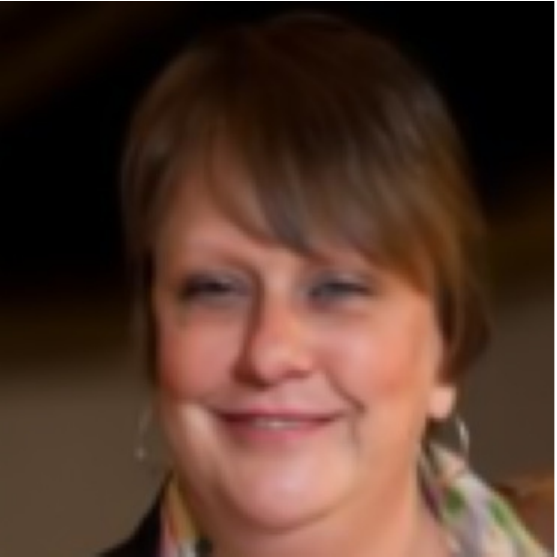}
       \makebox[\linewidth][c]{\footnotesize PSNR:  \textbf{38.27}}
    \end{minipage}

    \caption{Comparison of random inpainting results on CelebA.}
    \label{fig:celeba_inpaint}

\end{figure}

     \vspace{5cm}

\begin{figure}[!ht]
    \centering

    \begin{minipage}[b]{0.13\textwidth}
        \centering
        {\footnotesize \text{Clean}}
    \end{minipage}
    \begin{minipage}[b]{0.13\textwidth}
        \centering
        \small Low-resolution
    \end{minipage}
    \begin{minipage}[b]{0.13\textwidth}
        \centering
        \small PnP-GS
    \end{minipage}
    \begin{minipage}[b]{0.13\textwidth}
        \centering
        \small OT-ODE
    \end{minipage}
    \begin{minipage}[b]{0.13\textwidth}
        \centering
        \small Flow-Priors
    \end{minipage}
    \begin{minipage}[b]{0.13\textwidth}
        \centering
        \small PnP-Flow
    \end{minipage}
    \begin{minipage}[b]{0.13\textwidth}
        \centering
        \small Ours
    \end{minipage}

    \begin{minipage}[b]{0.13\textwidth}
    \captionsetup{skip=-0.01cm}
        \includegraphics[width=\linewidth]{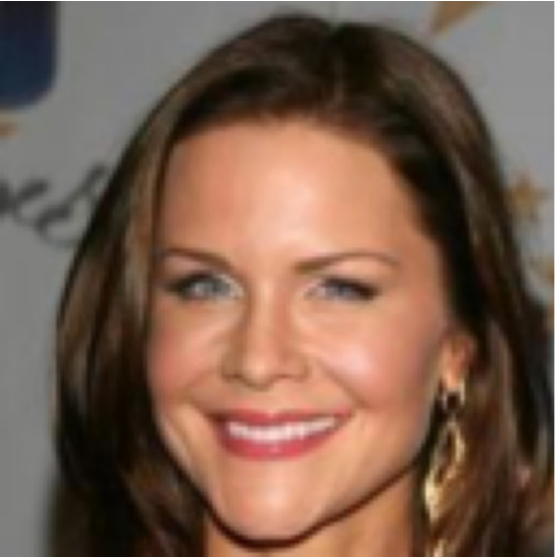}
        \makebox[\linewidth][c]{\footnotesize }
    \end{minipage}
    \begin{minipage}[b]{0.13\textwidth}
    \captionsetup{skip=-0.01cm}
        \includegraphics[width=\linewidth]{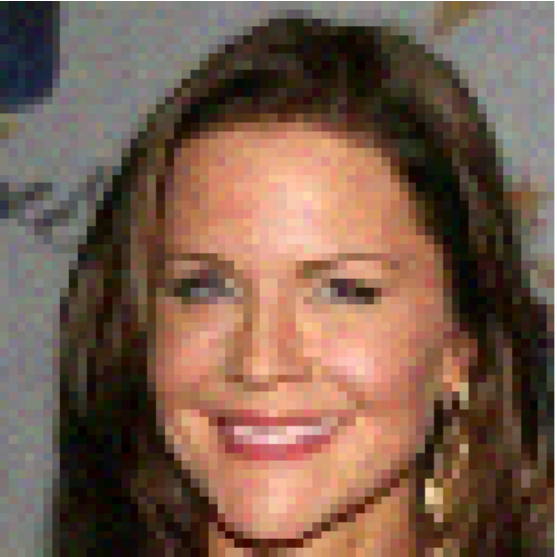}
        \makebox[\linewidth][c]{\footnotesize PSNR: 7.94}
    \end{minipage}
    \begin{minipage}[b]{0.13\textwidth}
    \captionsetup{skip=-0.01cm}
        \includegraphics[width=\linewidth]{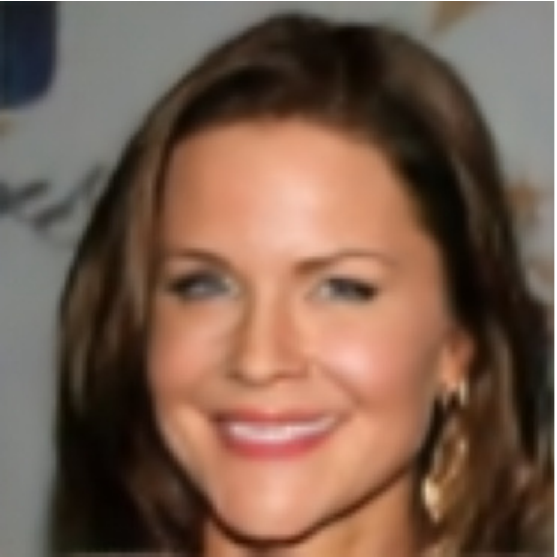}
       \makebox[\linewidth][c]{\footnotesize PSNR: 31.67}
    \end{minipage}
    \begin{minipage}[b]{0.13\textwidth}
    \captionsetup{skip=-0.01cm}
        \includegraphics[width=\linewidth]{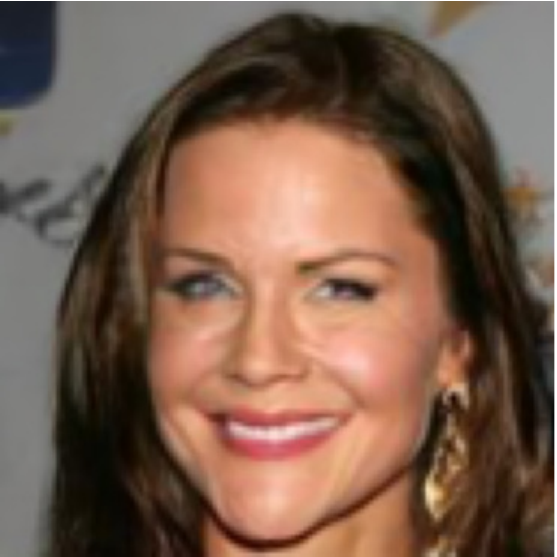}
       \makebox[\linewidth][c]{\footnotesize PSNR:  31.46}
    \end{minipage}
    \begin{minipage}[b]{0.13\textwidth}
    \captionsetup{skip=-0.01cm}
        \includegraphics[width=\linewidth]{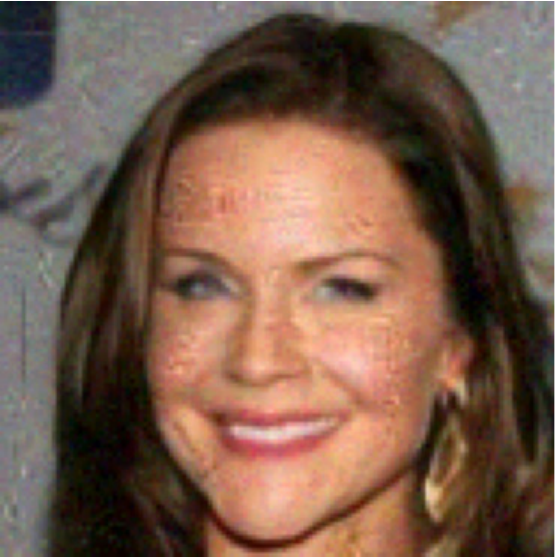}
       \makebox[\linewidth][c]{\footnotesize PSNR:  28.53}
    \end{minipage}
    \begin{minipage}[b]{0.13\textwidth}
    \captionsetup{skip=-0.01cm}
        \includegraphics[width=\linewidth]{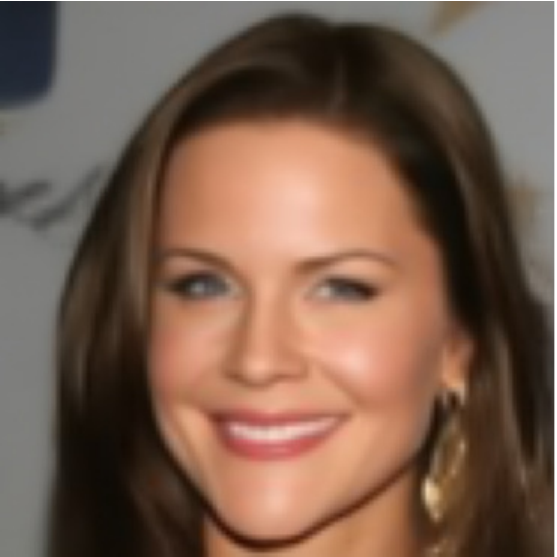}
       \makebox[\linewidth][c]{\footnotesize PSNR:  31.63}
    \end{minipage}
    \begin{minipage}[b]{0.13\textwidth}
    \captionsetup{skip=-0.01cm}
        \includegraphics[width=\linewidth]{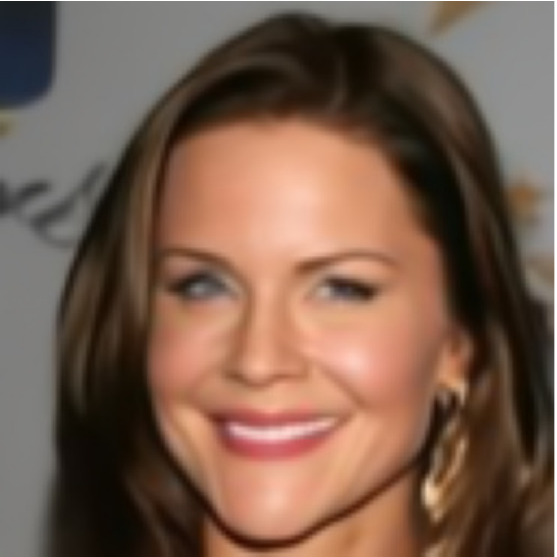}
       \makebox[\linewidth][c]{\footnotesize PSNR:  \textbf{33.42}}
    \end{minipage}

         \vspace{0.2cm}

    \begin{minipage}[b]{0.13\textwidth}
    \captionsetup{skip=-0.01cm}
        \includegraphics[width=\linewidth]{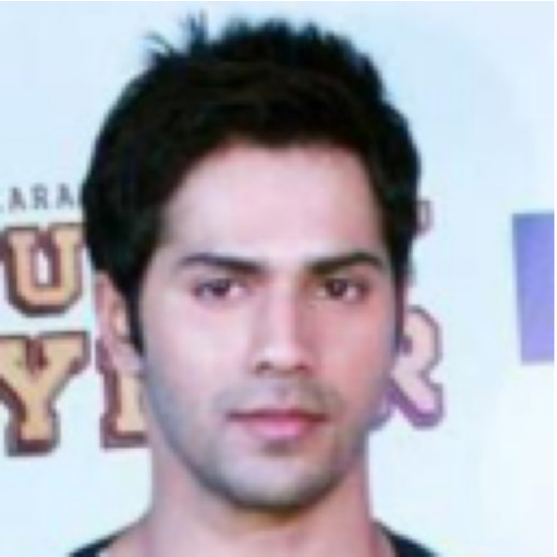}
       \makebox[\linewidth][c]{\footnotesize }
    \end{minipage}
    \begin{minipage}[b]{0.13\textwidth}
    \captionsetup{skip=-0.01cm}
        \includegraphics[width=\linewidth]{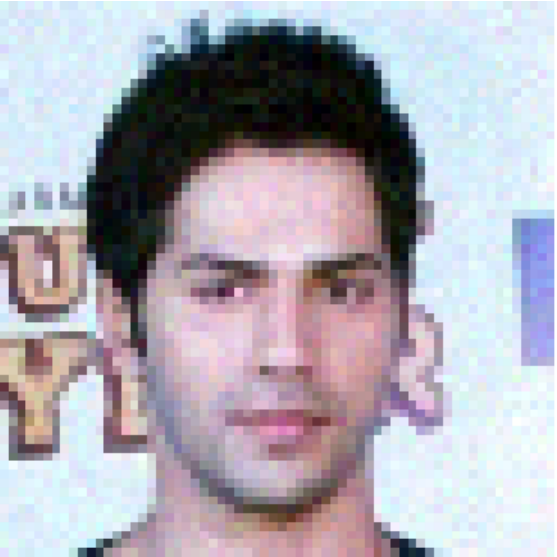}
       \makebox[\linewidth][c]{\footnotesize PSNR: 8.49}
    \end{minipage}
    \begin{minipage}[b]{0.13\textwidth}
    \captionsetup{skip=-0.01cm}
        \includegraphics[width=\linewidth]{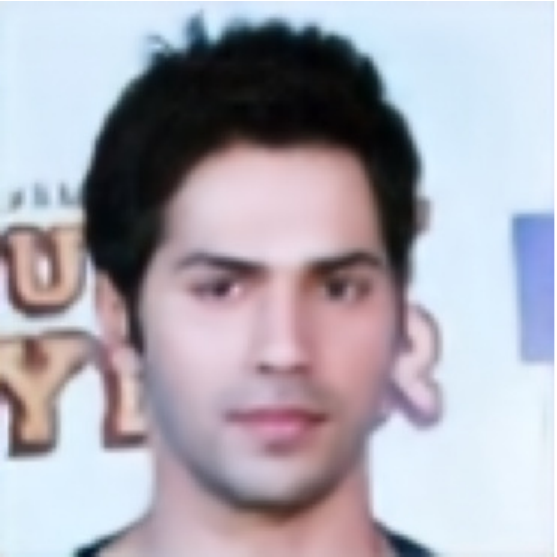}
       \makebox[\linewidth][c]{\footnotesize PSNR: 29.85}
    \end{minipage}
    \begin{minipage}[b]{0.13\textwidth}
    \captionsetup{skip=-0.01cm}
        \includegraphics[width=\linewidth]{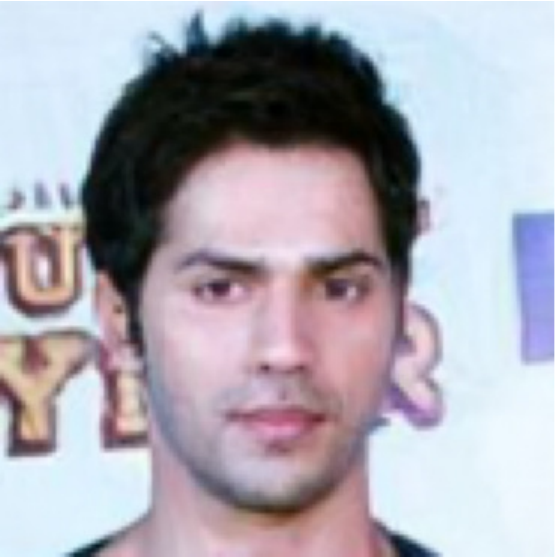}
       \makebox[\linewidth][c]{\footnotesize PSNR:  29.81}
    \end{minipage}
    \begin{minipage}[b]{0.13\textwidth}
    \captionsetup{skip=-0.01cm}
        \includegraphics[width=\linewidth]{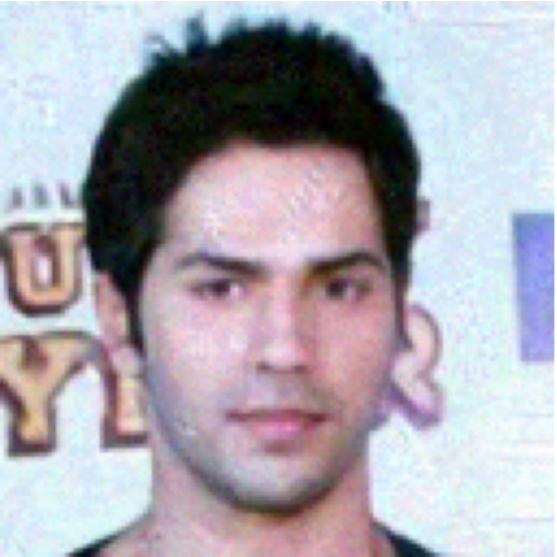}
       \makebox[\linewidth][c]{\footnotesize PSNR:  27.98}
    \end{minipage}
    \begin{minipage}[b]{0.13\textwidth}
    \captionsetup{skip=-0.01cm}
        \includegraphics[width=\linewidth]{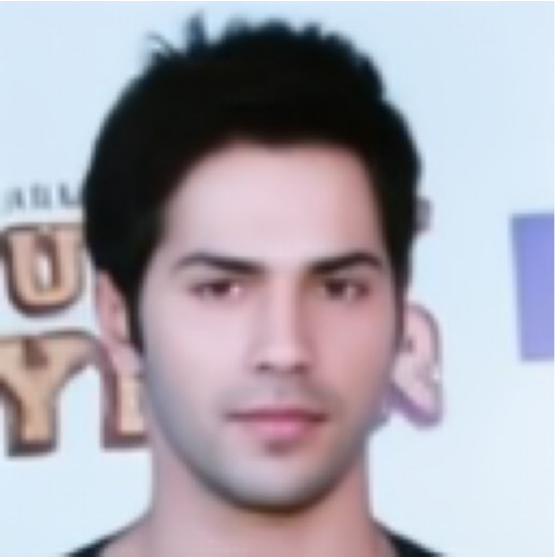}
       \makebox[\linewidth][c]{\footnotesize PSNR:  29.59}
    \end{minipage}
    \begin{minipage}[b]{0.13\textwidth}
    \captionsetup{skip=-0.01cm}
        \includegraphics[width=\linewidth]{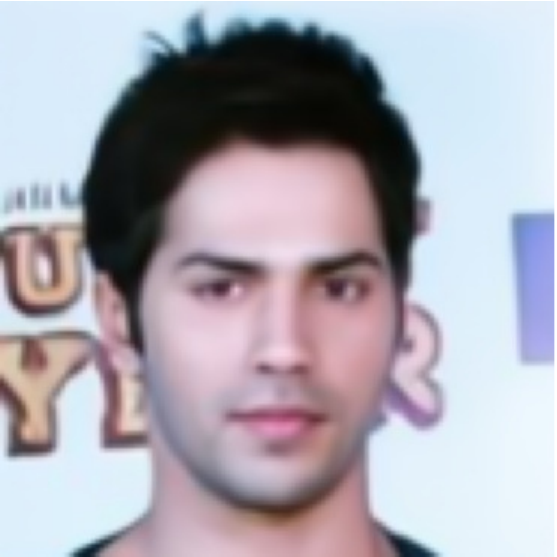}
       \makebox[\linewidth][c]{\footnotesize PSNR: \textbf{31.37}}
    \end{minipage}

     \vspace{0.2cm}

    \begin{minipage}[b]{0.13\textwidth}
    \captionsetup{skip=-0.01cm}
        \includegraphics[width=\linewidth]{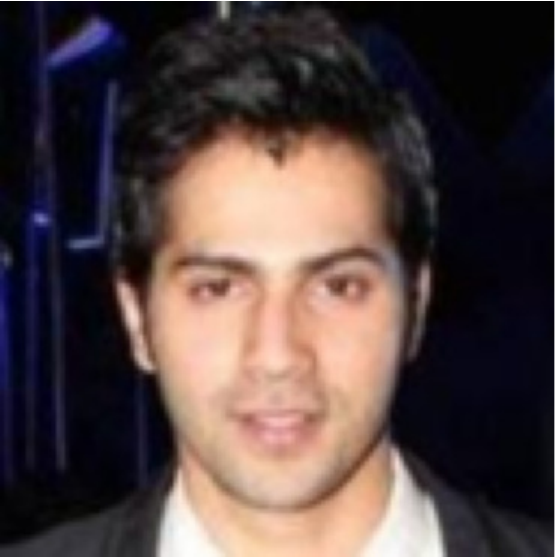}
       \makebox[\linewidth][c]{\footnotesize }
    \end{minipage}
    \begin{minipage}[b]{0.13\textwidth}
    \captionsetup{skip=-0.01cm}
        \includegraphics[width=\linewidth]{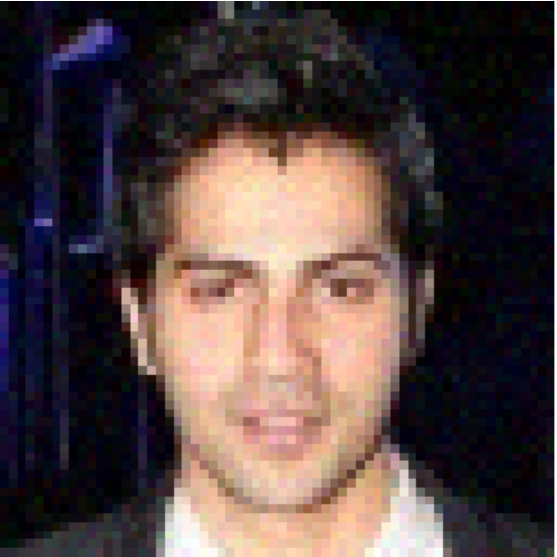}
       \makebox[\linewidth][c]{\footnotesize PSNR:  5.80}
    \end{minipage}
    \begin{minipage}[b]{0.13\textwidth}
    \captionsetup{skip=-0.01cm}
        \includegraphics[width=\linewidth]{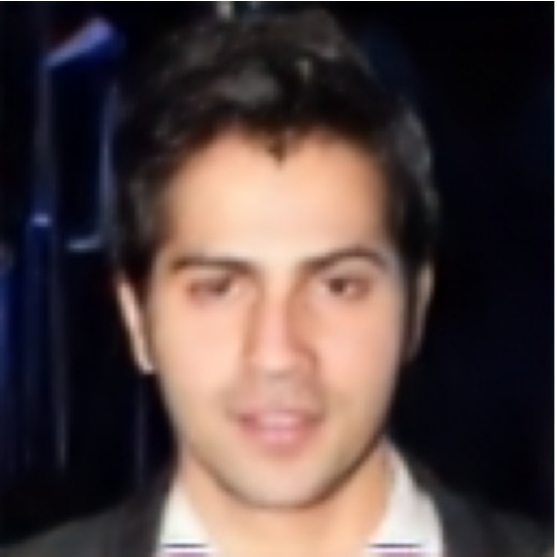}
       \makebox[\linewidth][c]{\footnotesize PSNR:  30.53}
    \end{minipage}
    \begin{minipage}[b]{0.13\textwidth}
    \captionsetup{skip=-0.01cm}
        \includegraphics[width=\linewidth]{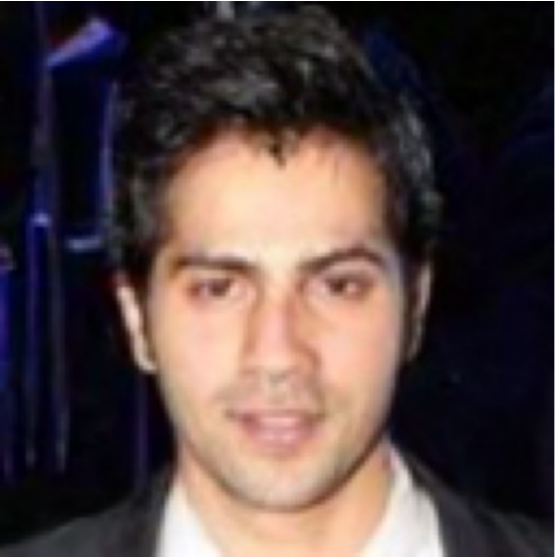}
       \makebox[\linewidth][c]{\footnotesize PSNR:  32.57}
    \end{minipage}
    \begin{minipage}[b]{0.13\textwidth}
    \captionsetup{skip=-0.01cm}
        \includegraphics[width=\linewidth]{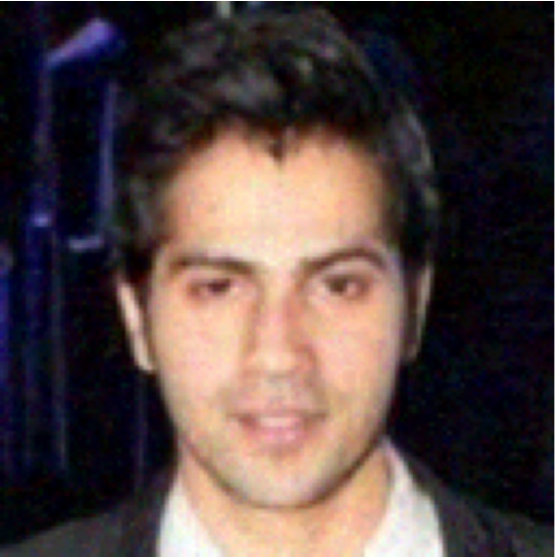}
       \makebox[\linewidth][c]{\footnotesize PSNR:  29.50}
    \end{minipage}
    \begin{minipage}[b]{0.13\textwidth}
    \captionsetup{skip=-0.01cm}
        \includegraphics[width=\linewidth]{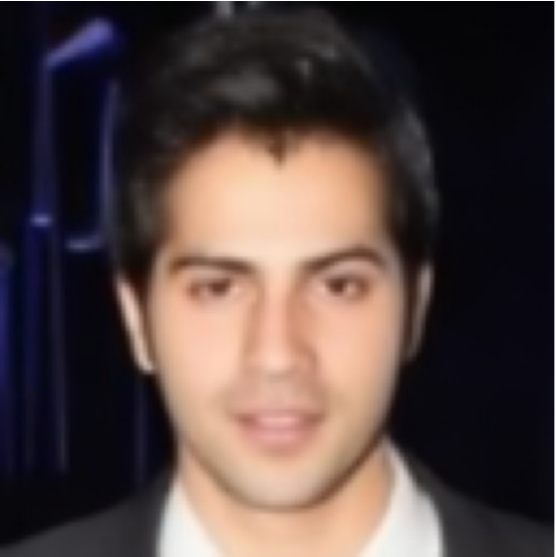}
       \makebox[\linewidth][c]{\footnotesize PSNR:  32.24}
    \end{minipage}
    \begin{minipage}[b]{0.13\textwidth}
    \captionsetup{skip=-0.01cm}
        \includegraphics[width=\linewidth]{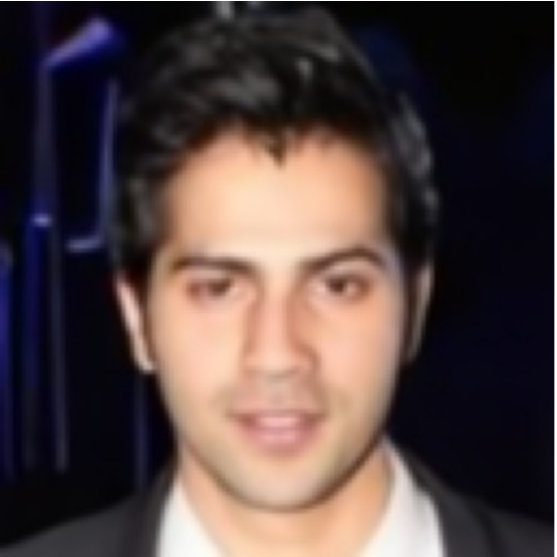}
       \makebox[\linewidth][c]{\footnotesize PSNR:  \textbf{34.35}}
    \end{minipage}

    \caption{Comparison of super-resolution results on CelebA.}
    \label{fig:celeba_sr}

\end{figure}

     \vspace{5cm}

\begin{figure}[!ht]
    \centering

    \begin{minipage}[b]{0.13\textwidth}
        \centering
        {\footnotesize \text{Clean}}
    \end{minipage}
    \begin{minipage}[b]{0.13\textwidth}
        \centering
        \small Masked
    \end{minipage}
    \begin{minipage}[b]{0.13\textwidth}
        \centering
        \small PnP-GS
    \end{minipage}
    \begin{minipage}[b]{0.13\textwidth}
        \centering
        \small OT-ODE
    \end{minipage}
    \begin{minipage}[b]{0.13\textwidth}
        \centering
        \small Flow-Priors
    \end{minipage}
    \begin{minipage}[b]{0.13\textwidth}
        \centering
        \small PnP-Flow
    \end{minipage}
    \begin{minipage}[b]{0.13\textwidth}
        \centering
        \small Ours
    \end{minipage}

    \begin{minipage}[b]{0.13\textwidth}
    \captionsetup{skip=-0.01cm}
        \includegraphics[width=\linewidth]{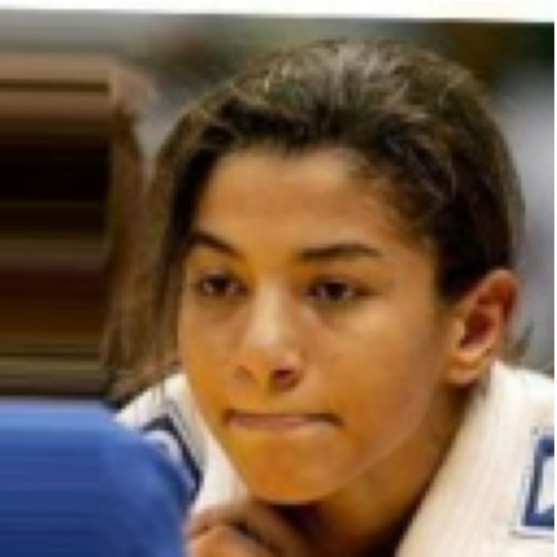}
        \makebox[\linewidth][c]{\footnotesize }
    \end{minipage}
    \begin{minipage}[b]{0.13\textwidth}
    \captionsetup{skip=-0.01cm}
        \includegraphics[width=\linewidth]{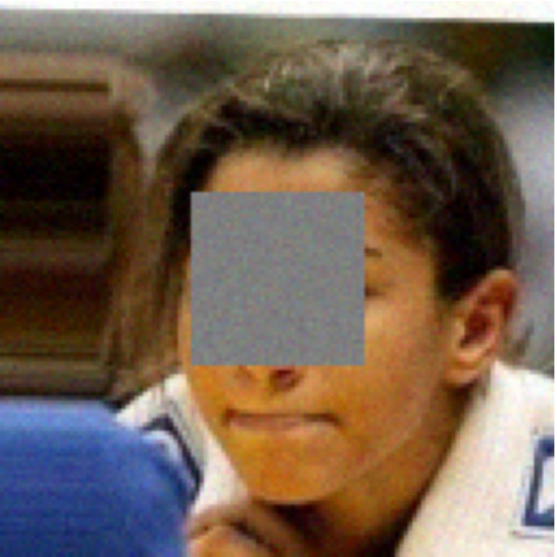}
        \makebox[\linewidth][c]{\footnotesize PSNR: 22.36}
    \end{minipage}
    \begin{minipage}[b]{0.13\textwidth}
    \captionsetup{skip=-0.01cm}
        \centering
       \makebox[\linewidth][c]{\footnotesize PSNR: N/A}
    \end{minipage}
    \begin{minipage}[b]{0.13\textwidth}
    \captionsetup{skip=-0.01cm}
        \includegraphics[width=\linewidth]{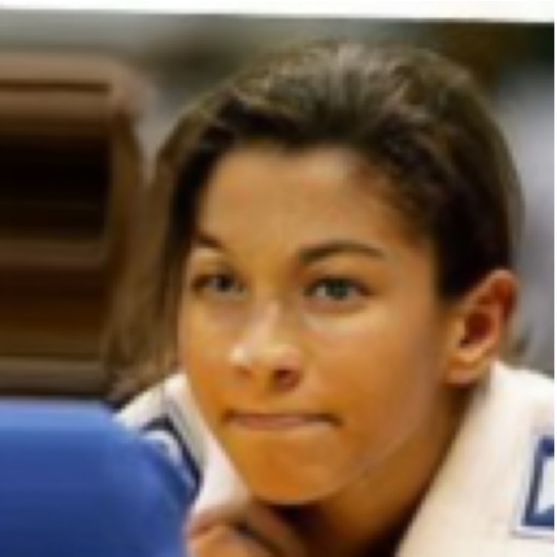}
       \makebox[\linewidth][c]{\footnotesize PSNR:  30.46}
    \end{minipage}
    \begin{minipage}[b]{0.13\textwidth}
    \captionsetup{skip=-0.01cm}
        \includegraphics[width=\linewidth]{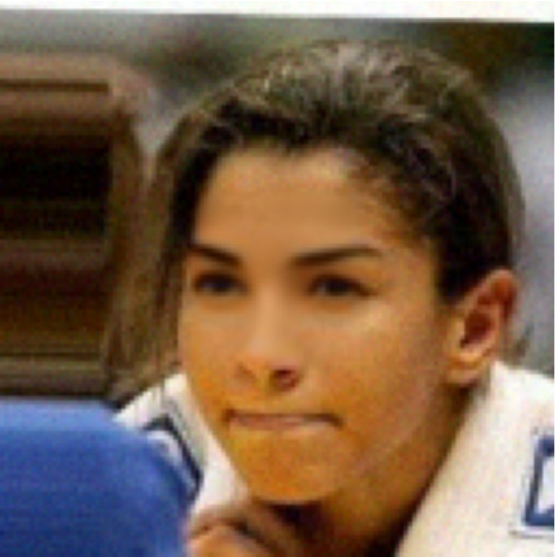}
       \makebox[\linewidth][c]{\footnotesize PSNR:  30.17}
    \end{minipage}
    \begin{minipage}[b]{0.13\textwidth}
    \captionsetup{skip=-0.01cm}
        \includegraphics[width=\linewidth]{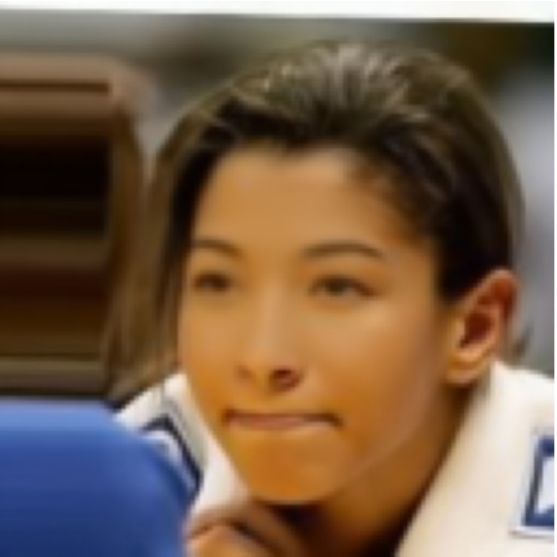}
       \makebox[\linewidth][c]{\footnotesize PSNR:  30.23}
    \end{minipage}
    \begin{minipage}[b]{0.13\textwidth}
    \captionsetup{skip=-0.01cm}
        \includegraphics[width=\linewidth]{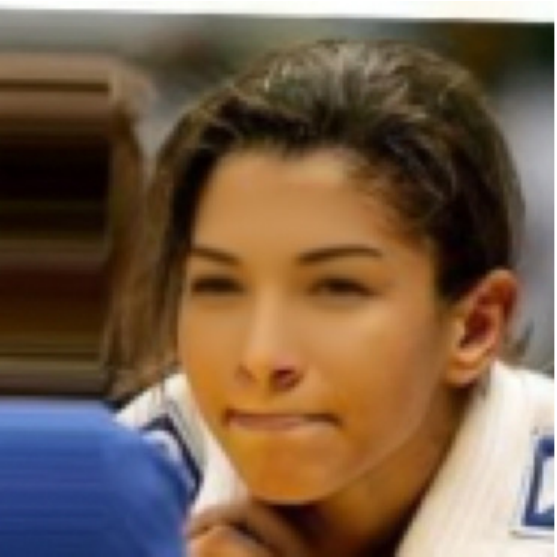}
       \makebox[\linewidth][c]{\footnotesize PSNR:  \textbf{31.44}}
    \end{minipage}

         \vspace{0.2cm}

    \begin{minipage}[b]{0.13\textwidth}
    \captionsetup{skip=-0.01cm}
        \includegraphics[width=\linewidth]{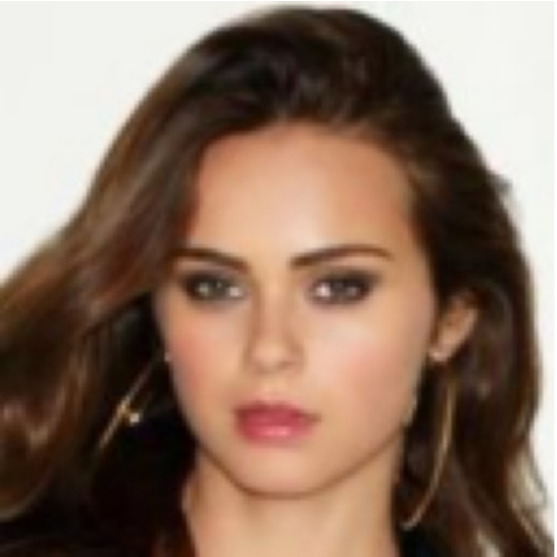}
       \makebox[\linewidth][c]{\footnotesize }
    \end{minipage}
    \begin{minipage}[b]{0.13\textwidth}
    \captionsetup{skip=-0.01cm}
        \includegraphics[width=\linewidth]{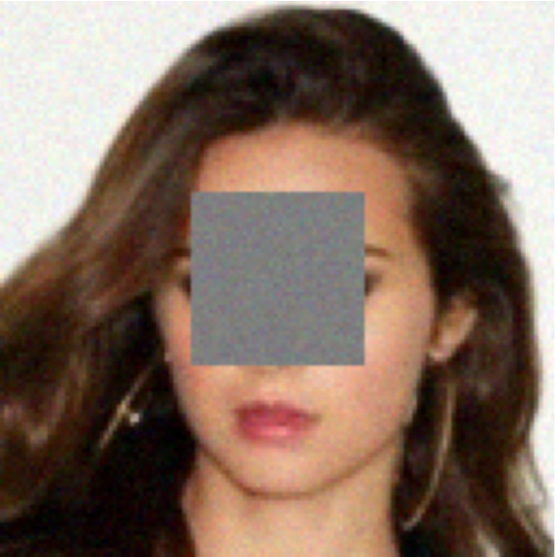}
       \makebox[\linewidth][c]{\footnotesize PSNR: 22.11}
    \end{minipage}
    \begin{minipage}[b]{0.13\textwidth}
    \captionsetup{skip=-0.01cm}
        \centering
       \makebox[\linewidth][c]{\footnotesize PSNR:  N/A}
    \end{minipage}
    \begin{minipage}[b]{0.13\textwidth}
    \captionsetup{skip=-0.01cm}
        \includegraphics[width=\linewidth]{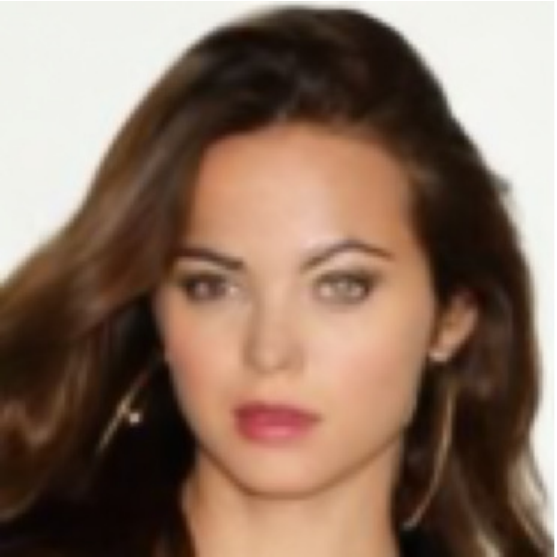}
       \makebox[\linewidth][c]{\footnotesize PSNR:  30.96}
    \end{minipage}
    \begin{minipage}[b]{0.13\textwidth}
    \captionsetup{skip=-0.01cm}
        \includegraphics[width=\linewidth]{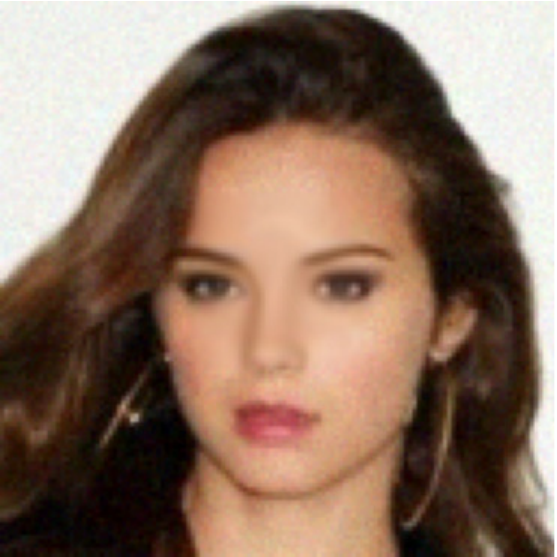}
       \makebox[\linewidth][c]{\footnotesize PSNR:  31.83}
    \end{minipage}
    \begin{minipage}[b]{0.13\textwidth}
    \captionsetup{skip=-0.01cm}
        \includegraphics[width=\linewidth]{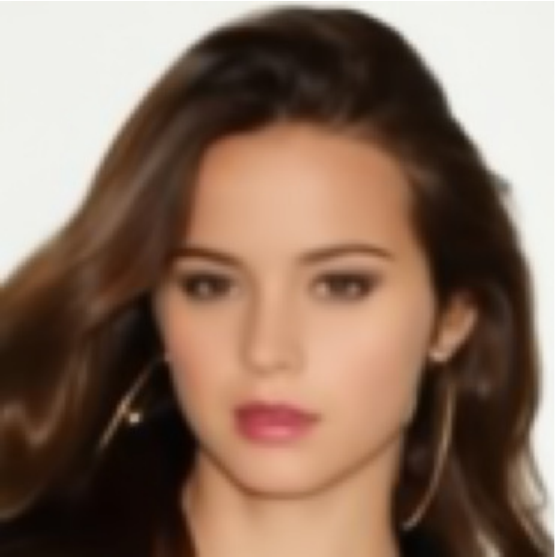}
       \makebox[\linewidth][c]{\footnotesize PSNR:  33.30}
    \end{minipage}
    \begin{minipage}[b]{0.13\textwidth}
    \captionsetup{skip=-0.01cm}
        \includegraphics[width=\linewidth]{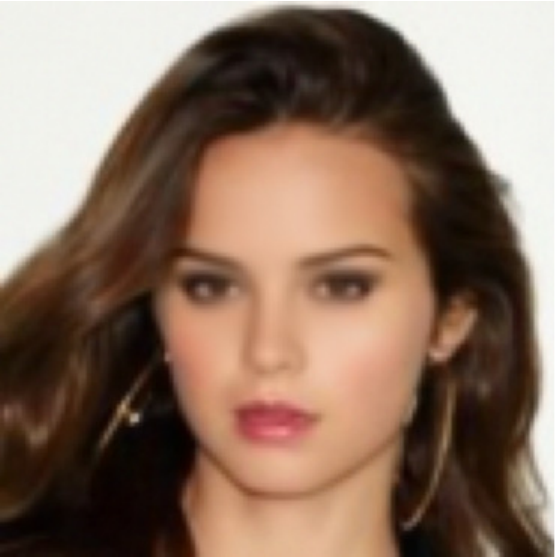}
       \makebox[\linewidth][c]{\footnotesize PSNR: \textbf{34.82}}
    \end{minipage}

     \vspace{0.2cm}

    \begin{minipage}[b]{0.13\textwidth}
    \captionsetup{skip=-0.01cm}
        \includegraphics[width=\linewidth]{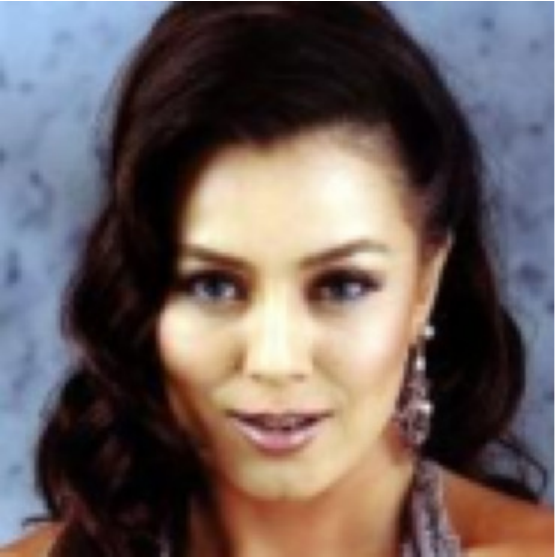}
       \makebox[\linewidth][c]{\footnotesize }
    \end{minipage}
    \begin{minipage}[b]{0.13\textwidth}
    \captionsetup{skip=-0.01cm}
        \includegraphics[width=\linewidth]{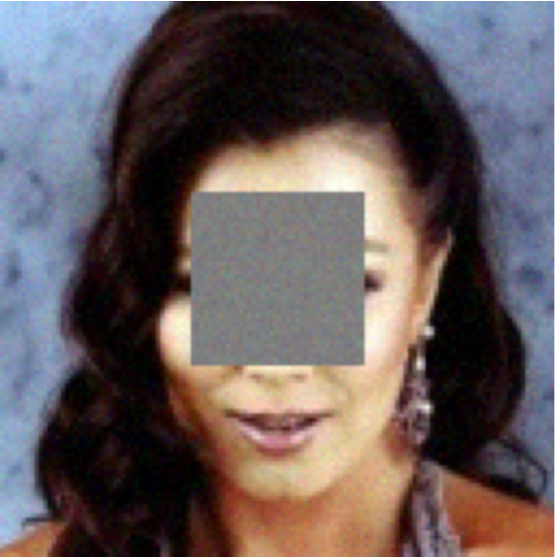}
       \makebox[\linewidth][c]{\footnotesize PSNR:  19.36}
    \end{minipage}
    \begin{minipage}[b]{0.13\textwidth}
    \captionsetup{skip=-0.01cm}
        \centering
       \makebox[\linewidth][c]{\footnotesize PSNR: N/A}
    \end{minipage}
    \begin{minipage}[b]{0.13\textwidth}
    \captionsetup{skip=-0.01cm}
        \includegraphics[width=\linewidth]{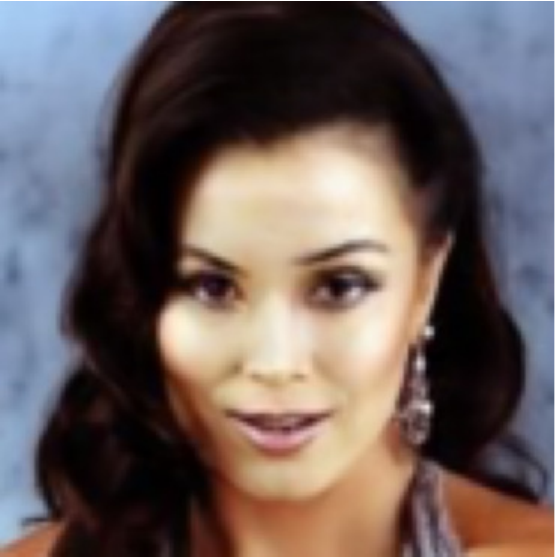}
       \makebox[\linewidth][c]{\footnotesize PSNR:  29.68}
    \end{minipage}
    \begin{minipage}[b]{0.13\textwidth}
    \captionsetup{skip=-0.01cm}
        \includegraphics[width=\linewidth]{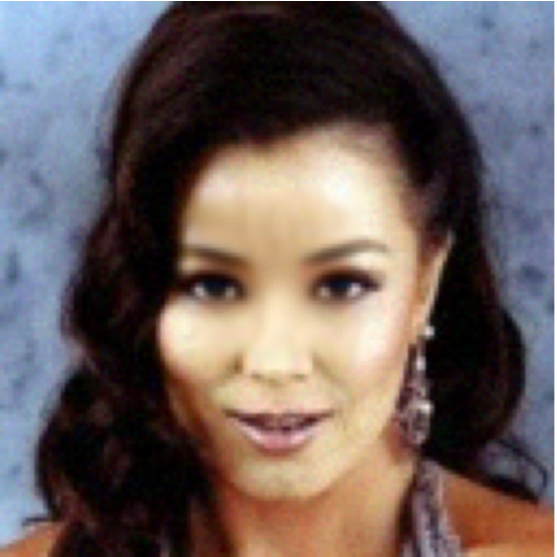}
       \makebox[\linewidth][c]{\footnotesize PSNR:  28.52}
    \end{minipage}
    \begin{minipage}[b]{0.13\textwidth}
    \captionsetup{skip=-0.01cm}
        \includegraphics[width=\linewidth]{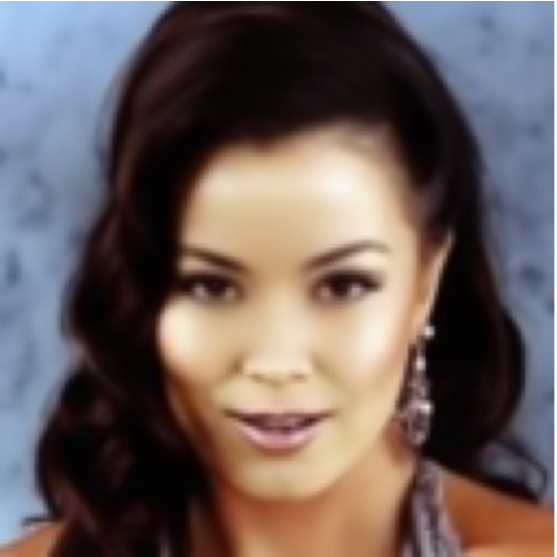}
       \makebox[\linewidth][c]{\footnotesize PSNR: 28.94}
    \end{minipage}
    \begin{minipage}[b]{0.13\textwidth}
    \captionsetup{skip=-0.01cm}
        \includegraphics[width=\linewidth]{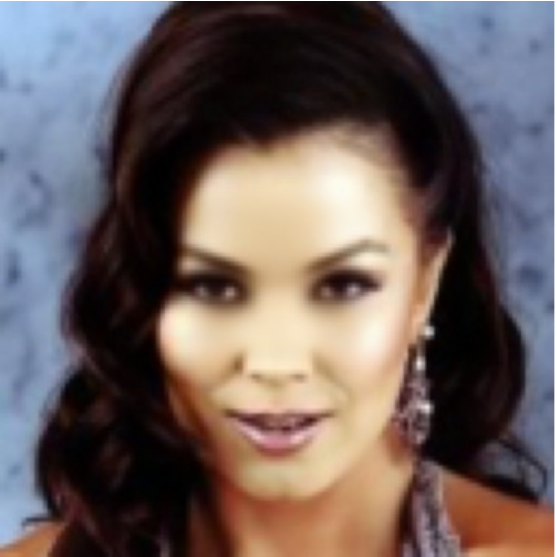}
       \makebox[\linewidth][c]{\footnotesize PSNR:  \textbf{32.02}}
    \end{minipage}

    \caption{Comparison of box inpainting results on CelebA.}
    \label{fig:celeba_inpaint-2}

\end{figure}

\begin{figure}[!ht]
    \centering

    \begin{minipage}[b]{0.13\textwidth}
        \centering
        {\footnotesize \text{Clean}}
    \end{minipage}
    \begin{minipage}[b]{0.13\textwidth}
        \centering
        \small Noisy
    \end{minipage}
    \begin{minipage}[b]{0.13\textwidth}
        \centering
        \small PnP-GS
    \end{minipage}
    \begin{minipage}[b]{0.13\textwidth}
        \centering
        \small OT-ODE
    \end{minipage}
    \begin{minipage}[b]{0.13\textwidth}
        \centering
        \small Flow-Priors
    \end{minipage}
    \begin{minipage}[b]{0.13\textwidth}
        \centering
        \small PnP-Flow
    \end{minipage}
    \begin{minipage}[b]{0.13\textwidth}
        \centering
        \small Ours
    \end{minipage}

    \begin{minipage}[b]{0.13\textwidth}
    \captionsetup{skip=-0.01cm}
        \includegraphics[width=\linewidth]{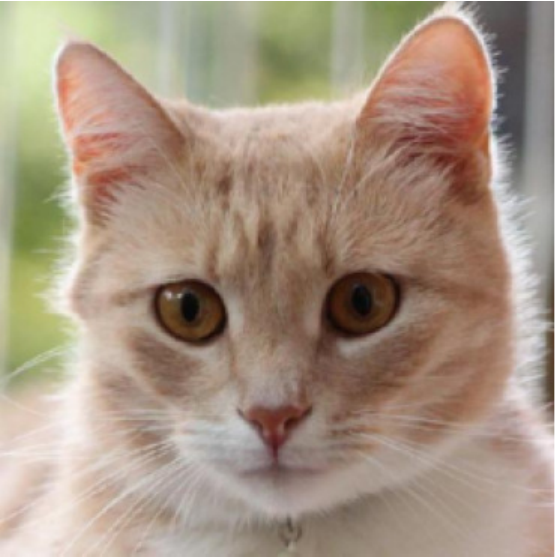}
       \makebox[\linewidth][c]{\footnotesize }
    \end{minipage}
    \begin{minipage}[b]{0.13\textwidth}
    \captionsetup{skip=-0.01cm}
        \includegraphics[width=\linewidth]{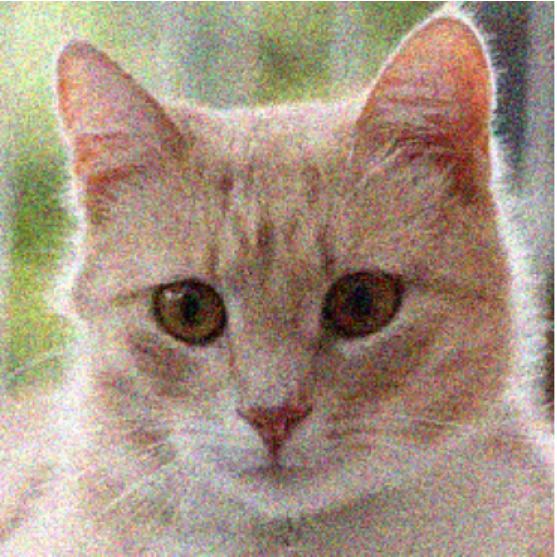}
       \makebox[\linewidth][c]{\footnotesize PSNR: 20.00}
    \end{minipage}
    \begin{minipage}[b]{0.13\textwidth}
    \captionsetup{skip=-0.01cm}
        \includegraphics[width=\linewidth]{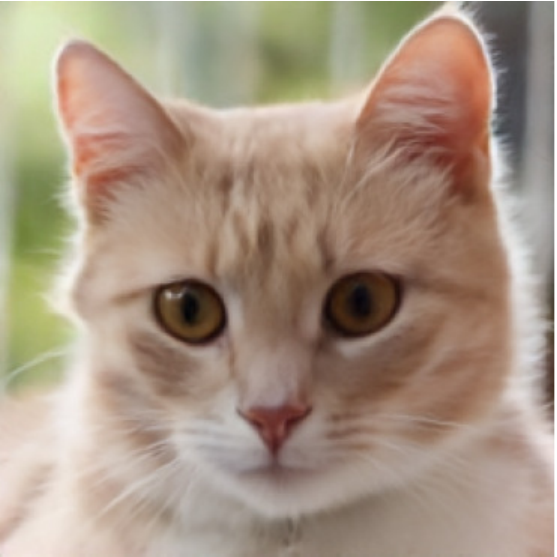}
       \makebox[\linewidth][c]{\footnotesize PSNR:  \textbf{33.43}}
    \end{minipage}
    \begin{minipage}[b]{0.13\textwidth}
    \captionsetup{skip=-0.01cm}
        \includegraphics[width=\linewidth]{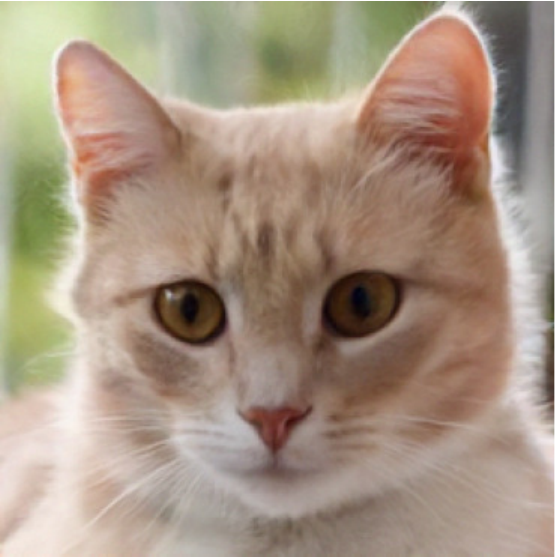}
       \makebox[\linewidth][c]{\footnotesize PSNR:  31.03}
    \end{minipage}
    \begin{minipage}[b]{0.13\textwidth}
    \captionsetup{skip=-0.01cm}
        \includegraphics[width=\linewidth]{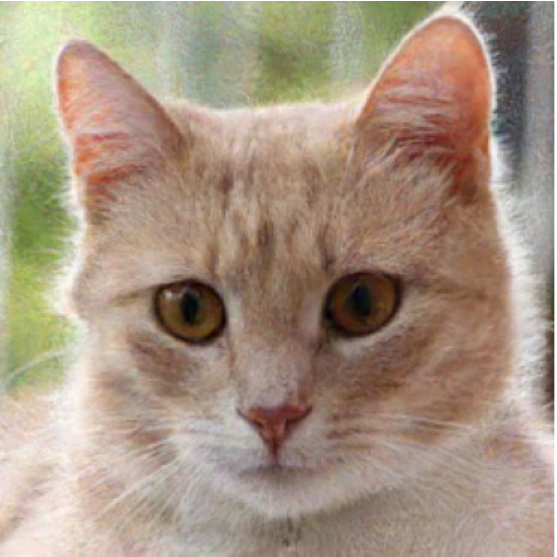}
       \makebox[\linewidth][c]{\footnotesize PSNR:  29.82}
    \end{minipage}
    \begin{minipage}[b]{0.13\textwidth}
    \captionsetup{skip=-0.01cm}
        \includegraphics[width=\linewidth]{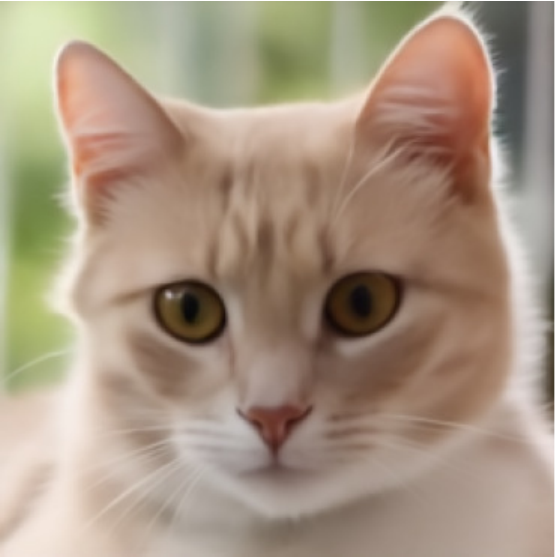}
       \makebox[\linewidth][c]{\footnotesize PSNR:  32.39}
    \end{minipage}
    \begin{minipage}[b]{0.13\textwidth}
    \captionsetup{skip=-0.01cm}
        \includegraphics[width=\linewidth]{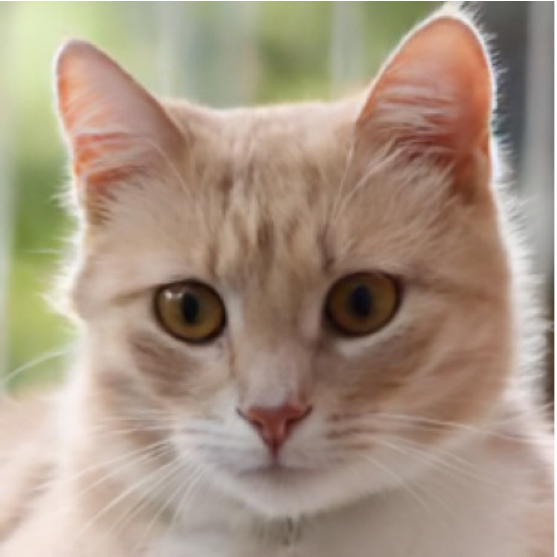}
       \makebox[\linewidth][c]{\footnotesize PSNR: 33.31}
    \end{minipage}

    \begin{minipage}[b]{0.13\textwidth}
    \captionsetup{skip=-0.01cm}
        \includegraphics[width=\linewidth]{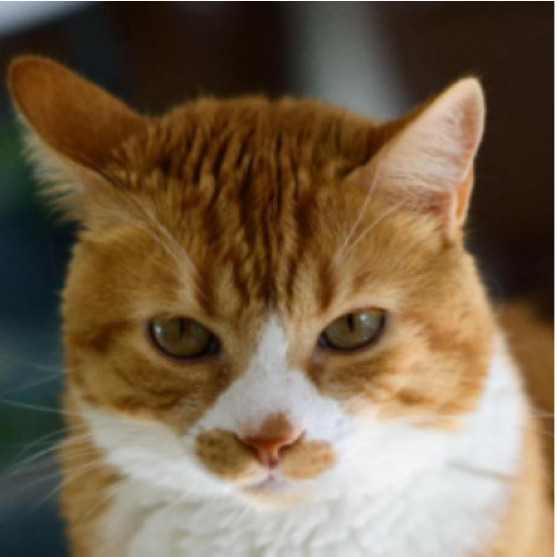}
       \makebox[\linewidth][c]{\footnotesize }
    \end{minipage}
    \begin{minipage}[b]{0.13\textwidth}
    \captionsetup{skip=-0.01cm}
        \includegraphics[width=\linewidth]{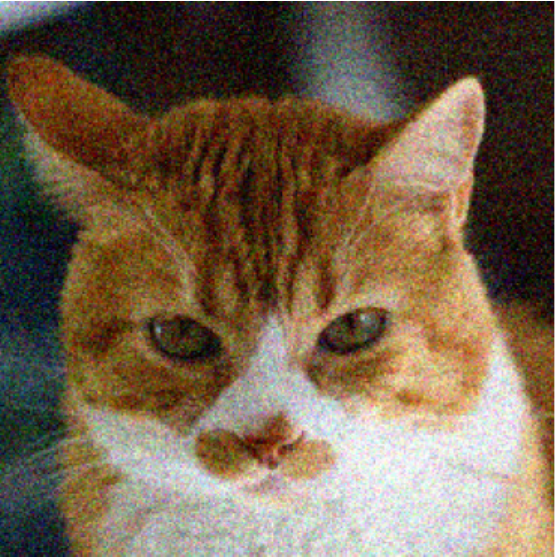}
       \makebox[\linewidth][c]{\footnotesize PSNR:  20.02}
    \end{minipage}
    \begin{minipage}[b]{0.13\textwidth}
    \captionsetup{skip=-0.01cm}
        \includegraphics[width=\linewidth]{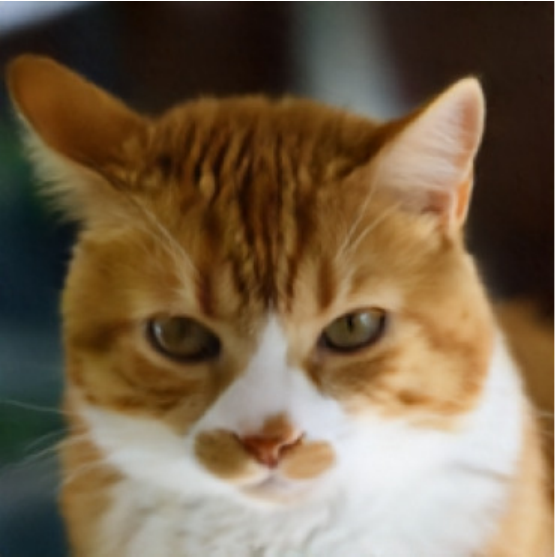}
       \makebox[\linewidth][c]{\footnotesize PSNR:  \textbf{34.11}}
    \end{minipage}
    \begin{minipage}[b]{0.13\textwidth}
    \captionsetup{skip=-0.01cm}
        \includegraphics[width=\linewidth]{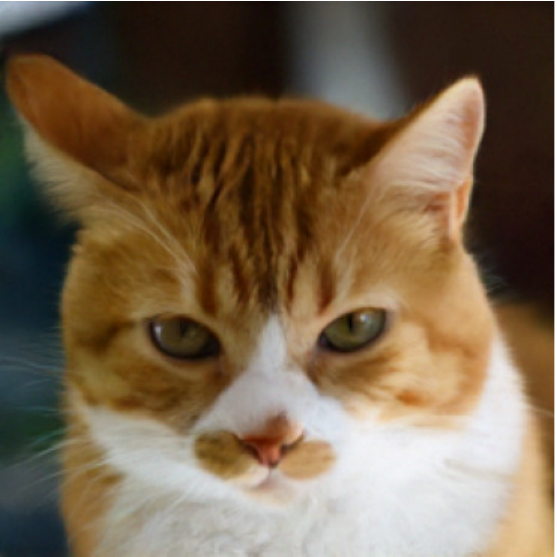}
       \makebox[\linewidth][c]{\footnotesize PSNR:  31.41}
    \end{minipage}
    \begin{minipage}[b]{0.13\textwidth}
    \captionsetup{skip=-0.01cm}
        \includegraphics[width=\linewidth]{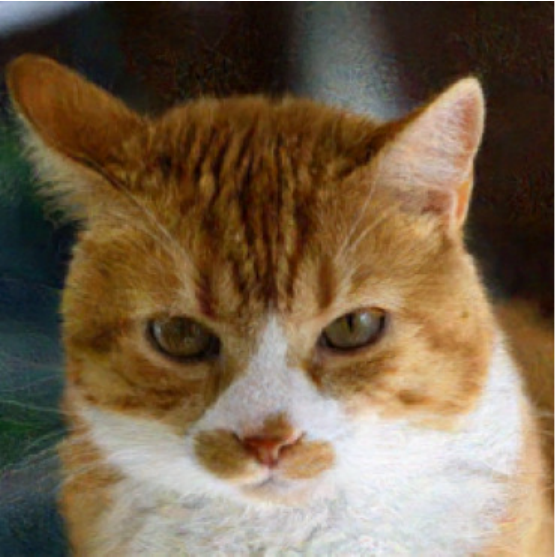}
       \makebox[\linewidth][c]{\footnotesize PSNR: 30.87}
    \end{minipage}
    \begin{minipage}[b]{0.13\textwidth}
    \captionsetup{skip=-0.01cm}
        \includegraphics[width=\linewidth]{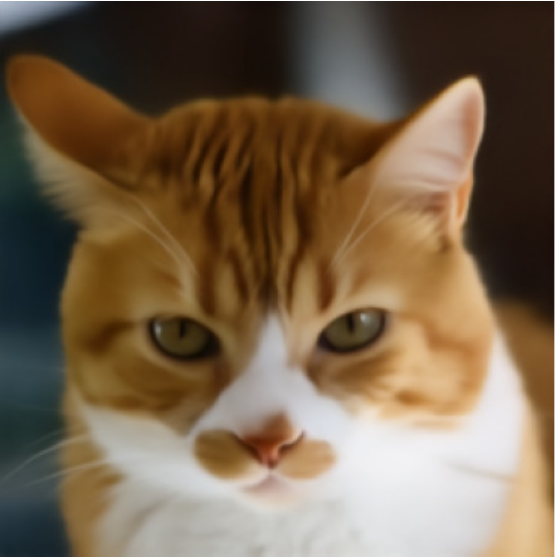}
       \makebox[\linewidth][c]{\footnotesize PSNR:  32.76}
    \end{minipage}
    \begin{minipage}[b]{0.13\textwidth}
    \captionsetup{skip=-0.01cm}
        \includegraphics[width=\linewidth]{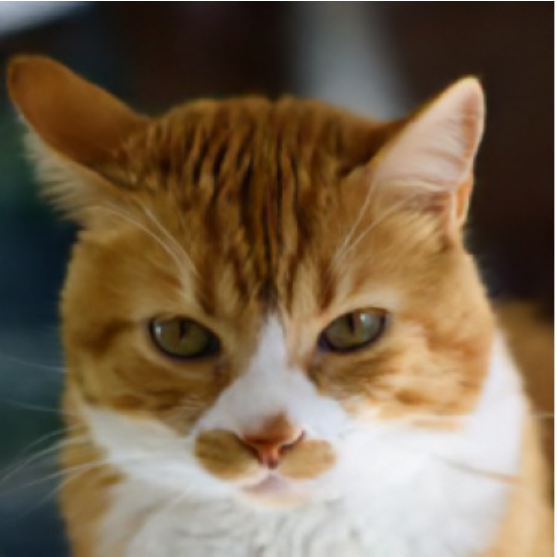}
       \makebox[\linewidth][c]{\footnotesize PSNR:  33.75}
    \end{minipage}

    \begin{minipage}[b]{0.13\textwidth}
    \captionsetup{skip=-0.01cm}
        \includegraphics[width=\linewidth]{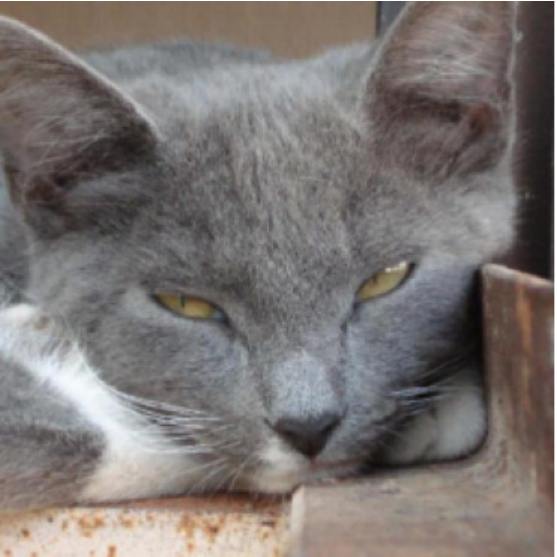}
       \makebox[\linewidth][c]{\footnotesize }
    \end{minipage}
    \begin{minipage}[b]{0.13\textwidth}
    \captionsetup{skip=-0.01cm}
        \includegraphics[width=\linewidth]{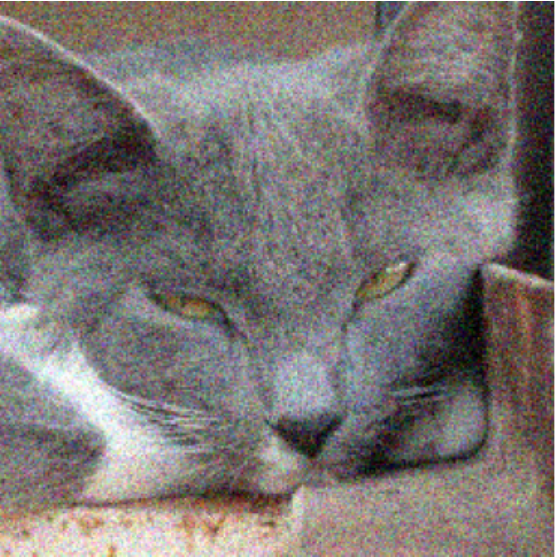}
       \makebox[\linewidth][c]{\footnotesize PSNR: 20.02}
    \end{minipage}
    \begin{minipage}[b]{0.13\textwidth}
    \captionsetup{skip=-0.01cm}
        \includegraphics[width=\linewidth]{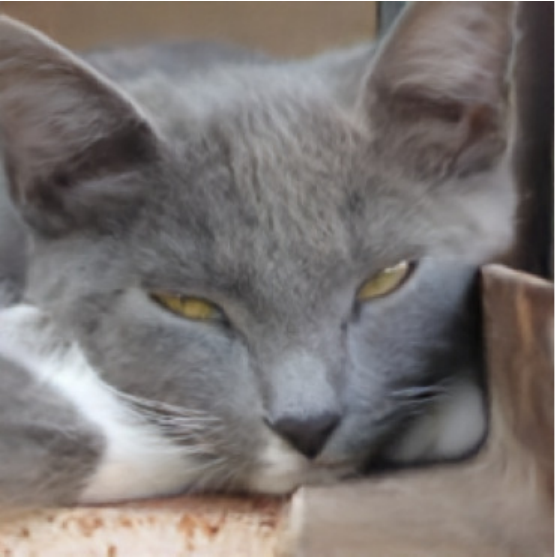}
       \makebox[\linewidth][c]{\footnotesize PSNR: 31.87}
    \end{minipage}
    \begin{minipage}[b]{0.13\textwidth}
    \captionsetup{skip=-0.01cm}
        \includegraphics[width=\linewidth]{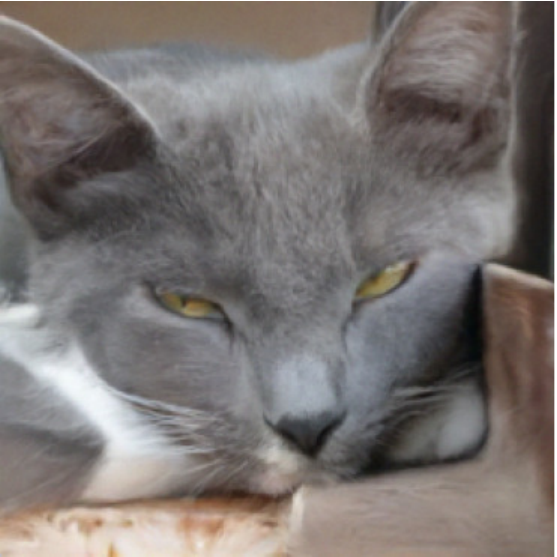}
       \makebox[\linewidth][c]{\footnotesize PSNR: 29.53}
    \end{minipage}
    \begin{minipage}[b]{0.13\textwidth}
    \captionsetup{skip=-0.01cm}
        \includegraphics[width=\linewidth]{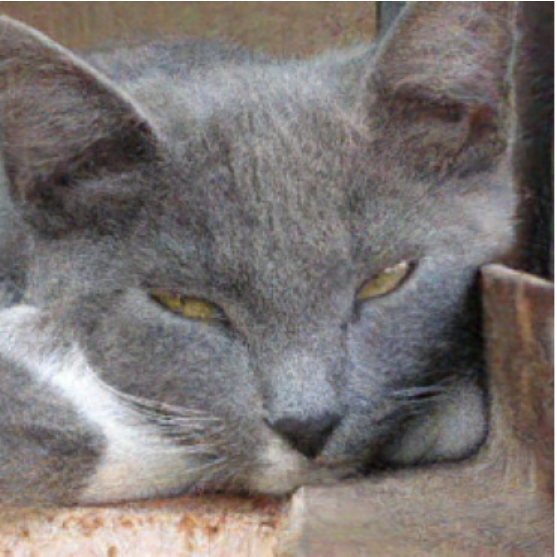}
       \makebox[\linewidth][c]{\footnotesize PSNR: 28.46}
    \end{minipage}
    \begin{minipage}[b]{0.13\textwidth}
    \captionsetup{skip=-0.01cm}
        \includegraphics[width=\linewidth]{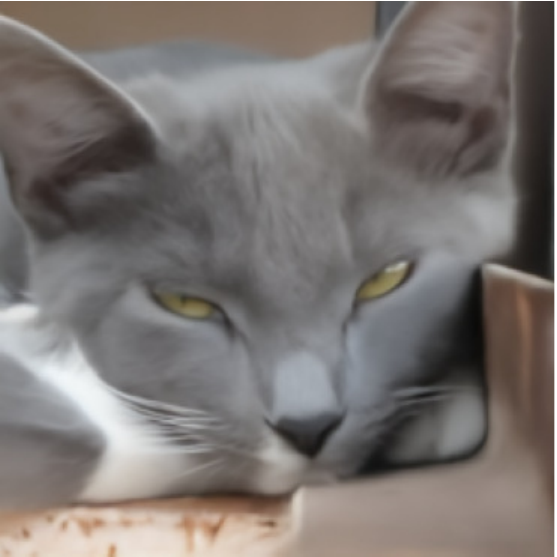}
       \makebox[\linewidth][c]{\footnotesize PSNR: 30.71}
    \end{minipage}
    \begin{minipage}[b]{0.13\textwidth}
        \captionsetup{skip=-0.01cm}
        \includegraphics[width=\linewidth]{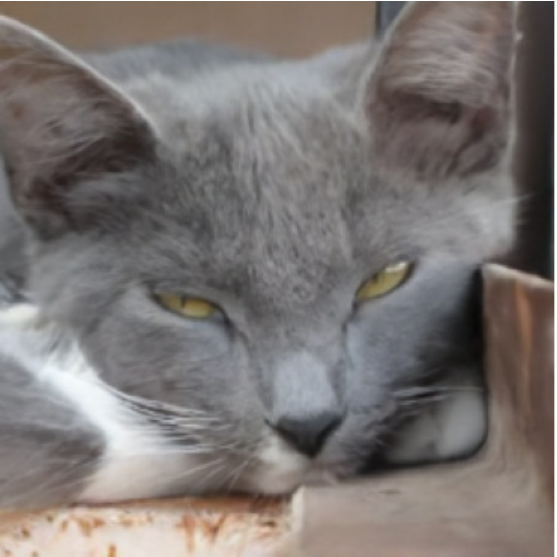}
       \makebox[\linewidth][c]{\footnotesize PSNR: \textbf{31.89}}
    \end{minipage}

    \caption{Comparison of image denoising results on AFHQ-Cat.}
    \label{fig:afhq_denoise}

\end{figure}

\begin{figure}[!ht]
    \centering

    \begin{minipage}[b]{0.13\textwidth}
        \centering
        {\footnotesize \text{Clean}}
    \end{minipage}
    \begin{minipage}[b]{0.13\textwidth}
        \centering
        \small Blurry
    \end{minipage}
    \begin{minipage}[b]{0.13\textwidth}
        \centering
        \small PnP-GS
    \end{minipage}
    \begin{minipage}[b]{0.13\textwidth}
        \centering
        \small OT-ODE
    \end{minipage}
    \begin{minipage}[b]{0.13\textwidth}
        \centering
        \small Flow-Priors
    \end{minipage}
    \begin{minipage}[b]{0.13\textwidth}
        \centering
        \small PnP-Flow
    \end{minipage}
    \begin{minipage}[b]{0.13\textwidth}
        \centering
        \small Ours
    \end{minipage}

    \begin{minipage}[b]{0.13\textwidth}
    \captionsetup{skip=-0.01cm}
        \includegraphics[width=\linewidth]{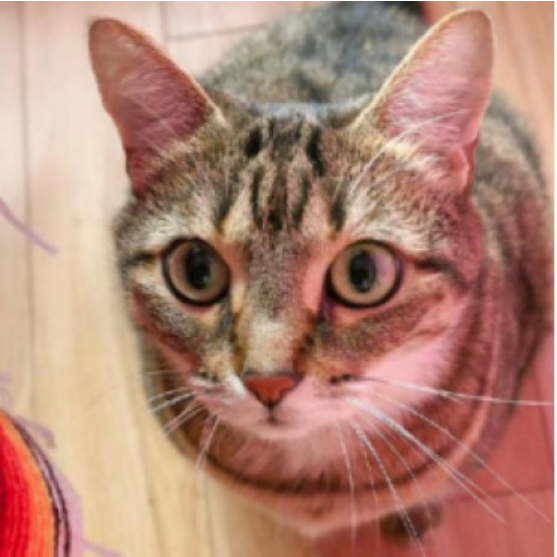}
        \makebox[\linewidth][c]{\footnotesize }
    \end{minipage}
    \begin{minipage}[b]{0.13\textwidth}
    \captionsetup{skip=-0.01cm}
        \includegraphics[width=\linewidth]{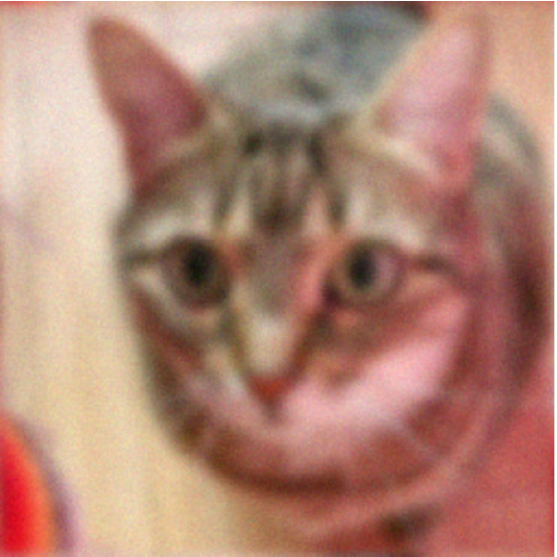}
        \makebox[\linewidth][c]{\footnotesize PSNR: 23.52}
    \end{minipage}
    \begin{minipage}[b]{0.13\textwidth}
    \captionsetup{skip=-0.01cm}
        \includegraphics[width=\linewidth]{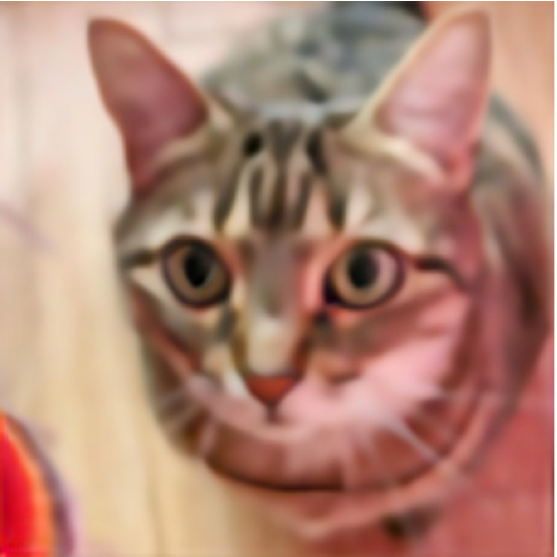}
       \makebox[\linewidth][c]{\footnotesize PSNR: 26.81}
    \end{minipage}
    \begin{minipage}[b]{0.13\textwidth}
    \captionsetup{skip=-0.01cm}
        \includegraphics[width=\linewidth]{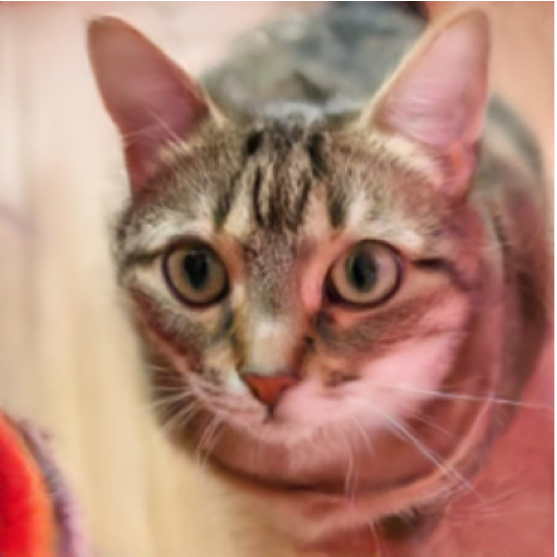}
       \makebox[\linewidth][c]{\footnotesize PSNR:  26.02}
    \end{minipage}
    \begin{minipage}[b]{0.13\textwidth}
    \captionsetup{skip=-0.01cm}
        \includegraphics[width=\linewidth]{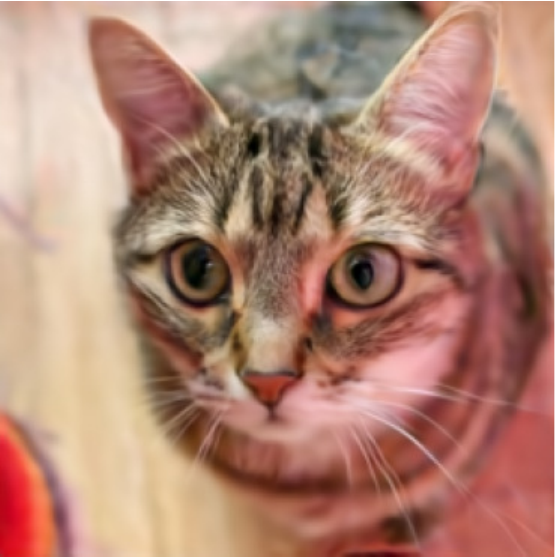}
       \makebox[\linewidth][c]{\footnotesize PSNR:  25.29}
    \end{minipage}
    \begin{minipage}[b]{0.13\textwidth}
    \captionsetup{skip=-0.01cm}
        \includegraphics[width=\linewidth]{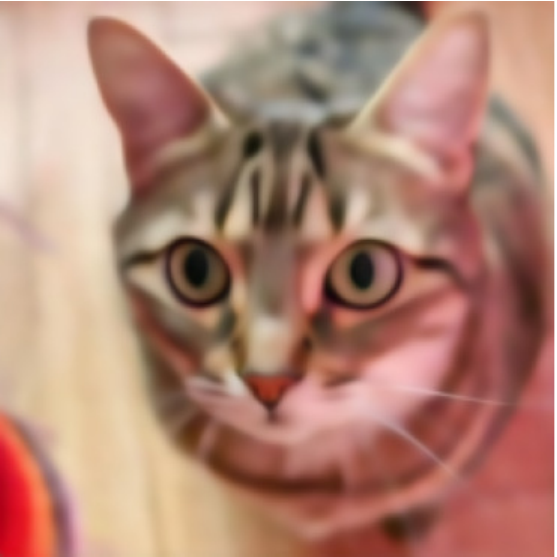}
       \makebox[\linewidth][c]{\footnotesize PSNR:  26.78}
    \end{minipage}
    \begin{minipage}[b]{0.13\textwidth}
    \captionsetup{skip=-0.01cm}
        \includegraphics[width=\linewidth]{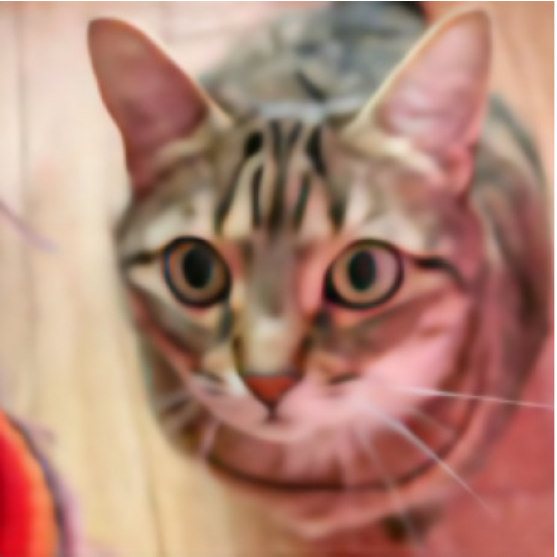}
       \makebox[\linewidth][c]{\footnotesize PSNR:  \textbf{27.40}}
    \end{minipage}

    \begin{minipage}[b]{0.13\textwidth}
    \captionsetup{skip=-0.01cm}
        \includegraphics[width=\linewidth]{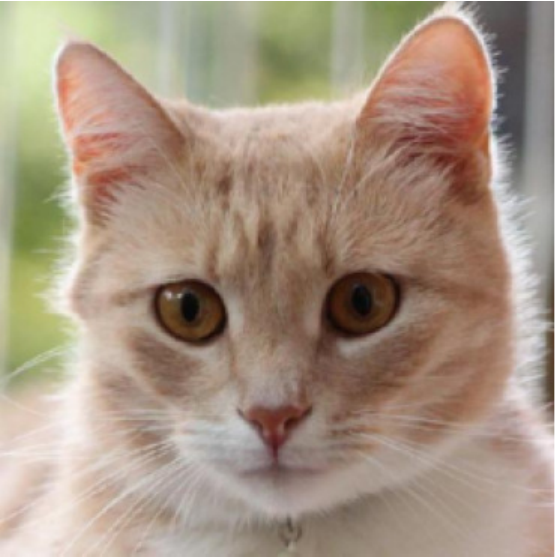}
       \makebox[\linewidth][c]{\footnotesize }
    \end{minipage}
    \begin{minipage}[b]{0.13\textwidth}
    \captionsetup{skip=-0.01cm}
        \includegraphics[width=\linewidth]{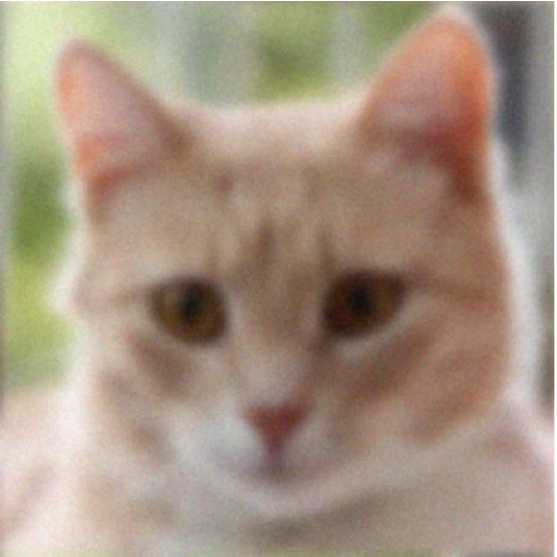}
       \makebox[\linewidth][c]{\footnotesize PSNR: 25.85}
    \end{minipage}
    \begin{minipage}[b]{0.13\textwidth}
    \captionsetup{skip=-0.01cm}
        \includegraphics[width=\linewidth]{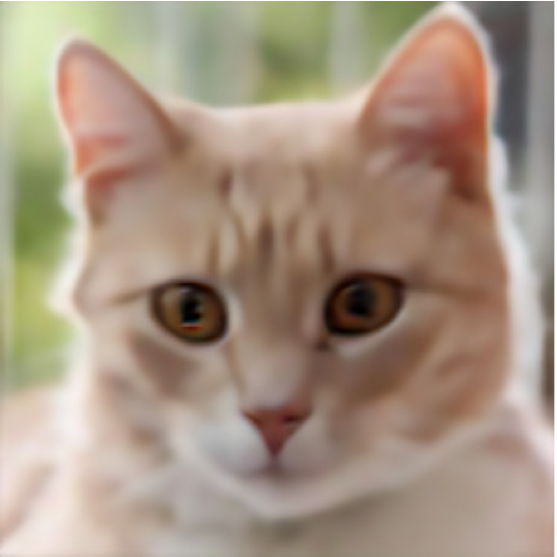}
       \makebox[\linewidth][c]{\footnotesize PSNR:  29.98}
    \end{minipage}
    \begin{minipage}[b]{0.13\textwidth}
    \captionsetup{skip=-0.01cm}
        \includegraphics[width=\linewidth]{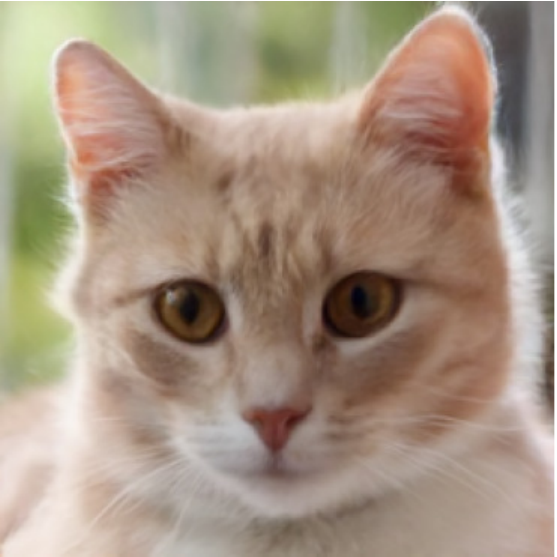}
       \makebox[\linewidth][c]{\footnotesize PSNR:  29.28}
    \end{minipage}
    \begin{minipage}[b]{0.13\textwidth}
    \captionsetup{skip=-0.01cm}
        \includegraphics[width=\linewidth]{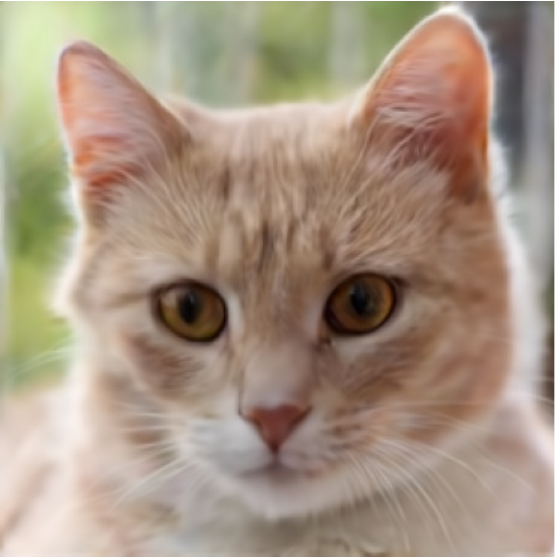}
       \makebox[\linewidth][c]{\footnotesize PSNR: 28.72}
    \end{minipage}
    \begin{minipage}[b]{0.13\textwidth}
    \captionsetup{skip=-0.01cm}
        \includegraphics[width=\linewidth]{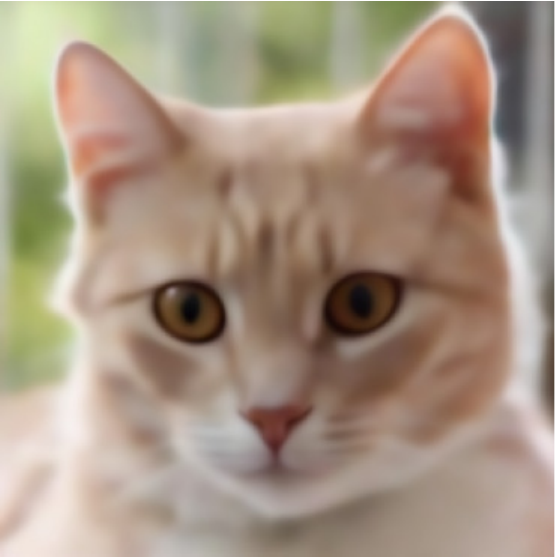}
       \makebox[\linewidth][c]{\footnotesize PSNR: 30.37}
    \end{minipage}
    \begin{minipage}[b]{0.13\textwidth}
    \captionsetup{skip=-0.01cm}
        \includegraphics[width=\linewidth]{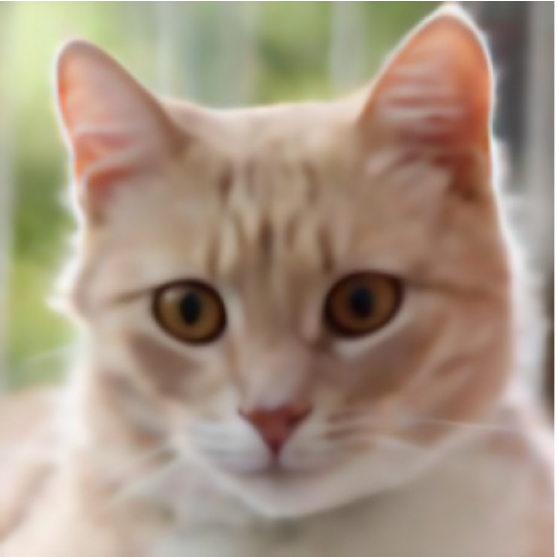}
       \makebox[\linewidth][c]{\footnotesize PSNR: \textbf{31.03}}
    \end{minipage}

    \begin{minipage}[b]{0.13\textwidth}
    \captionsetup{skip=-0.01cm}
        \includegraphics[width=\linewidth]{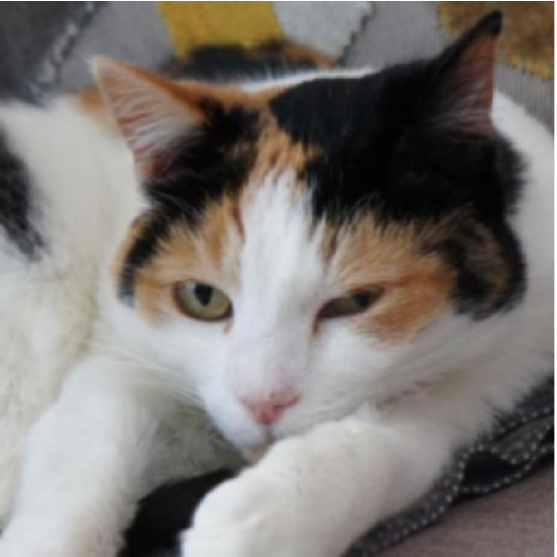}
       \makebox[\linewidth][c]{\footnotesize }
    \end{minipage}
    \begin{minipage}[b]{0.13\textwidth}
    \captionsetup{skip=-0.01cm}
        \includegraphics[width=\linewidth]{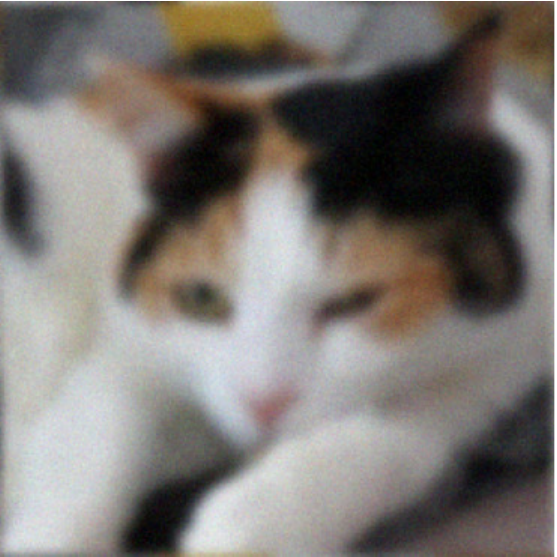}
       \makebox[\linewidth][c]{\footnotesize PSNR: 25.17}
    \end{minipage}
    \begin{minipage}[b]{0.13\textwidth}
    \captionsetup{skip=-0.01cm}
        \includegraphics[width=\linewidth]{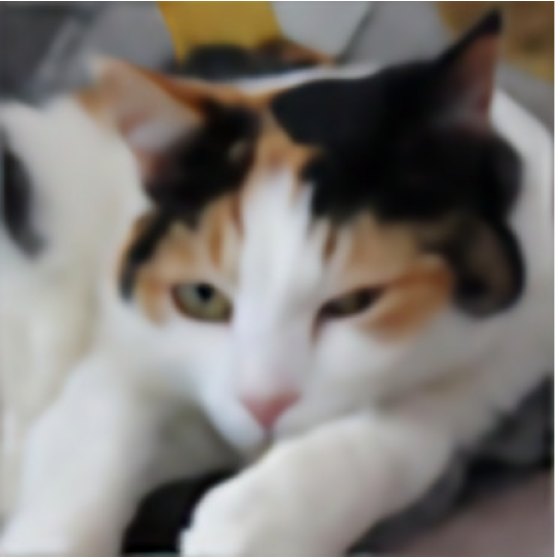}
       \makebox[\linewidth][c]{\footnotesize PSNR: 31.08}
    \end{minipage}
    \begin{minipage}[b]{0.13\textwidth}
    \captionsetup{skip=-0.01cm}
        \includegraphics[width=\linewidth]{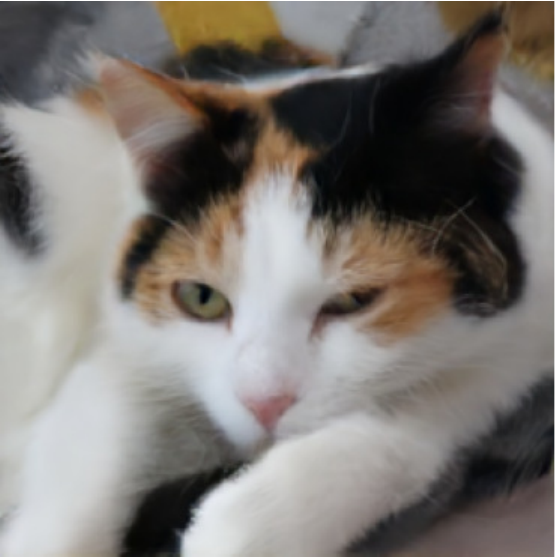}
       \makebox[\linewidth][c]{\footnotesize PSNR: 29.80}
    \end{minipage}
    \begin{minipage}[b]{0.13\textwidth}
    \captionsetup{skip=-0.01cm}
        \includegraphics[width=\linewidth]{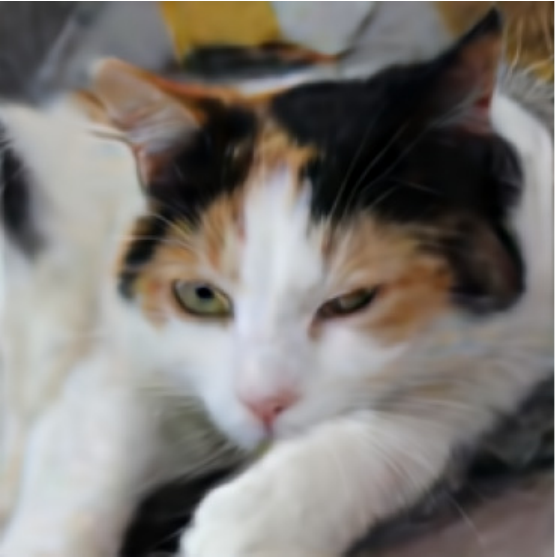}
       \makebox[\linewidth][c]{\footnotesize PSNR: 29.50}
    \end{minipage}
    \begin{minipage}[b]{0.13\textwidth}
    \captionsetup{skip=-0.01cm}
        \includegraphics[width=\linewidth]{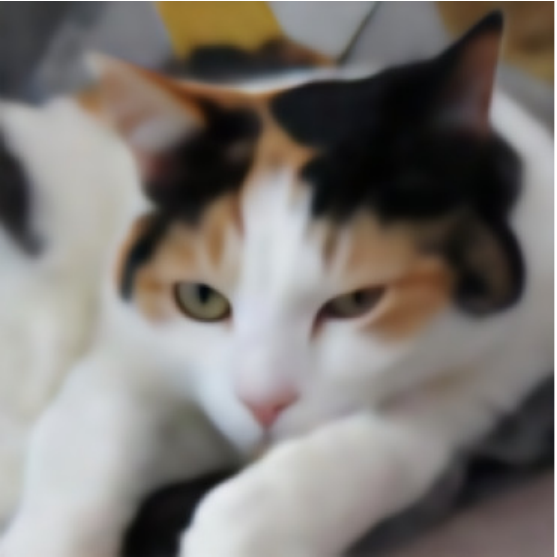}
       \makebox[\linewidth][c]{\footnotesize PSNR: 31.33}
    \end{minipage}
    \begin{minipage}[b]{0.13\textwidth}
    \captionsetup{skip=-0.01cm}
        \includegraphics[width=\linewidth]{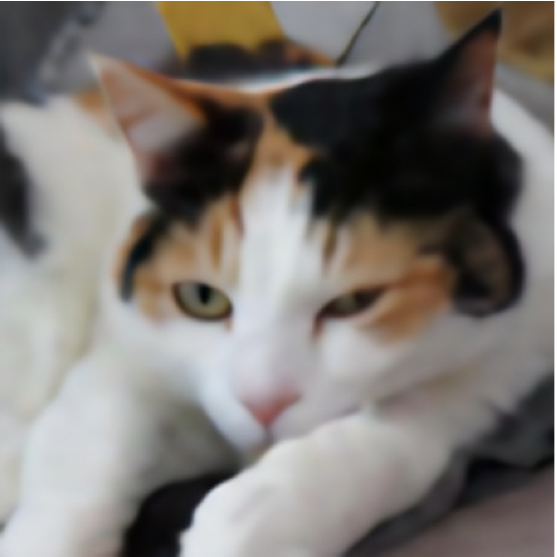}
       \makebox[\linewidth][c]{\footnotesize PSNR:  \textbf{32.11}}
    \end{minipage}

    \caption{Comparison of image deblurring results on AFHQ-Cat.}
    \label{fig:afhq_deblur}

\end{figure}

\vspace{4cm}

\begin{figure}[!ht]
    \centering

    \begin{minipage}[b]{0.13\textwidth}
        \centering
        {\footnotesize \text{Clean}}
    \end{minipage}
    \begin{minipage}[b]{0.13\textwidth}
        \centering
        \small Masked
    \end{minipage}
    \begin{minipage}[b]{0.13\textwidth}
        \centering
        \small PnP-GS
    \end{minipage}
    \begin{minipage}[b]{0.13\textwidth}
        \centering
        \small OT-ODE
    \end{minipage}
    \begin{minipage}[b]{0.13\textwidth}
        \centering
        \small Flow-Priors
    \end{minipage}
    \begin{minipage}[b]{0.13\textwidth}
        \centering
        \small PnP-Flow
    \end{minipage}
    \begin{minipage}[b]{0.13\textwidth}
        \centering
        \small Ours
    \end{minipage}

    \begin{minipage}[b]{0.13\textwidth}
    \captionsetup{skip=-0.01cm}
        \includegraphics[width=\linewidth]{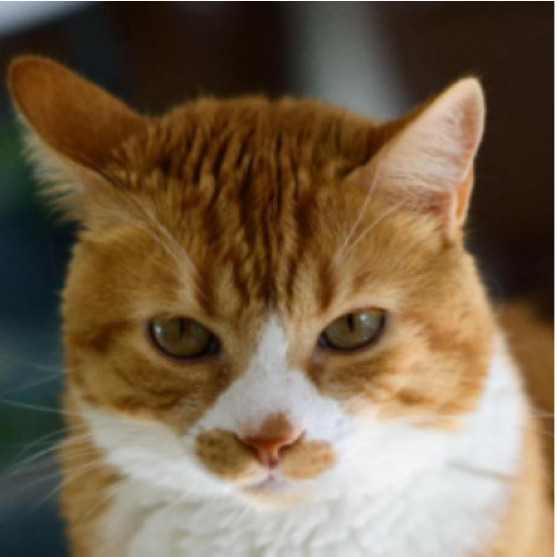}
        \makebox[\linewidth][c]{\footnotesize }
    \end{minipage}
    \begin{minipage}[b]{0.13\textwidth}
    \captionsetup{skip=-0.01cm}
        \includegraphics[width=\linewidth]{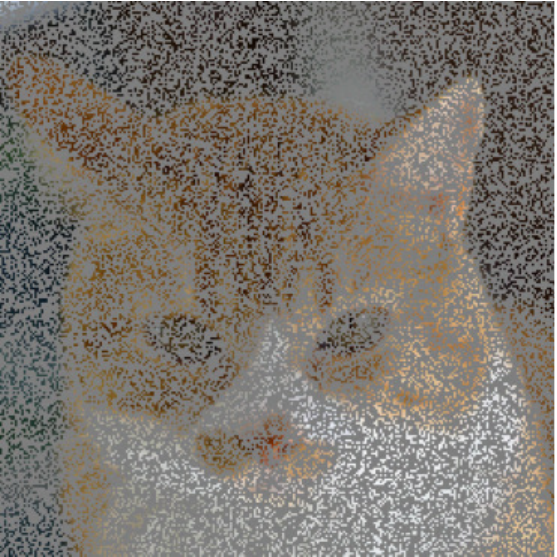}
        \makebox[\linewidth][c]{\footnotesize PSNR: 11.95}
    \end{minipage}
    \begin{minipage}[b]{0.13\textwidth}
    \captionsetup{skip=-0.01cm}
        \includegraphics[width=\linewidth]{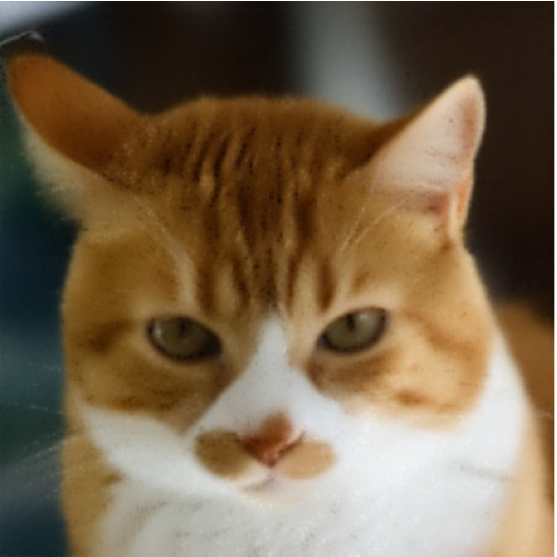}
       \makebox[\linewidth][c]{\footnotesize PSNR: 32.11}
    \end{minipage}
    \begin{minipage}[b]{0.13\textwidth}
    \captionsetup{skip=-0.01cm}
        \includegraphics[width=\linewidth]{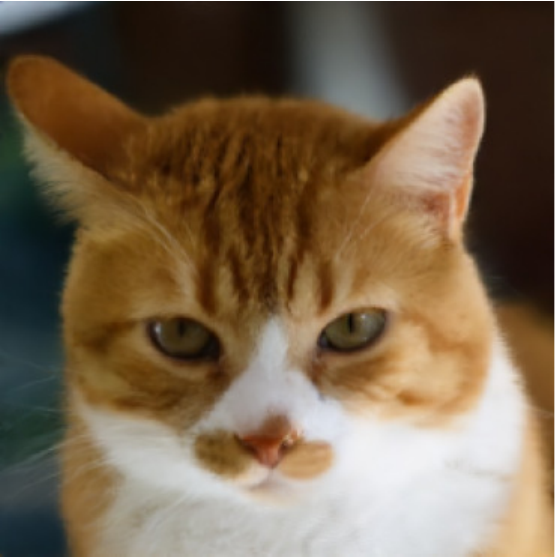}
       \makebox[\linewidth][c]{\footnotesize PSNR: 32.15}
    \end{minipage}
    \begin{minipage}[b]{0.13\textwidth}
    \captionsetup{skip=-0.01cm}
        \includegraphics[width=\linewidth]{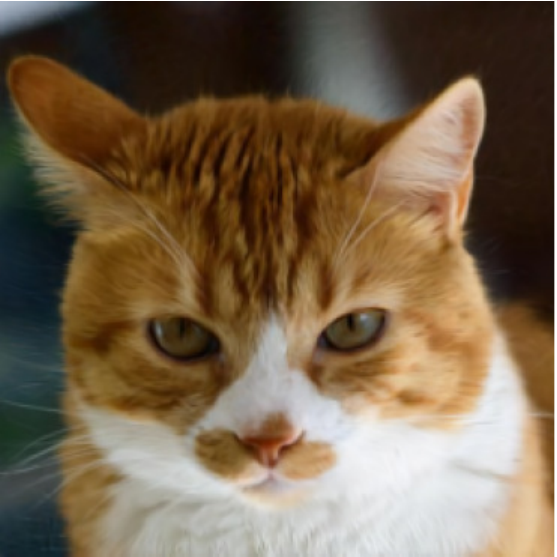}
       \makebox[\linewidth][c]{\footnotesize PSNR:  35.72}
    \end{minipage}
    \begin{minipage}[b]{0.13\textwidth}
    \captionsetup{skip=-0.01cm}
        \includegraphics[width=\linewidth]{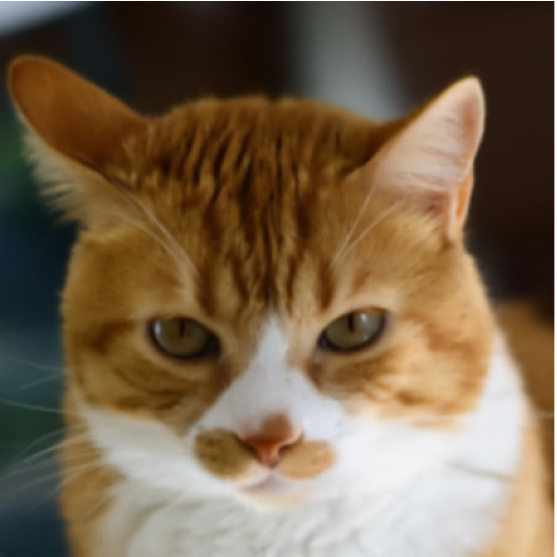}
       \makebox[\linewidth][c]{\footnotesize PSNR:  36.74}
    \end{minipage}
    \begin{minipage}[b]{0.13\textwidth}
    \captionsetup{skip=-0.01cm}
        \includegraphics[width=\linewidth]{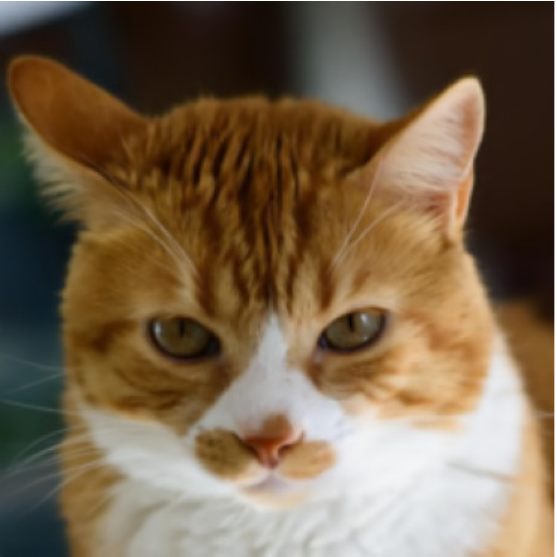}
       \makebox[\linewidth][c]{\footnotesize PSNR:  \textbf{37.81}}
    \end{minipage}

    \begin{minipage}[b]{0.13\textwidth}
    \captionsetup{skip=-0.01cm}
        \includegraphics[width=\linewidth]{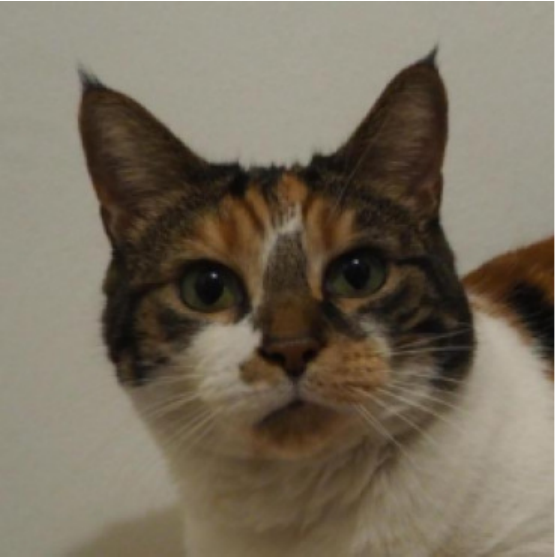}
       \makebox[\linewidth][c]{\footnotesize }
    \end{minipage}
    \begin{minipage}[b]{0.13\textwidth}
    \captionsetup{skip=-0.01cm}
        \includegraphics[width=\linewidth]{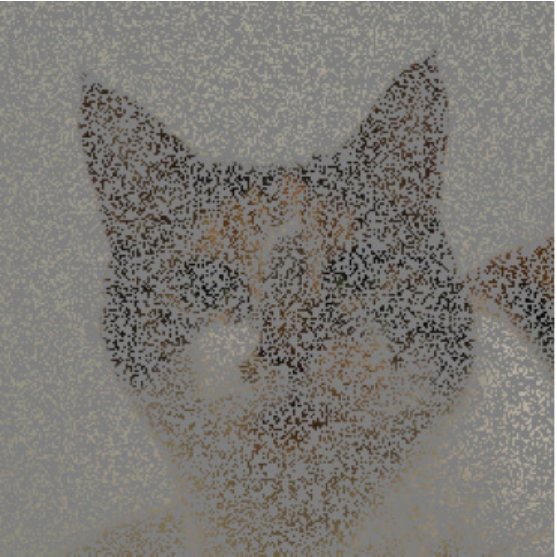}
       \makebox[\linewidth][c]{\footnotesize PSNR: 15.02}
    \end{minipage}
    \begin{minipage}[b]{0.13\textwidth}
    \captionsetup{skip=-0.01cm}
        \includegraphics[width=\linewidth]{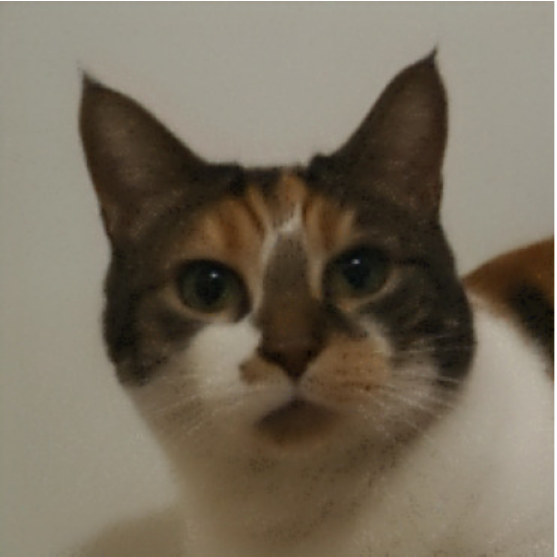}
       \makebox[\linewidth][c]{\footnotesize PSNR:  34.28}
    \end{minipage}
    \begin{minipage}[b]{0.13\textwidth}
    \captionsetup{skip=-0.01cm}
        \includegraphics[width=\linewidth]{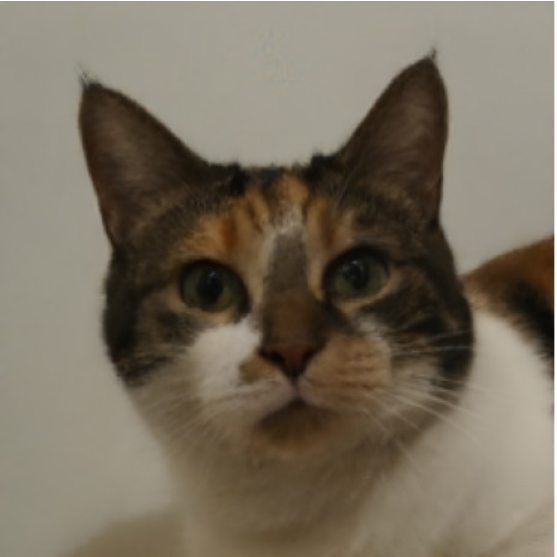}
       \makebox[\linewidth][c]{\footnotesize PSNR:  33.93}
    \end{minipage}
    \begin{minipage}[b]{0.13\textwidth}
    \captionsetup{skip=-0.01cm}
        \includegraphics[width=\linewidth]{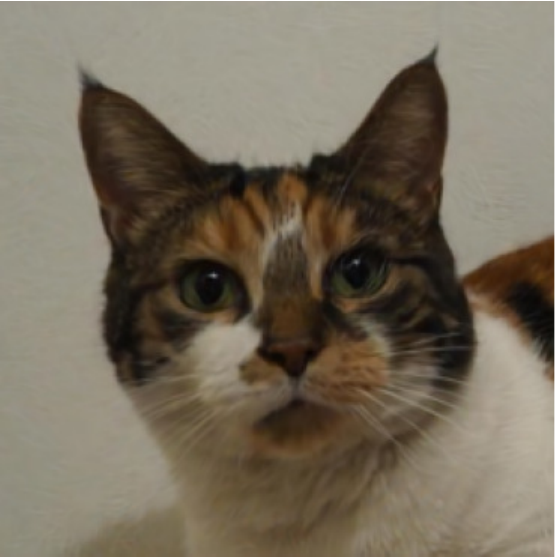}
       \makebox[\linewidth][c]{\footnotesize PSNR:  37.28}
    \end{minipage}
    \begin{minipage}[b]{0.13\textwidth}
    \captionsetup{skip=-0.01cm}
        \includegraphics[width=\linewidth]{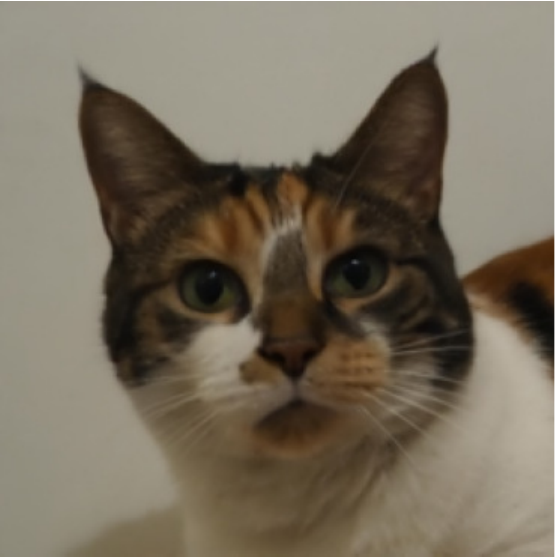}
       \makebox[\linewidth][c]{\footnotesize PSNR: 38.13}
    \end{minipage}
    \begin{minipage}[b]{0.13\textwidth}
    \captionsetup{skip=-0.01cm}
        \includegraphics[width=\linewidth]{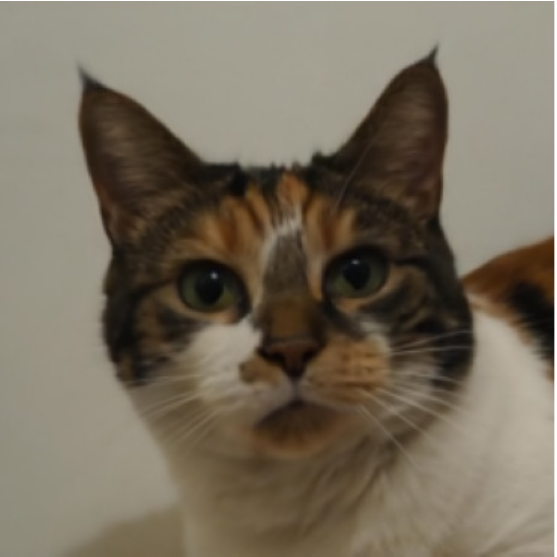}
       \makebox[\linewidth][c]{\footnotesize PSNR: \textbf{38.95}}
    \end{minipage}

    \begin{minipage}[b]{0.13\textwidth}
    \captionsetup{skip=-0.01cm}
        \includegraphics[width=\linewidth]{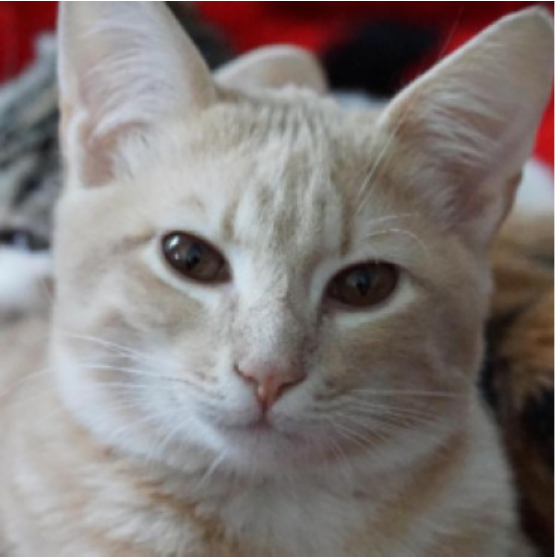}
       \makebox[\linewidth][c]{\footnotesize }
    \end{minipage}
    \begin{minipage}[b]{0.13\textwidth}
    \captionsetup{skip=-0.01cm}
        \includegraphics[width=\linewidth]{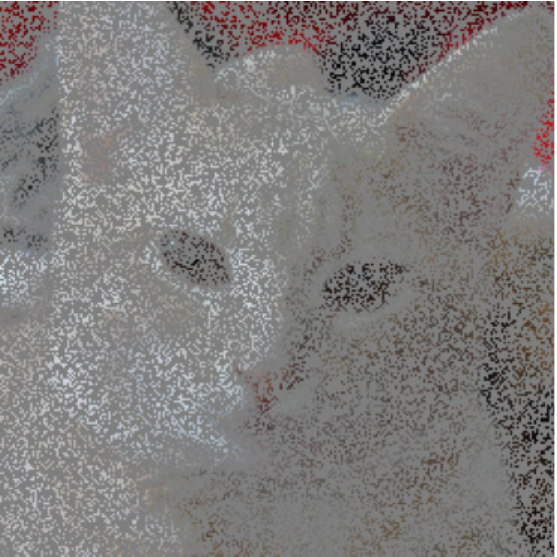}
       \makebox[\linewidth][c]{\footnotesize PSNR:  14.77}
    \end{minipage}
    \begin{minipage}[b]{0.13\textwidth}
    \captionsetup{skip=-0.01cm}
        \includegraphics[width=\linewidth]{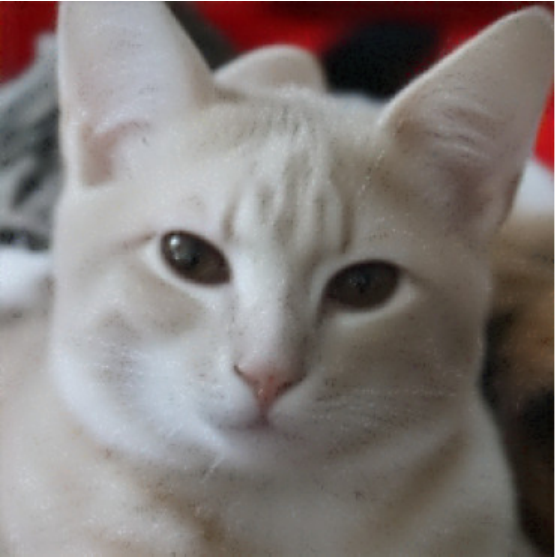}
       \makebox[\linewidth][c]{\footnotesize PSNR:  32.74}
    \end{minipage}
    \begin{minipage}[b]{0.13\textwidth}
    \captionsetup{skip=-0.01cm}
        \includegraphics[width=\linewidth]{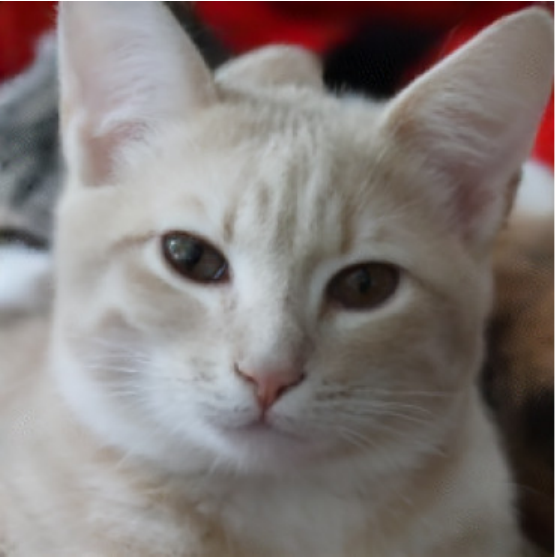}
       \makebox[\linewidth][c]{\footnotesize PSNR:  32.25}
    \end{minipage}
    \begin{minipage}[b]{0.13\textwidth}
    \captionsetup{skip=-0.01cm}
        \includegraphics[width=\linewidth]{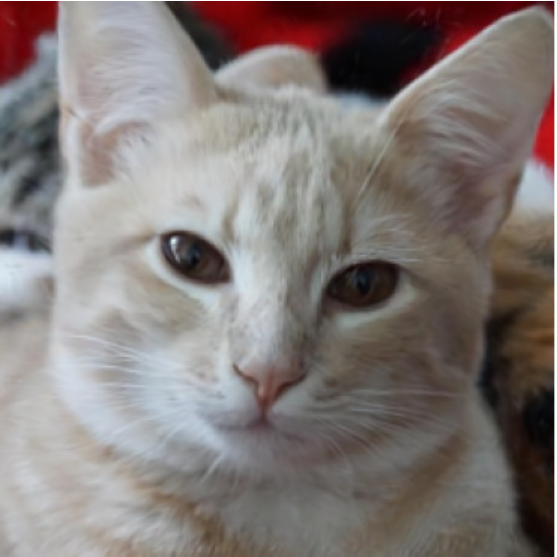}
       \makebox[\linewidth][c]{\footnotesize PSNR:  35.50}
    \end{minipage}
    \begin{minipage}[b]{0.13\textwidth}
    \captionsetup{skip=-0.01cm}
        \includegraphics[width=\linewidth]{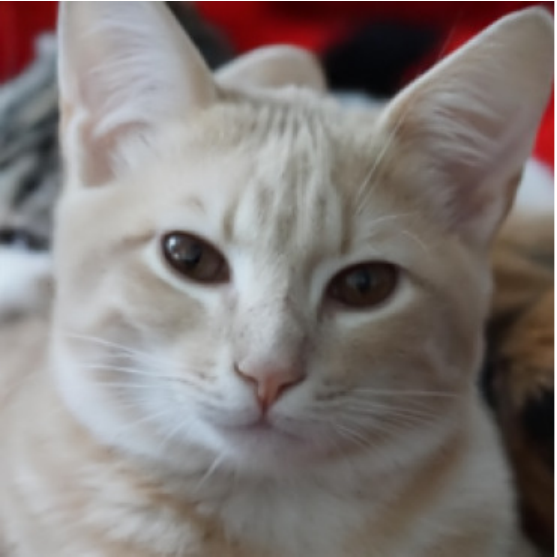}
       \makebox[\linewidth][c]{\footnotesize PSNR:  37.38}
    \end{minipage}
    \begin{minipage}[b]{0.13\textwidth}
    \captionsetup{skip=-0.01cm}
        \includegraphics[width=\linewidth]{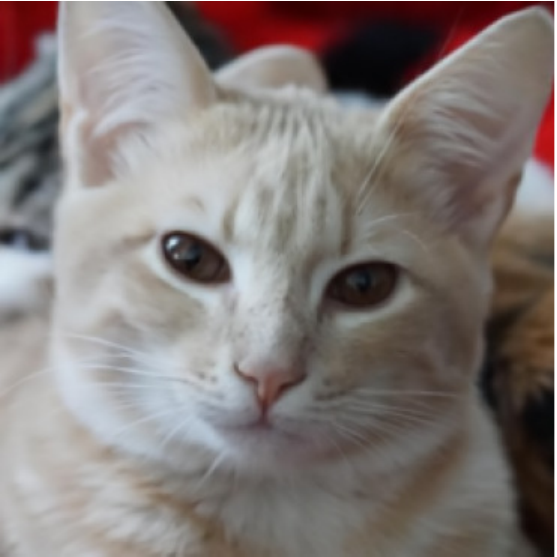}
       \makebox[\linewidth][c]{\footnotesize PSNR:  \textbf{38.57}}
    \end{minipage}

    \caption{Comparison of random inpainting results on AFHQ-Cat.}
    \label{fig:afhq_inpaint}

\end{figure}

     \vspace{5cm}

\begin{figure}[!ht]
    \centering

    \begin{minipage}[b]{0.13\textwidth}
        \centering
        {\footnotesize \text{Clean}}
    \end{minipage}
    \begin{minipage}[b]{0.13\textwidth}
        \centering
        \small Low-resolution
    \end{minipage}
    \begin{minipage}[b]{0.13\textwidth}
        \centering
        \small PnP-GS
    \end{minipage}
    \begin{minipage}[b]{0.13\textwidth}
        \centering
        \small OT-ODE
    \end{minipage}
    \begin{minipage}[b]{0.13\textwidth}
        \centering
        \small Flow-Priors
    \end{minipage}
    \begin{minipage}[b]{0.13\textwidth}
        \centering
        \small PnP-Flow
    \end{minipage}
    \begin{minipage}[b]{0.13\textwidth}
        \centering
        \small Ours
    \end{minipage}

    \begin{minipage}[b]{0.13\textwidth}
    \captionsetup{skip=-0.01cm}
        \includegraphics[width=\linewidth]{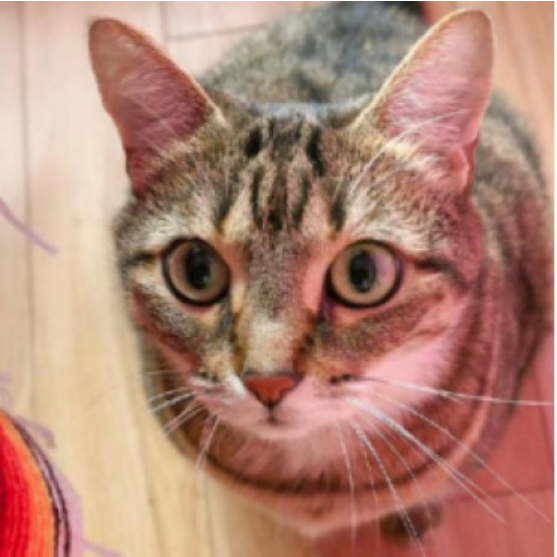}
        \makebox[\linewidth][c]{\footnotesize }
    \end{minipage}
    \begin{minipage}[b]{0.13\textwidth}
    \captionsetup{skip=-0.01cm}
        \includegraphics[width=\linewidth]{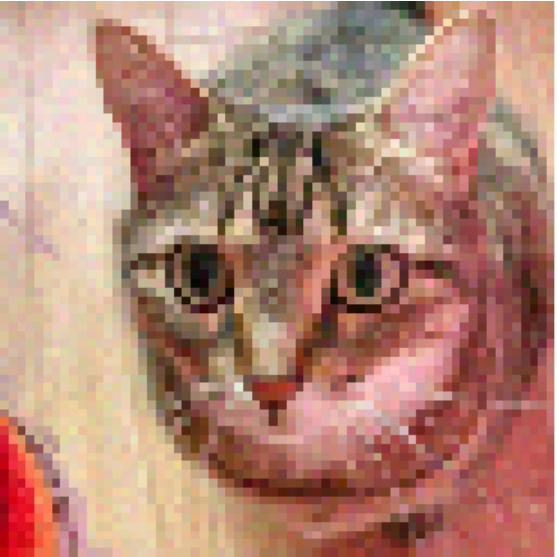}
        \makebox[\linewidth][c]{\footnotesize PSNR: 11.83}
    \end{minipage}
    \begin{minipage}[b]{0.13\textwidth}
    \captionsetup{skip=-0.01cm}
        \includegraphics[width=\linewidth]{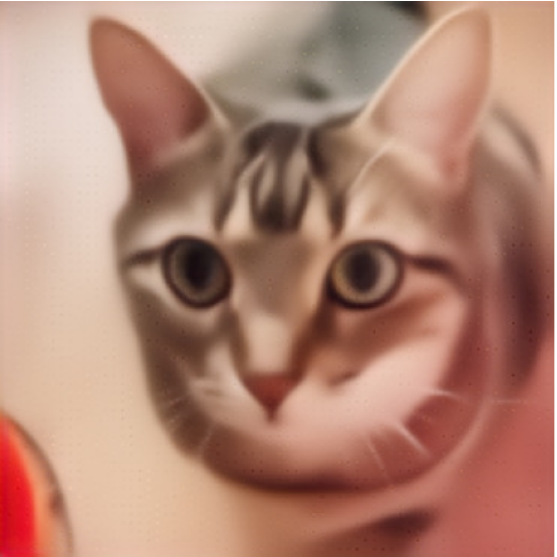}
       \makebox[\linewidth][c]{\footnotesize PSNR: 24.55}
    \end{minipage}
    \begin{minipage}[b]{0.13\textwidth}
    \captionsetup{skip=-0.01cm}
        \includegraphics[width=\linewidth]{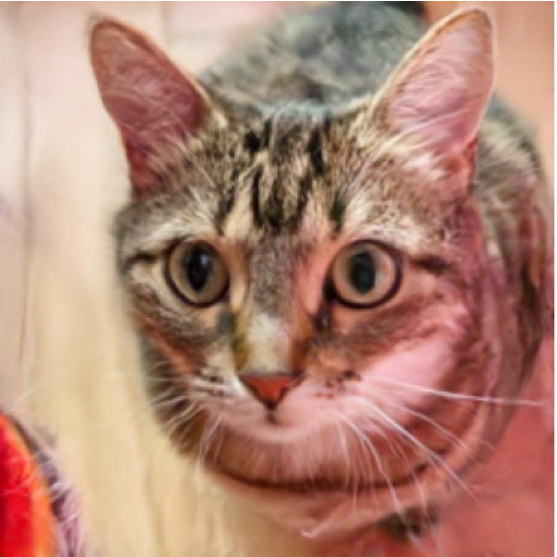}
       \makebox[\linewidth][c]{\footnotesize PSNR:  24.72}
    \end{minipage}
    \begin{minipage}[b]{0.13\textwidth}
    \captionsetup{skip=-0.01cm}
        \includegraphics[width=\linewidth]{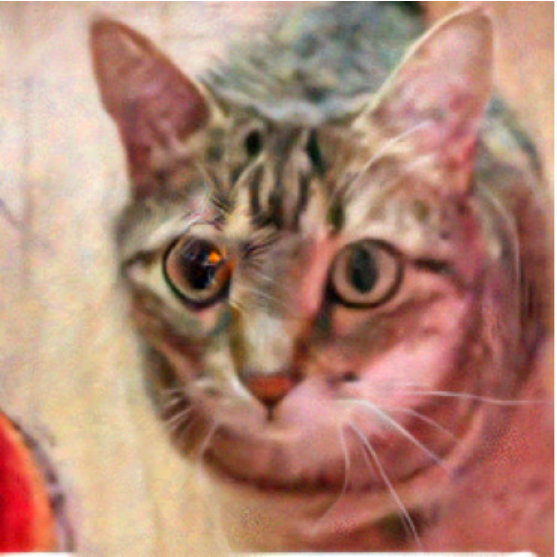}
       \makebox[\linewidth][c]{\footnotesize PSNR:  23.15}
    \end{minipage}
    \begin{minipage}[b]{0.13\textwidth}
    \captionsetup{skip=-0.01cm}
        \includegraphics[width=\linewidth]{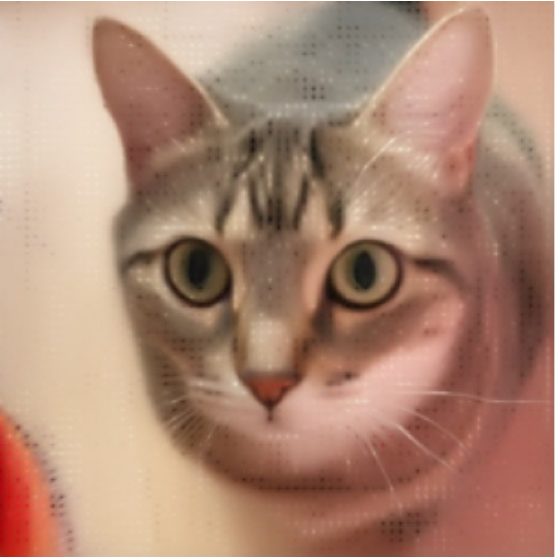}
       \makebox[\linewidth][c]{\footnotesize PSNR: 24.53}
    \end{minipage}
    \begin{minipage}[b]{0.13\textwidth}
    \captionsetup{skip=-0.01cm}
        \includegraphics[width=\linewidth]{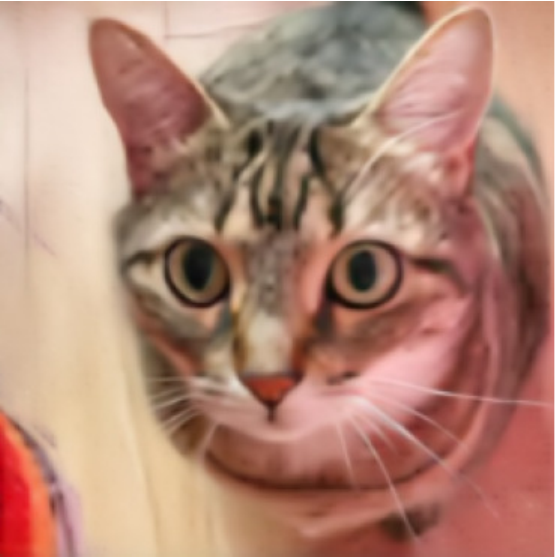}
       \makebox[\linewidth][c]{\footnotesize PSNR:  \textbf{26.74}}
    \end{minipage}

    \begin{minipage}[b]{0.13\textwidth}
    \captionsetup{skip=-0.01cm}
        \includegraphics[width=\linewidth]{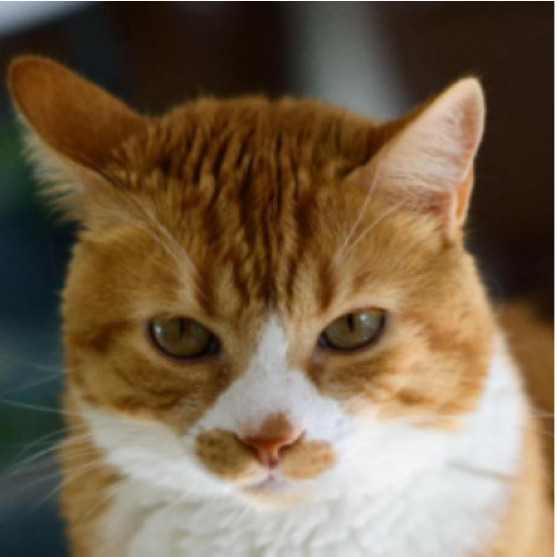}
       \makebox[\linewidth][c]{\footnotesize }
    \end{minipage}
    \begin{minipage}[b]{0.13\textwidth}
    \captionsetup{skip=-0.01cm}
        \includegraphics[width=\linewidth]{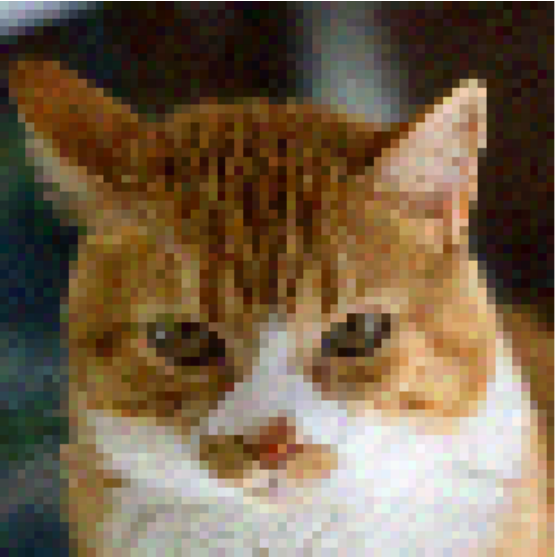}
       \makebox[\linewidth][c]{\footnotesize PSNR: 9.40}
    \end{minipage}
    \begin{minipage}[b]{0.13\textwidth}
    \captionsetup{skip=-0.01cm}
        \includegraphics[width=\linewidth]{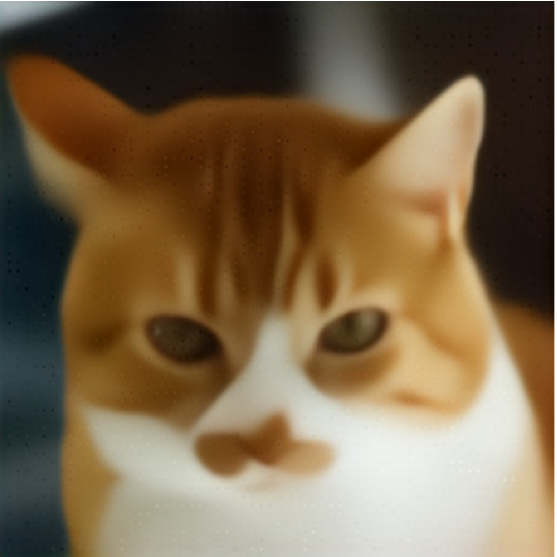}
       \makebox[\linewidth][c]{\footnotesize PSNR:  27.56}
    \end{minipage}
    \begin{minipage}[b]{0.13\textwidth}
    \captionsetup{skip=-0.01cm}
        \includegraphics[width=\linewidth]{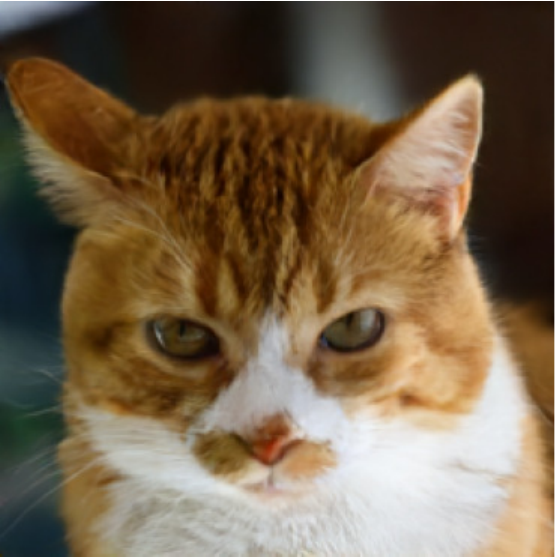}
       \makebox[\linewidth][c]{\footnotesize PSNR:  28.60}
    \end{minipage}
    \begin{minipage}[b]{0.13\textwidth}
    \captionsetup{skip=-0.01cm}
        \includegraphics[width=\linewidth]{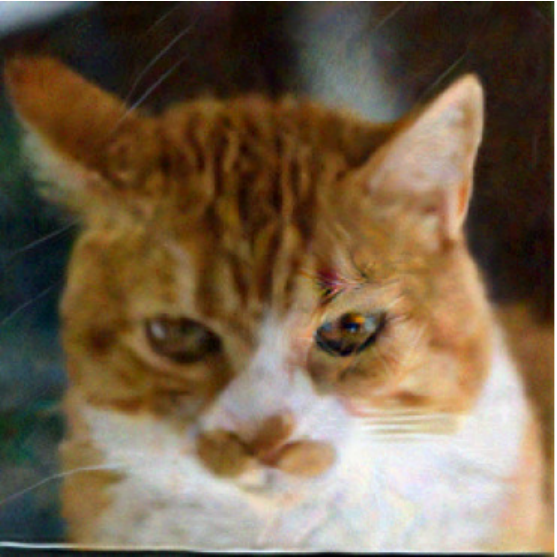}
       \makebox[\linewidth][c]{\footnotesize PSNR: 23.16}
    \end{minipage}
    \begin{minipage}[b]{0.13\textwidth}
    \captionsetup{skip=-0.01cm}
        \includegraphics[width=\linewidth]{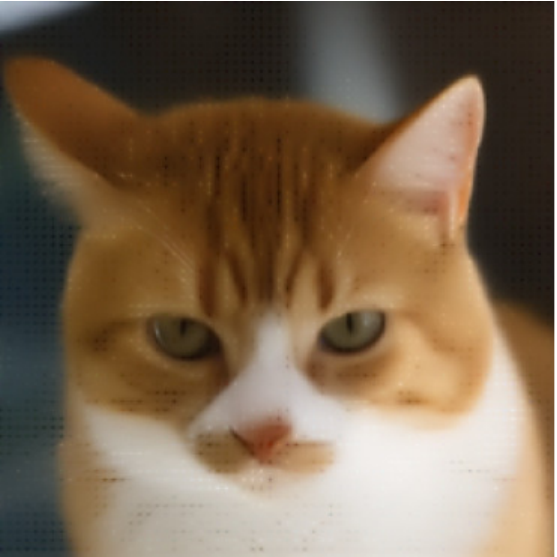}
       \makebox[\linewidth][c]{\footnotesize PSNR: 27.25}
    \end{minipage}
    \begin{minipage}[b]{0.13\textwidth}
    \captionsetup{skip=-0.01cm}
        \includegraphics[width=\linewidth]{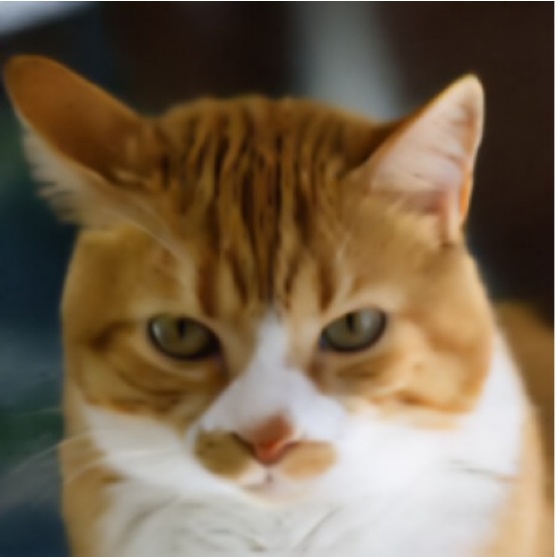}
       \makebox[\linewidth][c]{\footnotesize PSNR: \textbf{30.61}}
    \end{minipage}

    \begin{minipage}[b]{0.13\textwidth}
    \captionsetup{skip=-0.01cm}
        \includegraphics[width=\linewidth]{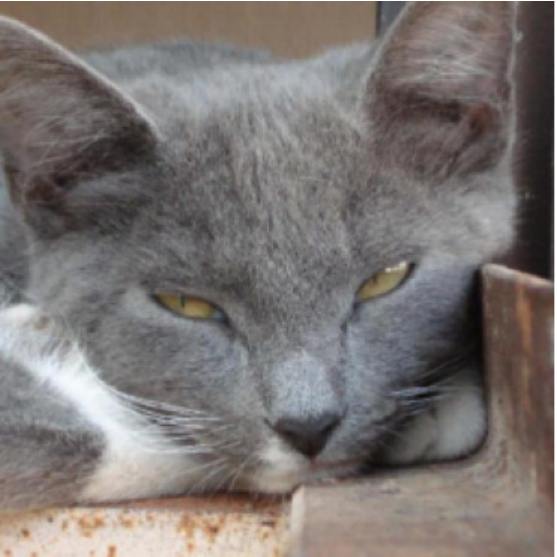}
       \makebox[\linewidth][c]{\footnotesize }
    \end{minipage}
    \begin{minipage}[b]{0.13\textwidth}
    \captionsetup{skip=-0.01cm}
        \includegraphics[width=\linewidth]{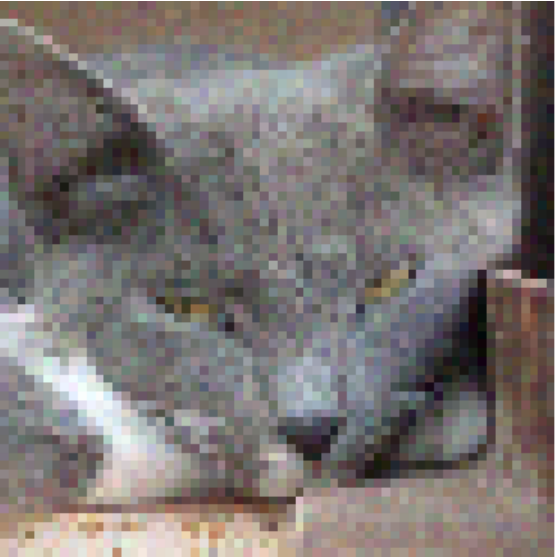}
       \makebox[\linewidth][c]{\footnotesize PSNR: 13.42}
    \end{minipage}
    \begin{minipage}[b]{0.13\textwidth}
    \captionsetup{skip=-0.01cm}
        \includegraphics[width=\linewidth]{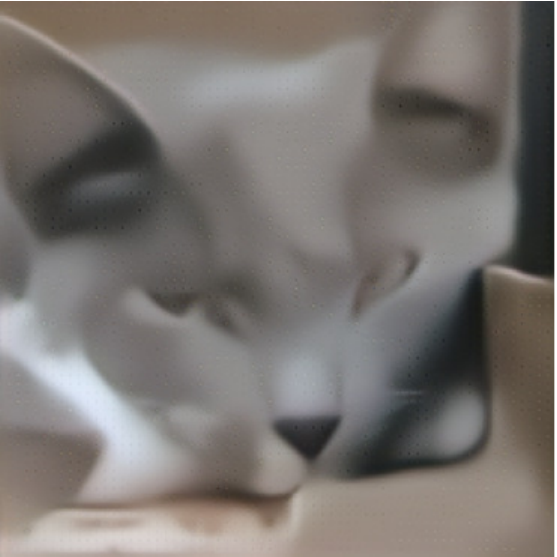}
       \makebox[\linewidth][c]{\footnotesize PSNR: 25.66}
    \end{minipage}
    \begin{minipage}[b]{0.13\textwidth}
    \captionsetup{skip=-0.01cm}
        \includegraphics[width=\linewidth]{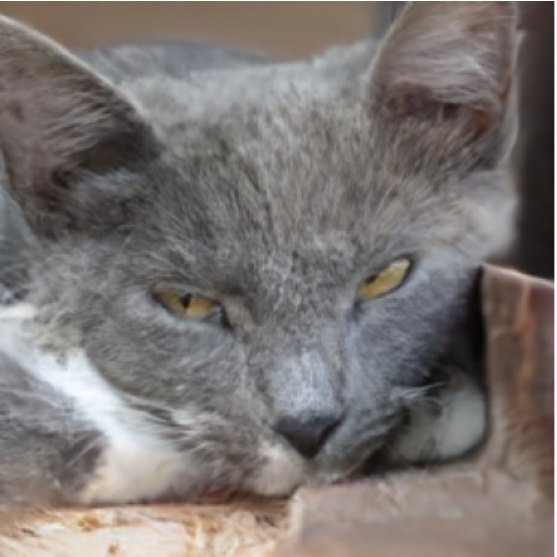}
       \makebox[\linewidth][c]{\footnotesize PSNR: 26.05}
    \end{minipage}
    \begin{minipage}[b]{0.13\textwidth}
    \captionsetup{skip=-0.01cm}
        \includegraphics[width=\linewidth]{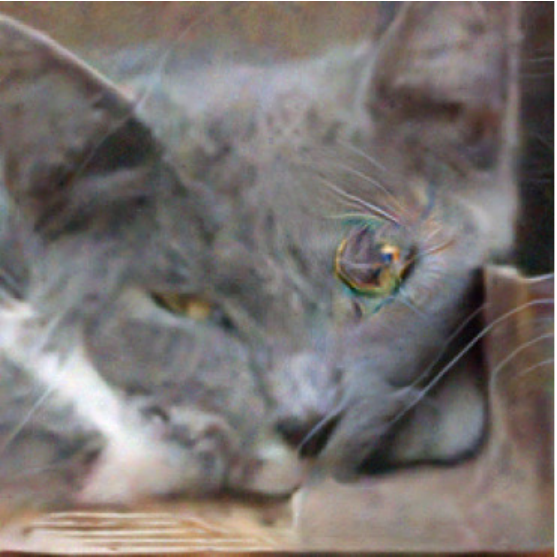}
       \makebox[\linewidth][c]{\footnotesize PSNR: 23.38}
    \end{minipage}
    \begin{minipage}[b]{0.13\textwidth}
    \captionsetup{skip=-0.01cm}
        \includegraphics[width=\linewidth]{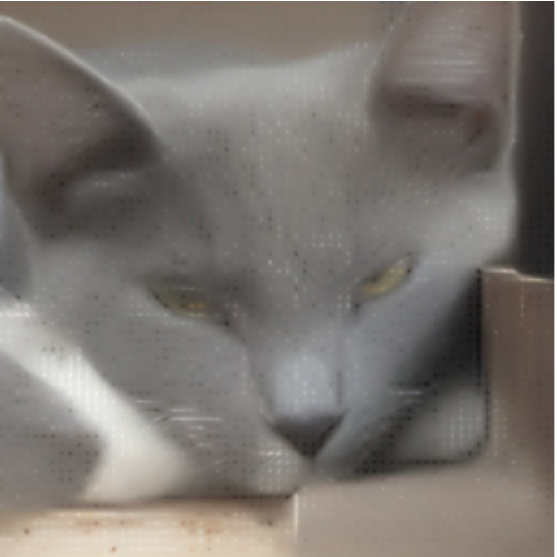}
       \makebox[\linewidth][c]{\footnotesize PSNR: 26.24}
    \end{minipage}
    \begin{minipage}[b]{0.13\textwidth}
        \captionsetup{skip=-0.01cm}
        \includegraphics[width=\linewidth]{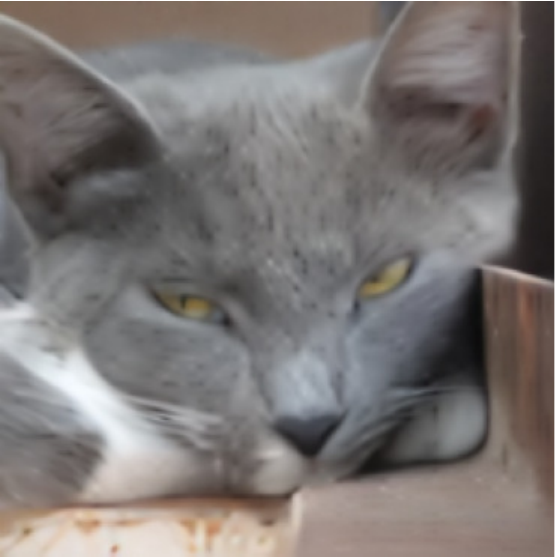}
       \makebox[\linewidth][c]{\footnotesize PSNR: \textbf{28.28}}
    \end{minipage}

    \caption{Comparison of super-resolution results on AFHQ-Cat.}
    \label{fig:afhq_sr}

\end{figure}

\section*{Acknowledgments}
We would like to acknowledge the assistance of volunteers in putting together this example manuscript and supplement.

\vspace{5cm}
\clearpage
\bibliographystyle{amsplain}
\bibliography{references}

\end{document}